\documentclass[conference]{IEEEtran}
\usepackage{times}
\usepackage[numbers]{natbib}
\usepackage{multicol}
\usepackage{multirow}
\usepackage[bookmarks=true]{hyperref}
\usepackage{graphicx}	
\usepackage{amsmath}
\usepackage{mathtools}
\usepackage{paralist}
\usepackage{tabularx}	
\usepackage{algpseudocode}
\usepackage{color}
\usepackage{fancyhdr}
\usepackage{bm,bbm}
\usepackage{tikz}
\usepackage{amsthm}
\usepackage{epstopdf}
\usepackage{amsfonts}
\usepackage{dsfont}
\usepackage{mathrsfs}
\usepackage{amssymb}
\usepackage{enumitem}
\usepackage{mdframed}
\usepackage{algorithm}
\usepackage{epsfig}
\usepackage[english]{babel}
\usepackage[all]{xy}
\usepackage[latin1]{inputenc}
\usepackage{tikz}
\usepackage[T1]{fontenc}
\usepackage{caption}
\usepackage[labelformat=simple]{subcaption}

\newtheorem{theorem}{Theorem}
\newtheorem{definition}{Definition}

\newtheorem{lemma}{Lemma}

\newtheorem{remark}{Remark}
\newtheorem{proposition}{Proposition}

\DeclareMathOperator{\col}{col}
\newcommand{\twoto}{%
    \mathrel{\substack{\to \\ \to}}%
}

\newcommand{\SEthree}{%
    \mathrel{\mathrm{SE}(3) }%
}
\newcommand{\cent}{%
    \mathrel{\mathrm{cent} }
    }

\def\tr{\mathop{\rm tr}}
\title{Provably Guaranteed Polytopic Uncertainty Quantification for SLAM}
\author{Guangyang Zeng$^{1,2}$, Yulong Gao$^3$, Yuan Shen$^1$, Lingpeng Chen$^1$, Haoying Li$^1$, Guodong Shi$^4$, and Junfeng Wu$^{1,2}$ \\
  $^1$School of Data Science, The Chinese University of Hong Kong, Shenzhen \\ $^2$School of Artificial Intelligence, The Chinese University of Hong Kong, Shenzhen \\ $^3$Department of Electrical and Electronic Engineering, Imperial College London  \\ $^4$School of Aerospace, Mechanical and Mechatronic Engineering, The University of Sydney
}

\begin{document}

\twocolumn[{
    \renewcommand\twocolumn[1][]{#1}
    \maketitle 

    \begin{center}
        \includegraphics[width=\textwidth]{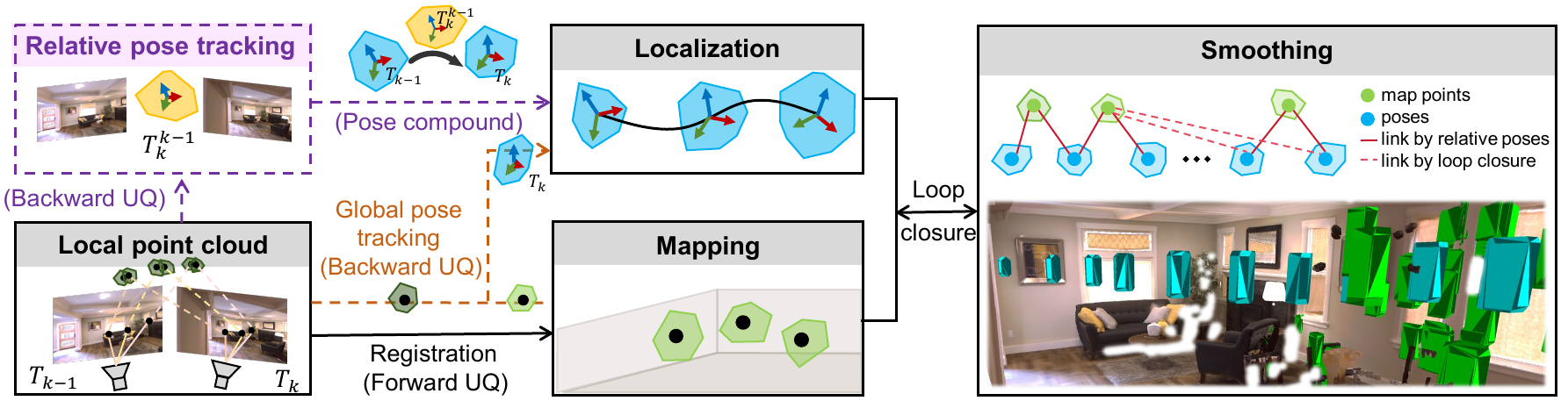} 

        \captionof{figure}{Our guaranteed polytopic SLAM uncertainty quantification (UQ) framework. For global robot localization, there are two alternatives: (i) relative pose tracking followed by pose compound (marked as dashed purple lines), and (ii) global pose tracking (marked as dashed orange lines).}
        \label{fig:guaranteed_SLAM}
    \end{center}
}]

% avoiding spaces at the end of the author lines is not a problem with
% conference papers because we don't use \thanks or \IEEEmembership

% for over three affiliations, or if they all won't fit within the width
% of the page, use this alternative format:
% 
%\author{\authorblockN{Michael Shell\authorrefmark{1},
%Homer Simpson\authorrefmark{2},
%James Kirk\authorrefmark{3}, 
%Montgomery Scott\authorrefmark{3} and
%Eldon Tyrell\authorrefmark{4}}
%\authorblockA{\authorrefmark{1}School of Electrical and Computer Engineering\\
%Georgia Institute of Technology,
%Atlanta, Georgia 30332--0250\\ Email: mshell@ece.gatech.edu}
%\authorblockA{\authorrefmark{2}Twentieth Century Fox, Springfield, USA\\
%Email: homer@thesimpsons.com}
%\authorblockA{\authorrefmark{3}Starfleet Academy, San Francisco, California 96678-2391\\
%Telephone: (800) 555--1212, Fax: (888) 555--1212}
%\authorblockA{\authorrefmark{4}Tyrell Inc., 123 Replicant Street, Los Angeles, California 90210--4321}}

% \maketitle

%   \begin{figure*}[!htbp]
%     \centering
%     \includegraphics[width=0.98\linewidth]{figures/USLAM.pdf}
%     \caption{Our guaranteed polytopic SLAM UQ framework. For global robot localization, there are two alternatives: (i) relative pose tracking followed by pose compound (marked as dashed purple lines), and (ii) global pose tracking (marked as dashed orange lines).}
%     \label{fig:guaranteed_SLAM}
% \end{figure*}

\begin{abstract}
In safety-critical robotics applications, guaranteed and practical uncertainty quantification (UQ) in perception is vital. Many existing works either offer no formal containment guarantee, rely on restrictive modeling assumptions, or focus only on pose estimation rather than a complete SLAM pipeline. This paper presents provably guaranteed UQ algorithms for 3D-3D landmark-based SLAM. The algorithms consist of three basic UQ modules: forward UQ for mapping, backward UQ for pose tracking, and pose compound. Each module produces a certified uncertainty set; when the input uncertainty bounds are deterministic, the output sets inherit deterministic guarantees, i.e., they provably contain the true poses and landmarks. Specifically, we use polytopes to represent uncertainty sets, enabling tractable computations and a unified treatment of pose uncertainty. To enhance algorithms' practical usability, we incorporate conformal prediction to calibrate measurement uncertainty from data with prescribed probability. Simulations and experiments demonstrate that the proposed algorithms provide both strong theoretical guarantees and practical usability. The code is open-sourced at \url{https://github.com/LIAS-CUHKSZ/Polytopic-SLAM-Uncertainty-Quantification}.
\end{abstract}

% Keywords: UQ, Set-membership estimation, SLAM, Polytopic uncertainty, Conformal prediction

\IEEEpeerreviewmaketitle

\section{Introduction} \label{sec:introduction}
Simultaneous localization and mapping (SLAM) has found widespread applications in robotics. In recent years, substantial progress has been made, with research primarily improving accuracy~\cite{cvivsic2022soft2}, robustness~\cite{yang2020teaser}, and efficiency~\cite{han2025building}, among other aspects.  
Most SLAM algorithms calculate optimal point estimates for the robot state and the environment map. However, for \emph{safety-critical} applications such as autonomous driving and unmanned aerial vehicle navigation, point estimates alone are insufficient. Reliable \emph{uncertainty quantification} (UQ) is  needed so that downstream planning and control can explicitly account for perception risk and make safety-aware decisions.

\textbf{Probabilistic UQ.} This line of work typically assumes that the measurement uncertainty follows a distribution, most commonly Gaussian. Under such assumptions, one often computes a point estimate via maximum likelihood or maximum a posteriori inference, and then approximates the associated uncertainty using the inverse Fisher information matrix (FIM) or Hessian evaluated at the optimal solution to obtain a covariance~\cite{engel2014lsd,rosen2019se,mangelson2020characterizing}.
Despite its popularity, this approach has several limitations. (i) Measurement uncertainty often deviates from a Gaussian distribution and may vary in dynamic scenes~\cite{gao2024closure,agrawal2024gatekeeper}. (ii) Using FIM or Hessian again assumes a Gaussian distribution and is prone to underestimating the undergoing uncertainty~\cite{szeliski2022computer}. (iii) The resulting covariance ellipsoid provides no formal guarantee of containing the true parameter; it is an approximation rather than a certified confidence region.

\textbf{Deterministic UQ.} This line of work is commonly referred to as \emph{set-membership estimation} (SME). SME assumes measurement uncertainty is modeled by hard bounds rather than probability distributions and computes the set of all parameters consistent with measurements.
SME offers two advantages: (i) it requires only knowledge of the uncertainty support, without  committing to a particular distribution; and (ii) the resulting parameter uncertainty set comes with a \emph{provable guarantee} of containing the true parameter. 
SME has been widely studied in control theory~\cite{wang2018set,de2024set,wang2018zonotopic} and in signal processing~\cite{belforte1990parameter,jaulin1993set}. In recent years, it has gained traction in robotics perception and estimation problems, with many works favoring interval-based formulations~\cite{mustafa2018guaranteed,voges2021interval,ehambram2022interval}.
More recently, a stream of work has introduced \emph{conformal prediction} (CP)~\cite{lindemann2024formal,angelopoulos2023conformal} to construct statistically valid measurement uncertainty bounds using a calibration data set, and then to perform pose estimation. 
Specifically, CP is used to calibrate uncertainty sets for the residuals $g(T,x_i,y_i):=(y_i-f(T,x_i) )\in \mathcal G_i, i=1,\ldots,N$, where $T \in {\rm SE}(3)$ denotes the unknown pose, $x_i$ is the regressor, and $y_i$ is the
model output.
The resulting feasible pose set is defined as
\begin{equation} \label{eqn:parameter_set}
    %\mathcal S=
    \{T \in {\rm SE}(3) \mid g(T,x_i,y_i) \in \mathcal G_i,i=1,\ldots,N\}.
\end{equation}
Although the pose uncertainty set admits the concise description in~\eqref{eqn:parameter_set}, it is  difficult to calculate, analyze or manipulate, since $f$ may be non-convex, and the pose $T$ is subject to non-convex constraints.  
Tractable approximations are adopted, including outer approximations~\cite{tang2024uncertainty,yang2023object,shaikewitz2025uncertainty} and inner approximations~\cite{gao2024closure}. 
However, these approaches are limited to pose estimation and do not yet extend to a complete SLAM pipeline.

\textbf{Contributions.}
 In our work, we develop a novel framework that bridges distribution-free calibration with SME to enable tractable, \emph{provably guaranteed} UQ for 3D-3D landmark-based SLAM. Unlike heuristic approaches, our method guarantees that estimated sets contain the true parameter with a prescribed probability. A key novelty is the framework's duality and modularity: it seamlessly transitions between probabilistic confidence sets (via CP) and strict deterministic enclosures by simply interchanging the bounding module.

Technically, as illustrated in Figure~\ref{fig:guaranteed_SLAM}, we realize this via three primitives: forward UQ (mapping), backward UQ (pose tracking), and pose compound. 
Each primitive presents distinct challenges, and the associated uncertainties entangle mutually due to the propagation between landmarks and poses.
We resolve them with tractable algorithms and then integrate them for full SLAM UQ.
In particular, by implementing these primitives using polytopic sets, we leverage mature tools to ensure computational tractability via linear constraints, while enabling a unified representation of rotation and translation uncertainty. 
Simulations and experiments demonstrate that our algorithms contain the true parameters with certified guarantees, achieve tighter uncertainty sets than recent guaranteed pose UQ methods, and apply to complete SLAM pipelines.

{\bf Notations:} Let $\mathcal X$ and $\mathcal Y$ be two sets, $\mathcal X+\mathcal Y$ and $\mathcal X \cdot \mathcal Y$ denote their Minkowski sum and Minkowski product, respectively. The closed Euclidean ball centered at point $p_0 \in \mathbb R^n$ with radius $r\geq 0$ is denoted as $\mathcal B(p_0,r)$. For a set $\mathcal X$  in a Euclidean space, its convex hull is denoted by ${\rm conv}(\mathcal X)$.  
Let $2^{\mathcal X}$ be the collection of all subsets of $\mathcal X$. A set-valued function 
$f: \mathcal X \to 2^{\mathcal Y}$ will be rewritten as 
$f: \mathcal  X\twoto Y$ to emphasize its set-valuedness. 
The image of $\mathcal C\subset X$ under $f$ is defined as 
$f(\mathcal C)=\bigcup_{x\in \mathcal C}f(x)$. The preimage of $\mathcal C$
is defined as 
$f^{-1}(\mathcal C)=\{x: f(x)\cap \mathcal C\not = \emptyset\}$.
For two vectors $ x$ and $ y$, $\langle{ x}, { y} \rangle$ represents their inner product, $\col(x,y)=[x^\top,y^\top]^\top$ for short, and $x \leq y$ means the element-wise inequality. 

\section{Preliminaries} \label{sec:preliminary}

\subsection{Rigid Transformation} \label{subsec:rigid_transform}
In 3D Euclidean space, (proper) rigid transformations include rotations and translations. 
They preserve the distance between any two points of and the handedness (orientation) of rigid bodies.
A translation is depicted by a vector in $ \mathbb R^3$. A rotation can be characterized by a rotation matrix. Specifically, rotation matrices belong to the special orthogonal group
\begin{equation*}
{\rm SO}(3)=\{{ R}\in \mathbb R^{3 \times 3} \mid { R}^\top { R}={ I}_3, {\rm det}({ R})=1\}.
\end{equation*}
The group operation is the usual matrix multiplication, and the inverse is the matrix transpose. The group ${\rm SO}(3)$ forms a smooth manifold, and the tangent space at the identity is the Lie algebra, denoted as $\mathfrak{so}(3)$, which is comprised of all $3 \times 3$ skew-symmetric matrices. Every $3 \times 3$ skew-symmetric matrix can be generated from a vector---say ${ s} \in \mathbb R^3$---via the \emph{hat} operator ${ s}^\wedge \in \mathfrak {so} (3)$,  which maps \(s\) into  \(\mathfrak{so}(3)\).

The exponential map ${\rm exp}:\mathfrak{so}(3) \rightarrow {\rm SO}(3)$ coincides with standard matrix exponential. Specifically, Rodrigues' rotation formula reads that
\begin{equation*}
    {\rm exp}\left({ s}^\wedge \right)= { I}_3 + \frac{\sin(\|{ s}\|)}{\|{ s}\|} { s}^\wedge + \frac{1-\cos(\|{ s}\|)}{\|{ s}\|^2} \left({ s}^\wedge \right)^2.
\end{equation*}
We let ${\rm Exp}:\mathbb R^3 \rightarrow {\rm SO}(3)$ 
be the composition of the hat operator followed by $\exp$, that is, ${\rm Exp}({ s})={\rm exp}({ s}^\wedge)$. The inverse operation of ${\rm Exp}$ is denoted as ${\rm Log}: {\rm SO}(3) \rightarrow \{s \in \mathbb R^3 \mid \|s\| \in [0,\pi)\}$.
The geodesic distance between two rotations is defined as
\begin{equation*}
    {\rm dis}({ R}_1,{ R}_2)=\cos^{-1} \left( \frac{\tr({ R}_1{ R}_2^\top)-1}{2}\right) \in [0,\pi].
\end{equation*}

\subsection{Conformal Prediction} \label{subsec:conformal_prediction}
Conformal prediction (CP) is a lightweight statistical tool for uncertainty quantification~\cite{shafer2008tutorial,lindemann2024formal}. Unlike standard machine learning models, which typically provide a single point prediction, it outputs a prediction set that is guaranteed to contain the true value with a user-specified probability. Specifically, let $s_0, s_1, \ldots, s_L$ be $L+1$ \emph{exchangeable} random variables\footnote{The random variables $s_0,  \ldots, s_L$ are said to be exchangeable if the joint distribution of $s_0 \ldots, s_L$ is the same to that of $s_{\sigma(0)},  \ldots, s_{\sigma(L)}$ for any permutation $\sigma$ on indices.}. It is straightforward that independent and identically distributed random variables are exchangeable. The goal of CP is to estimate an upper bound for $s_0$ using the calibration set ${s_1, \ldots, s_L}$, ensuring coverage with probability $1-\delta$. Based on the quantile function, the upper bound $C: \mathbb R^L \rightarrow \mathbb R$ is given by $C(s_1,\ldots,s_L)={\rm Quantile}_{1-\delta}(s_1,\ldots,s_L,\infty)$,
which is the $(1-\delta)$-quantile over the empirical distribution of $s_1,\ldots,s_L,\infty$. Let $s_{L+1}=\infty$, and without loss of generality, assume $s_1,\ldots,s_L$ are sorted in non-decreasing order. Then, we have
\begin{equation} \label{eqn:conformal_value}
    C(s_1,\ldots,s_L)=s
    _p, ~p=\lceil(1-\delta)(L+1) \rceil,
\end{equation}
where $\lceil \cdot \rceil$ is the ceiling function.
\begin{lemma}[{\bf Marginal coverage~\cite{shafer2008tutorial,lindemann2024formal}}] \label{lemma:CP}
    With the choice of $C$ in~\eqref{eqn:conformal_value}, it holds that 
    \begin{equation} \label{eqn:CP_lemma}
        {\rm Prob}(s_0 \leq C(s_1,\ldots,s_L)) \geq 1-\delta.
    \end{equation}
\end{lemma}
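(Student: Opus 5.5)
The argument is the standard rank argument for exchangeable scores. The event $s_0 \le C(s_1,\ldots,s_L)$ is controlled by the rank of $s_0$ among all $L+1$ variables, and exchangeability makes that rank (roughly) uniform.

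First I dispose of the trivial case. If $p=\lceil(1-\delta)(L+1)\rceil = L+1$, then $C=s_{L+1}=\infty$ and \eqref{eqn:CP_lemma} holds with probability one. So assume $p\le L$. Then $C$ is the $p$-th smallest of the finite calibration values $s_1,\ldots,s_L$.

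Next I reformulate the event in terms of the rank of $s_0$. Let $s_{(1)}\le\cdots\le s_{(L+1)}$ be the order statistics of the full multiset $\{s_0,s_1,\ldots,s_L\}$. I claim that
\[
s_0 \le C(s_1,\ldots,s_L) \iff s_0 \le s_{(p)}.
\]
To check this, note that inserting $s_0$ into the sorted calibration list leaves the $p$-th smallest element unchanged whenever $s_0$ lands at or after position $p$. If $s_0\le C$, then $s_0$ is at most the $p$-th smallest calibration value, so at least $p$ elements of the full list are $\le C$, and hence $s_{(p)}\le C$. Combined with $s_0\le C$ this gives $s_0\le s_{(p)}$ up to ties, so it is cleaner to argue the direction I actually need. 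If $s_0> C$, then at least $p$ calibration values are $<s_0$, so $s_0>s_{(p)}$. Hence $\{s_0\le s_{(p)}\}\subseteq\{s_0\le C\}$, and it suffices to show ${\rm Prob}(s_0\le s_{(p)})\ge p/(L+1)$.

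Then I bound this probability using exchangeability. Because the joint law is invariant under permutations,
\[
{\rm Prob}(s_0\le s_{(p)}) = {\rm Prob}(s_i\le s_{(p)}) \quad \text{for every } i.
\]
Averaging over $i$ gives
\[
{\rm Prob}(s_0\le s_{(p)}) = \frac{1}{L+1}\,\mathbb E\big[\#\{i: s_i\le s_{(p)}\}\big].
\]
Deterministically, at least $p$ of the variables satisfy $s_i\le s_{(p)}$, with equality of the count when there are no ties. So the probability is at least $p/(L+1)\ge 1-\delta$ by the choice of $p$.

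The main obstacle is handling ties correctly in the reduction step. I will avoid any random tie-breaking. The argument only needs the one-directional inclusion $\{s_0\le s_{(p)}\}\subseteq\{s_0\le C\}$ together with the deterministic count ``at least $p$'', and both hold with ties present. This is exactly why the result is stated as an inequality $\ge 1-\delta$.
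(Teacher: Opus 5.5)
Your proof is correct; the paper gives no proof of its own and imports the lemma from the cited conformal-prediction references, whose standard argument is exactly yours: reduce to the event $s_0\le s_{(p)}$, then use exchangeability and the deterministic count $\#\{i: s_i\le s_{(p)}\}\ge p$ to get probability at least $p/(L+1)\ge 1-\delta$, ties included. The one thing to clean up is the abandoned sentence in your reduction step (``Combined with $s_0\le C$ this gives $s_0\le s_{(p)}$ up to ties''), which does not follow from $s_{(p)}\le C$ and $s_0\le C$ and should be deleted. The contrapositive you then prove is all you need, and the reverse inclusion in fact also holds: $s_0\le C$ leaves at most $p-1$ scores strictly below $s_0$, so the two events coincide.
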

The variable $s
_l,l \in \{0,\ldots,L\}$ is typically called the \emph{nonconformity score}. It quantifies the deviation between a predictor's output $\hat y_l$ and the true value $y_l^o$, e.g., for vector-valued predictions, it may be defined as the prediction error $s_l=\|\hat y_l-y_l^o\|$. A large nonconformity score indicates a poor prediction. For a comprehensive review of CP, see~\cite{angelopoulos2023conformal,fontana2023conformal}.

\subsection{Polytopes} \label{subsec:polytope}
Given $A=[a_1,\ldots,a_M]^\top$ \footnote{The rows of $A$ are assumed to be independent for minimal representation.} and $b=[b_1,\ldots,b_M]^\top$,  define the set 
\begin{equation}\label{eqn:polytope_H}
 \mathcal P(A,b):=\{x  \in \mathbb{R}^n \mid A x \leq b\}.
 \end{equation}If $\mathcal P(A,b)$ is bounded, we call it
 a polytope. The one in form of~\eqref{eqn:polytope_H} is 
 an $H$-polytope, due to  its
half-space representation. Without loss of any generalization, we assume that each facet $a_m$ is a unit normal vector. 

%A polyhedron is a set of the intersection of finite half-spaces:
%\begin{equation} \label{eqn:H_representation}
 %   \mathcal P=\{x\in \mathbb{R}^n \mid a_m^\top x \leq b_m,m=1,\ldots,M\}.
%\end{equation}
%A bounded polyhedron is called a \emph{polytope}. We call the polytope  in Eq.~\eqref{eqn:H_representation} the $H$-polytope,  due to its
%half-space representation. It can be written as the compact form $$\mathcal P(A,b)=\{x  \in \mathbb{R}^n \mid A x \leq b\}$$ with $A=[a_1,\ldots,a_M]^\top$ and $b=[b_1,\ldots,b_M]^\top$. 
%Without loss of any generalization, we assume that each $a_m$ is a facet normal vector. 

According to the Minkowski-Weyl's Theorem, any $H$-polytope $\mathcal{P}(A,b)$ has an equivalent $V$-polytope:  
\begin{align*}
   \mathcal P(\{v_1,\ldots,v_N\})&:={\rm conv}(v_1,\ldots,v_N)\\&:=\{x \mid x=\sum_{i=1}^N \lambda_i v_i, \lambda_i\geq 0, \nonumber \sum_{i=1}^N\lambda_i=1 \} ,
\end{align*}
 where $v_i\in\mathbb{R}^n,i=1,\ldots,N$, are the vertices of the polytope. Both the vertex
enumeration  (from $H$-rep. to $V$-rep.) and facet enumeration (from $V$-rep. to $H$-rep.) can be performed using the libraries \texttt{cddlib} or \texttt{lrslib}~\cite{avis2002canonical}. Next, we  introduce some definitions and polytope operations.

%Another common representation of a polytope is the V-representation, which expresses $\mathcal P={\rm conv}(v_1,\ldots,v_N)$, where $v_i,i=1,\ldots,N$ are the vertices of the polytope. We can switch between the H-representation and the V-representation via vertex/facet enumeration, e.g., using the libraries \texttt{cddlib} or \texttt{lrslib}~\cite{avis2002canonical}. In the worst case, the conversion can take time exponential in the polytope dimension.  
%Next, we will introduce some definitions and polytope operations.

%With a little abuse of notation, we often denote a polytope directly as $Ax\leq b$ for convenience throughout this paper. Each $a_m$ is a facet normal vector; when $A$ is fixed, we call the polytope has a fixed template, i.e., fixed facet directions.  

%Another common representation of a polytope is the V-representation, which expresses $\mathcal P={\rm conv}(v_1,\ldots,v_N)$, where $v_i,i=1,\ldots,N$ are the vertices of the polytope. We can switch between the H-representation and the V-representation via vertex/facet enumeration, e.g., using the libraries \texttt{cddlib} or \texttt{lrslib}~\cite{avis2002canonical}. In the worst case, the conversion can take time exponential in the polytope dimension.   Next, we will introduce some definitions and polytope operations.

\begin{definition}[Polytope diameter]\label{def:polytope_diameter}
For a polytope $\mathcal P\subset \mathbb R^n$, its diameter $d(\mathcal P)$ is defined as 
    \begin{equation*}
        d(\mathcal P)= \max_{{ p}_1, { p}_2 \in \mathcal P} \|{ p}_1-{ p}_2\|.
    \end{equation*}
    \end{definition}
\begin{definition}[Chebyshev ball and Chebyshev center]\label{def:Chebyshev_ball_center}
For a polytope $\mathcal P$, the unique ball of minimal radius that encloses the entire $\mathcal P$ is called its Chebyshev ball. Its radius, known as the Chebyshev radius, is given by, $$ \min _{{x_0},r}\{r:\left\|{x_0}-x\right\|\leq r,\forall x\in \mathcal P\}.$$
The Chebyshev center is denoted by $\cent(\mathcal P)$ and the Chebyshev radius by 
$r(\mathcal P)$.
\end{definition}

These two definitions are still valid for a general set. In the case of polytopes, the diameter is attained by a pair of vertices. Hence, it can be computed by checking all vertex pairs, which takes $O(N^2)$ time in the number $N$ of vertices:
    \begin{equation} \label{eqn:cal_polytope_diameter}
        d(\mathcal P)= \max_{{ p}_1, { p}_2 \in \mathcal V} \|{ p}_1-{ p}_2\|,
    \end{equation}
    where $\mathcal V=\{v_i\}_{i=1}^N$ is the vertex set of $\mathcal P$.
The Chebyshev center and radius of $\mathcal P$ can be found by solving a convex optimization problem:
\begin{subequations} \label{eqn:cal_chebyshev_ball}
\begin{align}
        \min _{{x_0},r} & ~~r \\
        {\rm s.t.} & ~~\left\|{x_0}-v_i\right\|\leq r, i=1,\ldots,N.
\end{align}
\end{subequations}
The obtained optimal $x_0^*$ is the Chebyshev center, and $r^*$ is the Chebyshev radius. Solving~\eqref{eqn:cal_chebyshev_ball} using interior-point methods leads to a time complexity of $O(N^{1.5})$.
% According to \jf{Jung's theorem~\cite{xxx}, 
% $$
% \frac{d(\mathcal P)}{2}\leq r(\mathcal P)\leq d(\mathcal P)\sqrt{\frac{n}{2(n+1)}}.
% $$
% }

\begin{definition}
Given three polytopes $\mathcal{P}, \mathcal{P}_1, \mathcal{P}_2\subset \mathbb{R}^n$, a linear map $H\in \mathbb{R}^{m\times n}$, and a vector $q \in \mathbb{R}^m$, define following set operations: 
    \begin{itemize}
        \item (Affine Map) $H \mathcal{P} + q :=\{Hx + q \mid x\in \mathcal{P}\}$;
         \item (Minkowski Sum) $\mathcal{P}_1 + \mathcal{P}_2 :=\{x+y \mid x\in \mathcal{P}_1, y\in \mathcal{P}_2\}$;
         \item (Intersection) $\mathcal{P}_1 \cap\mathcal{P}_2 :=\{x\mid x\in  \mathcal{P}_1 \ {\rm and} \ x\in \mathcal{P}_2 \}$;
         \item (Set projection to subspace $\mathbb R^d$) $\pi_{\mathbb R^d}(\mathcal P)=\{x\in\mathbb R^d\mid \exists y\in \mathbb R^{n-d}, \col(x,y)\in \mathcal P\}$.    
    \end{itemize}
\end{definition}
These set operations can be defined for arbitrary sets, however, within the context of polytopes, they are closed, i.e., the result is again a polytope. The set projection can be performed by the Fourier-Motzkin elimination algorithm~\cite{zhen2018adjustable}.

%\begin{operation}[Polytope projection] 
%   For a polytope $A[x_1^\top,x_2^\top]^\top \leq b$, we can project it to the subspaces of $x_1$ and $x_2$, obtaining $A_1 x_1 \leq b_1$ and $A_2 x_2 \leq b_2$. This can be completed based on the Fourier-Motzkin elimination algorithm~\cite{zhen2018adjustable}.
%\end{operation}

%\begin{operation}[Minkowski sum of polytopes] 
 %   For two polytopes $\mathcal P_1$ and $\mathcal P_2$, their Minkowski sum $\mathcal P_1+\mathcal P_2=\{p_1+p_2 \mid p_1 \in \mathcal P_1, p_2 \in \mathcal P_2\}$ is also a polytope~\cite{schneider2013convex}. Denote the vertex sets of $\mathcal P_1$ and $\mathcal P_2$ as $\mathcal V_{\mathcal P_1}$ and $\mathcal V_{\mathcal P_2}$, respectively. The Minkowski sum can be obtained by $\mathcal P_1+\mathcal P_2={\rm conv}\left(\{{ v}_1+{ v}_2 \mid { v}_1 \in \mathcal V_{\mathcal P_1}, { v}_2 \in \mathcal V_{\mathcal P_2}\}\right)$.
%\end{operation}

\section{Basic Uncertainty Quantifications} \label{sec:basic_uncertainty_propaga}
We study three fundamental UQ primitives
that form the core building blocks for Sec.IV.

Consider a rigid transformation ${ T}=\begin{bmatrix}
    { R}~  t \\
    { 0}_3^\top~ 1
\end{bmatrix}$ of ${\rm SE}(3)$, which acts on points in $\mathbb R^3$ via a left group action $\phi:{\rm SE}(3)\times \mathbb R^3\to \mathbb R^3$.
We denote $x(T):={\rm col}({\rm vec}(R),t) \in \mathbb R^{12}$ as the vectorization of $T$. In robotics, if $T$ maps a local frame to a reference frame and $p$ is a point in the local frame, then $\phi(T,p)=Rp+t$ gives the point in the reference frame.
Define the map  $\phi_p : \mathrm{SE}(3) \to \mathbb{R}^3, \quad T \mapsto \phi(T,p)$.
The basic problems to be discussed involve the forward and backward transfer of uncertainty between the argument of \(\phi_p\) (elements of \(\mathrm{SE}(3)\)) and its image in \(\mathbb{R}^3\).
The forward UQ aims to propagate uncertainty from $\mathrm{SE}(3)$ to $\mathbb R^3$, whereas the backward UQ infers the uncertainty on $\mathrm{SE}(3)$ from that on $\mathbb R^3$. We will also investigate the fundamental UQ problem for pose compound. 
Before proceeding, we introduce the following definition. 
For a given set $\mathcal S$ of points, define a set-valued function $\phi_{\mathcal S}: \mathrm{SE}(3) \substack{\to \\ \to}
{\mathbb R^3}$  by 
\begin{equation}\label{eqn:def_phi_S}
\phi_{\mathcal S}(T)=
\{\phi_{p}(T): p\in\mathcal S\}.
\end{equation}
%For a subset $\mathcal T\in \mathrm{SE}(3) $, the image under $\phi_{\mathcal S}$ follows the standard definition for set-valued functions: $$\phi_{\mathcal S}(\mathcal T)=\bigcup_{T\in\mathcal T}\phi_{\mathcal S}(T).$$

\subsection{Forward Uncertainty Quantification } \label{subsec:pose_to_point}
The forward UQ problem is interpreted as follows: given a fixed, known and bounded set $\mathcal S$, for a set $\mathcal T \subset {\rm SE}(3)$, find the image $\phi_{\mathcal S}(\mathcal T)$. 
For tractability and computational issues, we restrict $\mathcal S$ to a polytope $\mathcal P(A_1,b_1)=\{p\in \mathbb{R}^3 \mid A_1 p\leq b_1\}$ and focus on the rigid transformation uncertainty form as $\mathcal T=\{T \in {\rm SE}(3) \mid H x(T) \leq d\}$. 
In the SLAM context in Sec.\ref{sec:SLAM_oometry}, $p$ denotes a landmark's local coordinates, with uncertainty $\mathcal P(A_1,b_1)$ provided by CP, and the set $\mathcal T$ for a robot pose can be obtained by backward UQ in each iteration.

The forward UQ derived from $\mathcal T$ and $\mathcal P(A_1,b_1)$ is to compute the set \begin{equation}\label{eqn:uncertainty_set_Q}
\mathcal Q:= \{q= \phi(T,p) \mid T\in \mathcal T, p\in \mathcal P(A_1,b_1)\}.
\end{equation}
%under $\phi_{\mathcal P(A_1,b_1)}$ is given by $\phi_{\mathcal P(A_1,b_1)}(\mathcal T)$. 
However, due to the bilinear property of $\phi$ and non-convex ${\rm SO}(3)$ constraint, the set $\mathcal Q$ is hard to characterize. In what follows, we instead compute a tight and computationally efficient polytopic approximation $\mathcal P(A_2,b_2)$ of  $\mathcal Q$.

Fo computational efficiency, we fix the matrix $A_2=[a_1,\ldots,a_M]^\top$ \emph{a prior}, where each $a_m$ is a \emph{unit normal}. One way to select $A_2$ is from the regular 3D  polytope with $M$ sides. Now the problem of computing $\mathcal P(A_2,b_2)$ boils down to computing the smallest $b_2$ giving the tightest approximation. We need to solve $M$ optimization problems in form of 
    \begin{subequations} \label{eqn:QCQP_pose_to_point}
      \begin{align}
             [b_2]_m= \max_{q, T, p} &~~{ a}_m^\top q \\
    {\rm s.t.} &~~ H { x}(T) \leq d, \ { A}_{1} p \leq { b}_{1}, \\
    &~~ q=\phi(T,p), \  T \in {\rm SE}(3).
      \end{align}
    \end{subequations}
This problem can be simplified by only considering the vertices of $\mathcal P(A_1,b_1)$ for $p$, as shown in the following lemma.
\begin{lemma} \label{lemma:enumerate_vertex}
Let $\mathcal{V}=\{v_i\}_{i=1}^{N}$ be the vertex set of $\mathcal P(A_1,b_1)$, where $N$ is the vertex number. Problem~\eqref{eqn:QCQP_pose_to_point} is equivalent to 
 \begin{subequations} \label{eqn:QCQP_pose_to_point2}
      \begin{align}
             [b_2]_m= \max_{v_i \in \mathcal V} ~\max_{q, T} &~~{ a}_m^\top q \\
    {\rm s.t.} &~~ H { x}(T) \leq d, \ q=\phi(T,v_i), \\
    &~~ T \in {\rm SE}(3).
      \end{align}
    \end{subequations}
\end{lemma}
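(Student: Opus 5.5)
The plan is to fix the pose $T$ and exploit the fact that, for fixed $T$, the objective in~\eqref{eqn:QCQP_pose_to_point} is affine in $p$. Substituting $q=\phi(T,p)=Rp+t$ gives ${a}_m^\top q = (R^\top a_m)^\top p + a_m^\top t$. We can then write the optimal value of~\eqref{eqn:QCQP_pose_to_point} as a nested maximization,
\begin{equation*}
[b_2]_m=\max_{T\in\mathcal T}\ \max_{p\in\mathcal P(A_1,b_1)} \left( (R^\top a_m)^\top p + a_m^\top t\right),
\end{equation*}
where $\mathcal T=\{T\in{\rm SE}(3)\mid Hx(T)\le d\}$. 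This splitting is legitimate because the feasible set is a product: the constraints on $T$ and on $p$ are decoupled, and $q$ is simply eliminated.

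The key step concerns the inner problem. For each fixed $T$, it is a linear program over the polytope $\mathcal P(A_1,b_1)$. By the Minkowski--Weyl representation recalled in Sec.~\ref{subsec:polytope}, every $p\in\mathcal P(A_1,b_1)$ can be written as $p=\sum_i\lambda_i v_i$ with $\lambda_i\ge0$ and $\sum_i\lambda_i=1$. The objective, being affine with unit total weight on the constant term, then equals $\sum_i\lambda_i\big((R^\top a_m)^\top v_i+a_m^\top t\big)$. This convex combination is at most $\max_i\big((R^\top a_m)^\top v_i+a_m^\top t\big)$. Conversely, each $v_i$ is itself feasible. Hence the inner maximum equals $\max_{v_i\in\mathcal V}a_m^\top\phi(T,v_i)$.

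Finally, we exchange the two finite-or-compact maxima, $\max_{T}\max_{i}=\max_{i}\max_{T}$, which always holds for suprema. Reinstating $q=\phi(T,v_i)$ then yields exactly~\eqref{eqn:QCQP_pose_to_point2}. We should also note that maximizers are attained: $\mathcal T$ is closed, because ${\rm SE}(3)$ is closed and the linear inequalities are closed, and it is bounded, because the rotation block is orthogonal and the translation is assumed bounded. The objective is continuous, so both formulations are well posed and the maximizer $(T^*,p^*)$ of the original problem can be replaced by $(T^*,v_{i^*})$.

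There is no real obstacle here. The only care needed is to ensure that the nonconvexity of ${\rm SO}(3)$ does not interfere, and it does not, because the vertex argument is applied only in $p$ with $T$ held fixed.
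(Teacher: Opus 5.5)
Your proof is correct, and it reaches the conclusion by a different route from the paper's.

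\textbf{The paper's route.} The paper argues by contradiction at a presumed maximizer $(q^*,T^*,p^*)$ with $p^*\notin\mathcal V$. If $p^*$ is interior, the invertibility of $R^*$ lets one move $p^*$ along $R^{*\top}\Delta q$, with $a_m^\top\Delta q>0$, and strictly increase the objective. If $p^*$ lies on a facet or edge, the paper simply asserts that some vertex of that face does at least as well.

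\textbf{Your route.} You fix $T$ and observe that the objective $(R^\top a_m)^\top p+a_m^\top t$ is affine in $p$. You then use the Minkowski--Weyl representation $p=\sum_i\lambda_i v_i$ to bound it by its maximum over $\mathcal V$, with feasibility of each $v_i$ giving the reverse inequality. Finally you interchange $\sup_T$ and $\max_i$.

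\textbf{What each buys.} Your argument establishes a two-sided equality of optimal values without presupposing that a maximizer exists. It also makes rigorous the paper's ``straightforward'' boundary case, which is itself just the vertex property of a linear program on a lower-dimensional face. Your compactness remark is then needed only to replace a maximizer $(T^*,p^*)$ by $(T^*,v_{i^*})$. One small correction there: when $\mathcal T$ comes from Algorithm~\ref{algrithm:backward_propagation}, $\mathcal P(H,d)$ is unbounded in $\mathbb R^{12}$, so boundedness of $t$ is not an assumption. It follows from $A_2(Rs+t)\le d$ together with $\|Rs\|=\|s\|$. The paper's perturbation argument, in exchange, gives slightly sharper geometric information: no maximizer can have $p^*$ in the interior of $\mathcal P(A_1,b_1)$.
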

\begin{proof}
     Suppose the optimum is achieved at $q^*,T^*,p^*$ where $p^* \notin \mathcal V$. First, assume that $p^*$ strictly lies inside the $\mathcal P(A_1,b_1)$. Since there exists a $\Delta q \in \mathbb R^3$ such that $a_m^\top \Delta q>0$, we can choose a $\lambda>0$ and a $\Delta p=R^{*\top} \Delta q$ such that $p^*+\lambda \Delta p \in \mathcal P(A_1,b_1)$ and $a_m^\top \phi(T^*,p^*+\lambda \Delta p)=a_m^\top(q^*+\lambda \Delta q)>a_m^\top q^*$, which leads to a contradiction. Second, assume that $p^*$ lies on a facet (or edge) of $\mathcal P(A_1,b_1)$. Then, it is straightforward that there is a vertex $v^* \in \mathcal V$ on this facet (or edge) such that $a_m^\top \phi(T^*,v^*)\geq a_m^\top q^*$, which completes the proof. 
\end{proof}

In Lemma~\ref{lemma:enumerate_vertex}, we enumerate all vertices $v_i \in \mathcal V$ and solve the inner problem in~\eqref{eqn:QCQP_pose_to_point2} at each iteration. The maximum objective value over $\mathcal V$ is selected as $[b_2]_m$.
However, the inner problem is still non-convex due to the ${\rm SE}(3)$ constraint. We can relax it to an SDP and obtain a certifiable upper bound. Let $y={\rm col}(x(T),1)$ and $X=y y^\top$. Then, the relaxed SDP is
        \begin{subequations} \label{eqn:SDP_relaxation}
      \begin{align}
           [\hat{b}_2]_m=  \max_{v_i \in \mathcal V}~\max_{X \in \mathbb S^{13}} &~~ \tr (Q_{m,0}X) \\
    {\rm s.t.} &~~ X \succeq 0,   \ [X]_{13,13}=1, \\
    &~~ \tr(Q_iX)=0,i=1,\ldots,6, \\
    &~~ \tr(F_iX)\leq0,i=1,\ldots,d_T,
      \end{align}
    \end{subequations}
    where $d_T$ is the dimension of $d$. The data matrices $Q_{m,0}, Q_i,F_i$ have the following forms:
    \begin{equation}\label{eq:Qm0forward}
   Q_{m,0} = \begin{bmatrix}
0_{12 \times 12} & \begin{matrix} \frac{1}{2}(v_i \otimes I_3)a_m \\ \frac{1}{2}a_m \end{matrix} \\
\begin{matrix}  \frac{1}{2}a_m^\top(v_i^\top \otimes I_3) & \frac{1}{2}a_m^\top \end{matrix} & 0
\end{bmatrix},
\end{equation}

\begin{eqnarray}\label{eq:Qtildeforward}
\begin{cases}
     \tilde Q_1  =\begin{bmatrix}
        I_3 & 0_{3 \times 3} & 0_{3 \times 3} \\
        0_{3 \times 3} & 0_{3 \times 3} & 0_{3 \times 3}  \\
        0_{3 \times 3} & 0_{3 \times 3} & 0_{3 \times 3}
    \end{bmatrix}, \\  \tilde Q_2=\begin{bmatrix}
        0_{3 \times 3} & 0_{3 \times 3} & 0_{3 \times 3} \\
        0_{3 \times 3} & I_3 & 0_{3 \times 3} \\
        0_{3 \times 3} & 0_{3 \times 3} & 0_{3 \times 3}
    \end{bmatrix}, \\
     \tilde Q_3  =\begin{bmatrix}
        0_{3 \times 3} & 0_{3 \times 3} & 0_{3 \times 3} \\
        0_{3 \times 3} & 0_{3 \times 3} & 0_{3 \times 3} \\
        0_{3 \times 3} & 0_{3 \times 3} & I_3
    \end{bmatrix},
    \\ \tilde Q_4=\begin{bmatrix}
        0_{3 \times 3} & \frac{1}{2}I_3 & 0_{3 \times 3} \\
        \frac{1}{2}I_3 & 0_{3 \times 3} & 0_{3 \times 3} \\
        0_{3 \times 3} & 0_{3 \times 3} & 0_{3 \times 3}
    \end{bmatrix}, \\
         \tilde Q_5  =\begin{bmatrix}
        0_{3 \times 3} & 0_{3 \times 3} & \frac{1}{2}I_3 \\
        0_{3 \times 3} & 0_{3 \times 3} & 0_{3 \times 3} \\
        \frac{1}{2}I_3 & 0_{3 \times 3} & 0_{3 \times 3}
    \end{bmatrix}, \\
    Q_6=\begin{bmatrix}
        0_{3 \times 3} & 0_{3 \times 3} & 0_{3 \times 3} \\
       0_{3 \times 3} & 0_{3 \times 3} & \frac{1}{2}I_3 \\
        0_{3 \times 3} & \frac{1}{2}I_3 & 0_{3 \times 3}
    \end{bmatrix}.
\end{cases}
\end{eqnarray}

\begin{align}
    Q_i & =\begin{bmatrix}
        \tilde Q_i & 0_{9 \times 3} & 0_9 \\
        0_{3 \times 9} & 0_{3 \times 3} & 0_3 \\
        0_9^\top & 0_3^\top & -1
    \end{bmatrix}, i=1,2,3,  \label{eq:Qiforward1} \\
    Q_i & =\begin{bmatrix}
        \tilde Q_i & 0_{9 \times 3} & 0_9 \\
        0_{3 \times 9} & 0_{3 \times 3} & 0_3 \\
        0_9^\top & 0_3^\top & 0
    \end{bmatrix}, i=4,5,6, \label{eq:Qiforward2}
\end{align}

\begin{equation}\label{eq:Fiforward}
    F_i=
\begin{bmatrix}
0_{12\times 12} &\tfrac12[H]_i^\top\\
\tfrac12[H]_i & -[d]_i
\end{bmatrix},
\end{equation}
where $[H]_i$ denotes the $i$-th row of $H$.
  The time complexity to solve~\eqref{eqn:SDP_relaxation} using interior-point algorithms is $O(n^{3.5}Nd_T+
n^{2.5}Nd_T^{2}+n^{0.5}Nd_T^{3})$~\cite{nesterov1994interior} ($n=13$ in our problem). Since the data matrices are sparse and the constraint numbers in polytopic description is typically moderate in our problem, the SDP can be efficiently solved by off-the-shelf solvers such as \texttt{MOSEK}. To tighten the SDP relaxation, the automatic method in~\cite{dumbgen2024toward} can also be adopted, which augments the formulation with automatically generated redundant constraints.

Finally, we conclude with the following  \emph{guaranteed} result of the whole procedure.   The forward propagation algorithm is summarized in Algorithm~\ref{algrithm:forward_propagation}.
\begin{algorithm}
	\caption{Forward UQ under $\phi_{\mathcal S}$}
\label{algrithm:forward_propagation}
	\begin{algorithmic}[1]
		\Statex \textbf{Input}: $\mathcal P(A_1,b_1)$ for $\mathcal S$,
        $\mathcal P(H,d)$ for 
        $\mathcal T$, unit normal matrix $A_2$.
        \Statex \textbf{Output}:  Polytopic enclosure $\mathcal P(A_2,\hat b_2)$ for $\phi_{\mathcal S}(\mathcal T)$.
       % \State Fix $A_2=[a_1,\ldots,a_M]^\top$, where each $a_m$ is a unit normal; 
        \State Calculate $\{Q_i\}_{i=1}^6$ in \eqref{eq:Qiforward1}--\eqref{eq:Qiforward2} and $\{F_i\}_{i=1}^{d_T}$ in \eqref{eq:Fiforward};
        \For{$m=1:M$} 
\State Calculate $Q_{m,0}$ according to \eqref{eq:Qm0forward};
\State Solve SDP~\eqref{eqn:SDP_relaxation} for  $[\hat b_2]_m$;
        \EndFor
	\end{algorithmic}
\end{algorithm}
\begin{theorem} \label{the:forward_conservatism}
    The set $\mathcal P(A_2,\hat{b}_2)$ is a guaranteed approximation for $\mathcal Q$ in~\eqref{eqn:uncertainty_set_Q}, that is, $\mathcal Q \subseteq \mathcal P(A_2,\hat{b}_2)$.
\end{theorem}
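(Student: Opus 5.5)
To prove $\mathcal Q\subseteq\mathcal P(A_2,\hat b_2)$, it suffices to show that for every $q\in\mathcal Q$ and every facet index $m$ we have $a_m^\top q\le[\hat b_2]_m$. The chain of inequalities I would establish is $a_m^\top q\le [b_2]_m\le[\hat b_2]_m$.

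The first inequality holds because $q=\phi(T,p)$ for some $T\in\mathcal T$ and $p\in\mathcal P(A_1,b_1)$. The triple $(q,T,p)$ is therefore feasible for problem~\eqref{eqn:QCQP_pose_to_point}, and its objective value is bounded by the optimum $[b_2]_m$. By Lemma~\ref{lemma:enumerate_vertex}, this optimum equals the vertex-enumerated value in~\eqref{eqn:QCQP_pose_to_point2}. So it remains to show that, for each fixed vertex $v_i$, the inner maximum in~\eqref{eqn:QCQP_pose_to_point2} is at most the inner maximum of the SDP~\eqref{eqn:SDP_relaxation}. Taking the maximum over $v_i\in\mathcal V$ on both sides then gives $[b_2]_m\le[\hat b_2]_m$.

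The core step is the standard lifting argument. Fix $v_i$ and any feasible $T\in{\rm SE}(3)$ with $Hx(T)\le d$. Set $y={\rm col}(x(T),1)$ and $X=yy^\top$, and verify the following properties of $X$.
\begin{enumerate}
\item $X\succeq0$ and $[X]_{13,13}=1$ hold trivially.
\item The objective matches: since $Rv_i=(v_i^\top\otimes I_3){\rm vec}(R)$, we get $\tr(Q_{m,0}X)=y^\top Q_{m,0}y=a_m^\top(Rv_i+t)=a_m^\top\phi(T,v_i)$.
\item The equalities $\tr(Q_iX)=0$, $i=1,\ldots,6$, encode $r_j^\top r_k=\delta_{jk}$ for the columns $r_1,r_2,r_3$ of $R$. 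These are the entries of $R^\top R=I_3$, which hold for $R\in{\rm SO}(3)$.
\item Each inequality constraint equals $\tr(F_iX)=[H]_ix(T)-[d]_i\le0$.
\end{enumerate}
Hence $X$ is feasible for the SDP with the same objective value, so the SDP value dominates the original inner value.

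The only step needing care is checking the exact correspondence of the data matrices: the Kronecker identity for ${\rm vec}(R)$ (column-stacking convention), and the fact that $\tilde Q_1,\ldots,\tilde Q_6$ reproduce the column-orthonormality constraints. The relaxation drops $\det R=1$ and the rank-one condition, but dropping constraints only enlarges the feasible set, which is the right direction for an upper bound. One could also worry about whether the SDP maximum is attained. This does not matter, since feasibility of the lifted point already bounds the supremum from below, and $[\hat b_2]_m$ is that supremum.
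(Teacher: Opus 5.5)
Your proof is correct and follows the same route as the paper: the chain $a_m^\top q\le[b_2]_m\le[\hat b_2]_m$, with the first inequality coming from feasibility of $(q,T,p)$ in \eqref{eqn:QCQP_pose_to_point}, and the second from the SDP being a relaxation of \eqref{eqn:QCQP_pose_to_point2}, which equals \eqref{eqn:QCQP_pose_to_point} by Lemma~\ref{lemma:enumerate_vertex}. The paper only asserts the relaxation property, whereas you check it explicitly via the lifting $X=yy^\top$ and the matching of $Q_{m,0}$, $Q_i$, $F_i$; this is a welcome addition but not a different argument.
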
 
\begin{proof}
    For any $q \in \mathcal Q$, from~\eqref{eqn:QCQP_pose_to_point}, we have that $a_m^\top q\leq [b_2]_m, m\in \{1,\ldots,M\}$, which implies that $\mathcal Q \subseteq \mathcal P(A_2,{b}_2)$. Moreover, since~\eqref{eqn:SDP_relaxation} is a relaxation of~\eqref{eqn:QCQP_pose_to_point2}, it holds that $[b_2]_m \leq [\hat b_2]_m, m\in \{1,\ldots,M\}$. Therefore,  $\mathcal Q \subseteq \mathcal P(A_2,{b}_2) \subseteq \mathcal P(A_2,\hat{b}_2)$, which completes the proof.
\end{proof}

% The problem in this subsection is to derive UQ under the group action $\phi: {\rm SE}(3) \times \mathbb R^3 \rightarrow \mathbb R^3$: 
% \begin{mdframed}
% \begin{problem} [{\bf UQ from ${\rm SE}(3) \times \mathbb R^3$ to $\mathbb R^3$}]\label{problem:SLAM_triangulation}
%     Given the uncertainty sets ${ A}_x { x} \leq { b}_x$ and ${ A}_p { p} \leq { b}_p$, compute the uncertainty set ${ A}_{q} { q} \leq { b}_q$ for $q$.
% \end{problem}
% \end{mdframed}

% \zgy{The above problem is a bilinear program, which is hard to solve, and we can 
% \begin{itemize}
%     \item approximately solve it by enumerating all vertices of the polytope ${ A}_{p} p \leq { b}_{p}$, denoted by the set $V_{\mathcal P_p}$;
%         \begin{subequations} \label{eqn:LP_pose_to_point}
%       \begin{align}
%            \max_{v_p \in \mathcal V_{\mathcal P_p}}~~  \max_{q, x} &~~{ a}_m^\top q \\
%     {\rm s.t.} &~~ { A}_{x} { x} \leq { b}_{x} \\
%     &~~ q={ R} v_p+{ t}.
%       \end{align}
%     \end{subequations}
% \end{itemize}}

% Denote the obtained offset as $c_m$. Then we have $a_m^\top q \leq c_m$, which finally gives $A_q q \leq b_q$ with $A_q=[a_1,\ldots,a_M]^\top$ and $b_q=[c_1,\ldots,c_M]^\top$.

\subsection{Backward Uncertainty Quantification}\label{subsec:point_to_pose}

The backward UQ problem can be interpreted as follows: given a fixed bounded
set $\mathcal S$, for a target subset $\mathcal Q\subset \mathbb R^3$
find the preimage 
$\phi_{\mathcal S}^{-1}(\mathcal Q)$ in the manifold of $\SEthree$.
The following theorem provides an upper bound on this preimage by relating it to the preimage of a single-valued map 
 $\phi_{s}$ for some $s\in\mathcal S$.
\begin{theorem} \label{the:point_to_pose}
 For any $s\in \mathcal S$, $\phi_{\mathcal S}^{-1}(\mathcal Q)\subseteq\phi_{s}^{-1}(\mathcal Q^{+})$ for all
 $\mathcal Q^{+}\supseteq \mathcal Q+ \mathcal B(0,d(\mathcal S))$.  In particular, when $s={\rm cent}(\mathcal S)$, 
$\phi_{\mathcal S}^{-1}(\mathcal Q)\subseteq\phi_{s}^{-1}(\mathcal Q^{+})$ for all
$\mathcal Q^{+}\supseteq \mathcal Q+ \mathcal B(0,r(\mathcal S))$.
\end{theorem}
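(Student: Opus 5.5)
The plan is to unwind the definitions of the preimage of a set-valued map and reduce everything to the fact that a rigid motion is an isometry. By the notation section, $\phi_{\mathcal S}^{-1}(\mathcal Q)=\{T\in{\rm SE}(3)\mid \phi_{\mathcal S}(T)\cap\mathcal Q\neq\emptyset\}$. For the single-valued map $\phi_s$, the preimage is simply $\phi_s^{-1}(\mathcal Q^+)=\{T\mid \phi(T,s)\in\mathcal Q^+\}$. So it suffices to fix an arbitrary $T\in\phi_{\mathcal S}^{-1}(\mathcal Q)$ and show $\phi(T,s)\in\mathcal Q+\mathcal B(0,d(\mathcal S))$. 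The general $\mathcal Q^+$ then follows from this containment.

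First, take such a $T=(R,t)$. By definition there is some $p\in\mathcal S$ with $q:=\phi(T,p)=Rp+t\in\mathcal Q$. Then write
\[
\phi(T,s)=q+\bigl(\phi(T,s)-\phi(T,p)\bigr)=q+R(s-p).
\]
Since $R\in{\rm SO}(3)$ is orthogonal, $\|R(s-p)\|=\|s-p\|$. Both $s$ and $p$ lie in $\mathcal S$, so Definition~\ref{def:polytope_diameter} (which the paper notes is valid for general sets) gives $\|s-p\|\le d(\mathcal S)$. Hence $R(s-p)\in\mathcal B(0,d(\mathcal S))$, and therefore $\phi(T,s)\in\mathcal Q+\mathcal B(0,d(\mathcal S))\subseteq\mathcal Q^+$, i.e.\ $T\in\phi_s^{-1}(\mathcal Q^+)$.

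For the second claim the same computation applies with $s={\rm cent}(\mathcal S)$. Now Definition~\ref{def:Chebyshev_ball_center} says that the Chebyshev ball $\mathcal B({\rm cent}(\mathcal S),r(\mathcal S))$ encloses all of $\mathcal S$. So $\|s-p\|\le r(\mathcal S)$ for every $p\in\mathcal S$, which yields $\phi(T,s)\in\mathcal Q+\mathcal B(0,r(\mathcal S))\subseteq\mathcal Q^+$.

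There is no real obstacle; the argument is a direct isometry bound. The only points needing care are bookkeeping. One is using the correct (``hits'') notion of preimage for the set-valued $\phi_{\mathcal S}$. The other is observing that the diameter and Chebyshev radius are attained, or at least bound the distances, for the bounded set $\mathcal S$, so the suprema give the needed inequalities. The Chebyshev radius is the sharper constant because it measures distance from one specific point rather than between arbitrary pairs.
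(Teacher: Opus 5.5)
Your proposal is correct and is essentially the paper's own argument. Take the witness $p\in\mathcal S$ with $q=\phi(T,p)\in\mathcal Q$ and write $\phi(T,s)=q+R(s-p)$. Then use $\|R(s-p)\|=\|s-p\|$, bounded by $d(\mathcal S)$ in general or by $r(\mathcal S)$ when $s={\rm cent}(\mathcal S)$, to get $\phi(T,s)\in\mathcal Q+\mathcal B(0,\cdot)\subseteq\mathcal Q^{+}$.
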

\begin{proof}
    From the definition of $\phi_{\mathcal S}^{-1}(\mathcal Q)$, for any $T\in \phi_{\mathcal S}^{-1}(\mathcal Q)$, there exist $s \in \mathcal S$ and $q \in \mathcal Q$ such that $q= \phi_{s}(T)$. Fix an arbitrary $s'\in\mathcal S$. Since $\|s-s'\|\leq d(\mathcal S)$, we have 
$$
\phi_{s'}(T)=Rs'+t=q+R(s'-s)
\in 
\mathcal Q+\mathcal B(0,d(\mathcal S)),
$$
which proves the first assertion. For the second assertion, take $s'=\cent(S)$. Then, $\|s-s'\|\leq r(S)$ by Definition~\ref{def:Chebyshev_ball_center}, and thus
$\phi_{s'}(T)\in 
\mathcal Q+\mathcal B(0,r(\mathcal S))$,
which completes the proof. 
\end{proof}

When 
$\mathcal S$ and $\mathcal Q$ are arbitrary sets, the preimage is generally difficult to characterize. 
Although Theorem~\ref{the:point_to_pose} provides an upper bound on this preimage, obtaining a closed-form expression remains impractical. For further analysis, we restrict $\mathcal Q$  to be a polytope.
A useful lemma is 
\begin{lemma}\label{lemma:minimal_enclosing_polytope}
Give a polytope $\mathcal P(A,b)$, for any real number $\epsilon>0$, it holds that
$\mathcal P(A,b)+\mathcal B(0,\epsilon)\subset \mathcal P(A,b+\epsilon \mathbf{1})$.
\end{lemma}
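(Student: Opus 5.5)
The plan is a direct element-wise verification, using only the Cauchy--Schwarz inequality and the normalization convention stated after \eqref{eqn:polytope_H}, namely that every row $a_m$ of $A$ is a unit normal vector.

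Take an arbitrary point $z\in\mathcal P(A,b)+\mathcal B(0,\epsilon)$. By the definition of the Minkowski sum, it can be written as $z=x+u$ with $Ax\leq b$ and $\|u\|\leq\epsilon$. Fix a row index $m$. The $m$-th constraint then splits as
\begin{equation*}
a_m^\top z = a_m^\top x + a_m^\top u \leq [b]_m + \|a_m\|\,\|u\| \leq [b]_m+\epsilon ,
\end{equation*}
where the middle step is Cauchy--Schwarz and the last step uses $\|a_m\|=1$ and $\|u\|\leq\epsilon$. Since $m$ is arbitrary, $Az\leq b+\epsilon\mathbf 1$, so $z\in\mathcal P(A,b+\epsilon\mathbf 1)$, which is the claimed inclusion.

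The only substantive point is the reliance on unit-norm rows. Without it, each offset would have to be enlarged to $\epsilon\|a_m\|$ rather than $\epsilon$, so I would cite the paper's standing assumption explicitly at that step. I would also note that the inclusion is generally strict: near a vertex of $\mathcal P(A,b)$, the rounded corner of the Minkowski sum lies strictly inside the sharp corner of the shifted polytope. This explains why the statement uses $\subset$ and why, in the subsequent backward UQ, the shifted polytope is a valid choice of $\mathcal Q^{+}$ in Theorem~\ref{the:point_to_pose}, with $\epsilon=r(\mathcal S)$ or $\epsilon=d(\mathcal S)$.
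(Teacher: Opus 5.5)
Your proof of the containment $\mathcal P(A,b)+\mathcal B(0,\epsilon)\subseteq\mathcal P(A,b+\epsilon\mathbf 1)$ is correct and is exactly the paper's argument: write the point as $p+v$, then bound $a_m^\top v\le\|a_m\|\,\|v\|\le\epsilon$ using the unit-normal convention. What is missing is strictness. The paper reads $\subset$ as a proper inclusion and proves it as a separate second step. You only assert that the inclusion is ``generally strict'' via the rounded-corner picture. That picture is right, but it is not yet an argument. Strictness is also not automatic: in $\mathbb R^1$, $[l,u]+[-\epsilon,\epsilon]=[l-\epsilon,u+\epsilon]$ is exactly the shifted polytope. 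So any proof must use $n\ge 2$ (and $\mathcal P(A,b)\neq\emptyset$), and you should state these hypotheses rather than write ``generally''.

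The missing step is a Cauchy--Schwarz equality-case argument at a point of the \emph{shifted} polytope where two distinct constraints are tight.
\begin{itemize}
\item $\mathcal P(A,b+\epsilon\mathbf 1)$ is nonempty and bounded (boundedness depends only on $A$), so it has a vertex $p'$. The active rows at $p'$ span $\mathbb R^n$, so for $n\ge2$ there are active unit normals $a_i\neq a_j$.
\item For every $p\in\mathcal P(A,b)$, you get $a_i^\top(p'-p)\ge\epsilon$ and $a_j^\top(p'-p)\ge\epsilon$, hence $\|p'-p\|\ge\epsilon$.
\item Equality would force $p'-p=\epsilon a_i=\epsilon a_j$, which is impossible. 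So $\|p'-p\|>\epsilon$ for all $p$, i.e.\ $p'\notin\mathcal P(A,b)+\mathcal B(0,\epsilon)$.
\end{itemize}
This is the paper's idea with one correction. The paper takes $p'$ with $Ap'=b+\epsilon\mathbf 1$, but such a point generally does not exist: for a box, opposite facets cannot be tight at the same point. A vertex of the shifted polytope is the right choice.

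Alternatively, you can make your own picture rigorous. Take a vertex $v$ of $\mathcal P(A,b)$ and a unit $u$ in the interior of its normal cone with $u\neq a_m$ for all $m$; such a $u$ exists when $n\ge2$. Then $v+\epsilon u$ satisfies every shifted constraint strictly. Hence $v+(\epsilon+\delta)u\in\mathcal P(A,b+\epsilon\mathbf 1)$ for small $\delta>0$, while its distance to $\mathcal P(A,b)$ is $\epsilon+\delta$. Only the non-strict inclusion is used downstream in Theorem~\ref{the:point_to_pose} and the backward UQ, so your argument already covers everything the paper relies on.
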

\begin{proof}
    For any $p_{\epsilon}\in \mathcal P(A,b)+\mathcal B(0,\epsilon)$, one can find $p\in\mathcal P(A,b)$ and $v\in\mathcal B(0,\epsilon)$ such that 
$p_{\epsilon}=p+v$.
Then, $Ap_{\epsilon}=Ap+Av\leq b+Av$. Since $\|v\|\leq \epsilon$ (due to $v\in \mathcal B(0,\epsilon)$) and each row $a_m^\top$ in $A$ is a unit vector, we have $a_m^\top v\leq \epsilon$ for all $m$. Thus $Ap_{\epsilon}\leq b+\epsilon \mathbf 1$, which shows $p_{\epsilon} \in\mathcal P(A,b+\epsilon \mathbf 1)$.
To prove the inclusion is strict,
take a point $p_{\epsilon}'$ satisfying $Ap_{\epsilon}'=b+\epsilon \mathbf 1$. For any $p\in\mathcal P(A,b)$, 
we have $A(p_{\epsilon}'-p)=
b+\epsilon \mathbf 1-Ap\geq \epsilon \mathbf 1$. 
The Cauchy-Schwarz inequality implies
$a_m^\top (p_{\epsilon}'-p)\leq \|p_{\epsilon}'-p\|$ for each $m$ with equality only if 
$p_{\epsilon}'-p$ is parallel to $a_m$.
On the other hand, since $A$ is the data matrix for a polytope, at least one inequality is strict. Thus,$\|p_{\epsilon}'-p\|>\epsilon$, which concludes
$p_{\epsilon}'\not \in \mathcal P(A,b)+\mathcal B(0,\epsilon)$. This completes the proof.
\end{proof}

An immediate benefit of restricting $\mathcal Q$ to a polytope represented in $H$-form as $\mathcal P(A_2,b_2)$ is that the set $\mathcal Q+\mathcal B(0,r(\mathcal S))$ can be bounded by a polytope $\mathcal P(A_2, b_2+r(\mathcal S) \mathbf 1)$ as established in Lemma~\ref{lemma:minimal_enclosing_polytope} by considering the Chebyshev ball of \(\mathcal{S}\).
Moreover, by Theorem~\ref{the:point_to_pose}, fixing \(s = \operatorname{cent}(\mathcal{S})\) renders \(\phi_s\) linear, ensuring that the preimage of a polytope under \(\phi_s\) remains a polytope. In addition, if \(\mathcal{S}\) itself is constrained to be a polytope $\mathcal P(A_1,b_1)$, locating its Chebyshev ball is computationally efficient.
In the SLAM context in Sec.\ref{sec:SLAM_oometry}, $\mathcal P(A_1,b_1)$ for local point $p$ can be provided by CP, and $\mathcal P(A_2,b_2)$ for global map coordinate $q$ can be obtained by previous forward UQ rounds. The backward UQ from $\mathcal P(A_1,b_1)$ and $\mathcal P(A_2,b_2)$ is to compute the set 
\begin{align}\label{eqn:inverse_set_T}
    \mathcal T:=\{T \in {\rm SE}(3) \mid \exists &p\in\mathcal P(A_1,b_1), q \in\mathcal P(A_2,b_2) \notag\\  
    &{\rm such ~that } ~q=\phi(T,p)\}.
\end{align}

From Theorem~\ref{the:point_to_pose} and Lemma~\ref{lemma:minimal_enclosing_polytope}, by setting $s= \cent (P(A_1, b_1))$, 
%the preimage 
%$ \phi_s^{-1}\left(\mathcal %P(A_2, b_2+r(\mathcal P(A_1, b_1) \mathbf 1)\right)$ encloses the original set $\mathcal T$. By definition,
for any $T\in \phi_s^{-1}\left(\mathcal P(A_2, b_2+r(\mathcal P(A_1, b_1) \mathbf 1)\right)$, we have $A_2 \phi(T,s)\leq b_2+r(\mathcal P(A_1, b_1) \mathbf 1$.  It follows that
\begin{equation} \label{eqn:backward_from1}
    \mathcal T_{r}= \{T \in {\rm SE}(3)\mid Hx(T)\leq d\},
\end{equation}
where $H=[{ A}_{2} (s^\top \otimes { I}_3), { A}_{2}]$ and $d={ b}_{2}+r(\mathcal P(A_1, b_1) \mathbf 1$.

Theorem~\ref{the:point_to_pose} also allows for a looser bound based on the polytope diameter, which is computationally simpler as it bypasses the computation of the Chebyshev center. 
Choosing arbitrary
$s \in \mathcal P(A_1,b_1)$, and the preimage $ \phi_s^{-1}(\mathcal P(A_2, b_2+d(\mathcal P(A_1, b_1) \mathbf 1))$ is given by 
\begin{equation} \label{eqn:backward_from2}
    \mathcal T_{d}= \{T \in {\rm SE}(3)\mid Hx(T)\leq d\},
\end{equation}
where $H=[{ A}_{2} (s^\top \otimes { I}_3), { A}_{2}]$ and $d={ b}_{2}+d(\mathcal P(A_1, b_1) \mathbf 1$.

In summary, we conclude with the following result on the \emph{guarantee} of the whole procedure.
\begin{theorem} \label{the:backward_conservatism}
Both $\mathcal T_r$ in~\eqref{eqn:backward_from1} and $\mathcal T_d$ in~\eqref{eqn:backward_from2} are guaranteed approximations for $\mathcal T$ in~\eqref{eqn:inverse_set_T}, that is, $\mathcal T\subset \mathcal T_r, \mathcal T_d$.
Moreover, $\mathcal T_r$ is the unique minimum set that 
\begin{enumerate}
    \item can be written as ${\rm SE}(3) \cap \mathcal P(H,\cdot)$ with template $H=[{ A}_{2} (s^\top \otimes { I}_3), { A}_{2}], s=\cent (\mathcal P(A_1, b_1))$ \label{T1_tempalte_property};
\item  encloses $\mathcal T$\label{T1_enclsoingT};
\item is enclosed by any other set satisfying~\ref{T1_tempalte_property}) and~\ref{T1_enclsoingT}).
\end{enumerate}
\end{theorem}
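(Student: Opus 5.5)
The containment part follows by chaining Theorem~\ref{the:point_to_pose} with Lemma~\ref{lemma:minimal_enclosing_polytope}. Take $\mathcal S=\mathcal P(A_1,b_1)$ and $\mathcal Q=\mathcal P(A_2,b_2)$. By definition~\eqref{eqn:inverse_set_T}, $\mathcal T=\phi_{\mathcal S}^{-1}(\mathcal Q)$.

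Lemma~\ref{lemma:minimal_enclosing_polytope} gives $\mathcal Q+\mathcal B(0,r(\mathcal S))\subset \mathcal P(A_2,b_2+r(\mathcal S)\mathbf 1)=:\mathcal Q^+$. The second assertion of Theorem~\ref{the:point_to_pose}, with $s=\cent(\mathcal S)$, then yields $\mathcal T\subseteq \phi_s^{-1}(\mathcal Q^+)$.

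It remains to identify $\phi_s^{-1}(\mathcal Q^+)$ with $\mathcal T_r$. Since $Rs=(s^\top\otimes I_3){\rm vec}(R)$, we have $A_2\phi(T,s)=[A_2(s^\top\otimes I_3),A_2]\,x(T)=Hx(T)$. Hence $\phi_s^{-1}(\mathcal Q^+)$ is exactly $\{T\in{\rm SE}(3)\mid Hx(T)\le d\}$, which is $\mathcal T_r$. The case $\mathcal T_d$ is identical, using the first assertion of Theorem~\ref{the:point_to_pose} with $d(\mathcal S)$ in place of $r(\mathcal S)$.

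For the minimality claim, property 1) holds for $\mathcal T_r$ by construction, and property 2) was just shown. The key step is to prove that for each row $m$,
\[
\sup_{T\in\mathcal T} a_m^\top (Rs+t) = [b_2]_m + r(\mathcal S).
\]
Granting this, let ${\rm SE}(3)\cap\mathcal P(H,c)$ be any other set with the same template that encloses $\mathcal T$. Then every $T\in\mathcal T$ satisfies $a_m^\top(Rs+t)\le c_m$, so taking the supremum gives $c_m\ge [b_2]_m+r(\mathcal S)=d_m$ for every $m$. Therefore $\mathcal T_r={\rm SE}(3)\cap\mathcal P(H,d)\subseteq {\rm SE}(3)\cap\mathcal P(H,c)$, which is property 3). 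Uniqueness is then immediate: two sets each satisfying 1)--3) enclose one another, so they coincide.

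To prove the key supremum identity, I would write $a_m^\top(Rs+t)=a_m^\top q + a_m^\top R(s-p)$ whenever $q=Rp+t$. The upper bound $[b_2]_m+r$ comes from $a_m^\top q\le [b_2]_m$ together with $\|s-p\|\le r$. For attainment, I would proceed as follows.
\begin{enumerate}
\item Choose $q^*\in\mathcal P(A_2,b_2)$ with $a_m^\top q^*=[b_2]_m$. This uses the standing minimal-representation assumption that every inequality is tight on the polytope.
\item Choose $p^*\in\mathcal P(A_1,b_1)$ with $\|p^*-s\|=r(\mathcal S)$. Such a point exists because the Chebyshev ball is minimal: if all points of the compact set $\mathcal S$ were strictly inside, the radius could be shrunk.
\item Choose $R^*\in{\rm SO}(3)$ rotating the unit vector $(s-p^*)/r$ onto $a_m$; this is possible for any pair of unit vectors. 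If $r=0$, simply take $R^*=I_3$.
\item Set $t^*=q^*-R^*p^*$. Then $T^*=(R^*,t^*)\in\mathcal T$ and it attains the value $[b_2]_m+r$.
\end{enumerate}

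I expect the main obstacle to be the attainment step, specifically justifying that each facet of $\mathcal P(A_2,b_2)$ is tight. If $b_2$ were not tight, minimality would have to be stated relative to the tightened offsets, so I would invoke the minimal-representation assumption made in Sec.~\ref{subsec:polytope} explicitly at this point.
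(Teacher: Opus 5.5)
Your proposal is correct and follows essentially the paper's argument. Containment comes from Theorem~\ref{the:point_to_pose} plus Lemma~\ref{lemma:minimal_enclosing_polytope}, and minimality uses the same extremal pose: a Chebyshev-radius point $p^*$ and a rotation sending $s-p^*$ to $r\,a_m$, which the paper phrases as a contradiction using $q_m^*-q^*$ on the shifted facet. Your explicit appeal to tightness of the $m$-th inequality of $\mathcal P(A_2,b_2)$ makes precise an assumption the paper uses silently when it asserts that the distance from $\mathcal P(A_2,b_2)$ to the hyperplane $a_m^\top q=[d]_m$ equals $r(\mathcal P(A_1,b_1))$.
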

\begin{proof}
    The conservatism of $\mathcal T_r$ and $\mathcal T_d$ are straightforward by Theorem~\ref{the:point_to_pose} and Lemma~\ref{lemma:minimal_enclosing_polytope}. To prove the second claim, suppose that there exists $d'$ such that $\mathcal T'= \{T \in {\rm SE}(3)\mid Hx(T)\leq d'\}$ encloses $\mathcal T$, and there exists at least an $m$ such that $[d']_m<[d]_m$. Recall 
    $s={\rm cent}(\mathcal P(A_1,b_1))$. Take $s'$ in the set $\mathcal P(A_1,b_1)$ to achieve the Chebyshev radius, that is,
    \begin{equation}\label{eqn:def_coner_point_s'}
    \|s-s'\|=r(\mathcal P(A_1,b_1)).
    \end{equation}
    Denote $A_2=[a_1,\ldots,a_M]^\top$. Let $q^*$ from $\mathcal P(A_2,b_2)$ and $q_m^*$ from the facet $a_m^\top q=[d]_m$ so that they achieve the distance between the two sets, which combining the unity of
$a_m^\top$ leads to 
$$a_m^\top(q^*_m-q^*)=\|q^*-q_m^*\|=r(\mathcal P(A_1,b_1)).$$
   Due to~\eqref{eqn:def_coner_point_s'}, we can find a rotation $R$ satisfying $R(s-s')=q_m^*-q^*$.
   Define $t=q_m^*-Rs$ and we have $Rs+t=q_m^*$. 
   Hence, $Rs'+t=Rs+t-R(s-s')=q_m^*+q^*-q_m^*=q^*$ and 
$\begin{bmatrix}
    { R}~  t \\
    { 0}_3^\top~ 1
\end{bmatrix}=:T\in\mathcal {T}$ by~\eqref{eqn:inverse_set_T} since $s'\in \mathcal P(A_1,b_1)$ and 
$q^*\in\mathcal P(A_2,b_2)$.
On the other hand, by~\eqref{eqn:backward_from1},
$$[Hx(T)]_m =a_m^\top (Rs+t)=a_m^\top q_m^*=[d]_m,$$
which leads to $[Hx(T)]_m>[d']_m$. As a result, $T\not \in \mathcal T'$. It implies that $\mathcal T'$ does not enclose $\mathcal T$,
 introducing a contradiction and completing the proof.
\end{proof}

Our backward UQ is reported in Algorithm~\ref{algrithm:backward_propagation}.
\begin{algorithm}
	\caption{Backward UQ under $\phi_{\mathcal S}$ }
	\label{algrithm:backward_propagation}
	\begin{algorithmic}[1]
		\Statex \textbf{Input}:  $\mathcal P(A_1,b_1)$ for $\mathcal S$ and $\mathcal P(A_2,b_2)$ for $\mathcal Q$.
        \Statex \textbf{Output}:Polytopic enclosure $\mathcal T$ for $\phi^{-1}_{\mathcal S}(\mathcal Q)$.
       \If{prioritize tightness}
\State Calculate $s=\cent(\mathcal P(A_1,b_1))$ and $r(\mathcal P(A_1,b_1))$ by solving~\eqref{eqn:cal_chebyshev_ball};
\State Set $H=[{ A}_{2} (s^\top \otimes { I}_3), { A}_{2}]$ and $d={ b}_{2}+r(\mathcal P(A_1, b_1) \mathbf 1$;
       \ElsIf{prioritize efficiency}
\State Calculate $d(\mathcal P(A_1,b_1))$ 
%by solving~\eqref{eqn:cal_polytope_diameter} 
and set  $d={ b}_{2}+d(\mathcal P(A_1, b_1) \mathbf 1$;
\State Choose $s \in \mathcal P(A_1,b_1)$ and set $H=[{ A}_{2} (s^\top \otimes { I}_3), { A}_{2}]$;
       \EndIf
       \State Set $\mathcal T= \{T \in {\rm SE}(3)\mid Hx(T)\leq d\}$.
	\end{algorithmic}
\end{algorithm}

\subsection{Rigid Transformation Uncertainty Compound} \label{subsec:pose_compound}

The rigid transformation uncertainty compound problem is formulated as follows:
find a set $\mathcal T_{3}$, if there exists one, of ${\rm SE}(3)$ for uncertainty sets $\mathcal T_1,\mathcal T_2 \subset {\rm SE}(3)$
such that 
\begin{equation}\label{eqn:T_coumpound}
\phi_{\mathcal S}(\mathcal T_{3})=\phi_{\phi_{\mathcal S}(\mathcal T_2)}(\mathcal T_1)
\end{equation}
holds for any bounded point set
$\mathcal S$.
The problem is well-defined only if $\mathcal T_3$ exists and is unique.
The composition of the group action $\phi$ on $\mathbb R^3$ is compatible with the group multiplication in $\mathrm{SE}(3)$. Specifically, for $T_1, T_2 \in \mathrm{SE}(3)$, $\phi_{\phi_p(T_2)}(T_1)=\phi_{p}(T_1T_2)$. Consequently, the compound of these uncertainties can be fully characterized using the Minkowski product, establishing that the problem is well-defined. 
\begin{lemma}\label{T_compound_iff_set_product}
For two sets $\mathcal T_1,\mathcal T_2\subset {\rm SE}(3)$, 
$\mathcal T_3\subset {\rm SE}(3)$ satisfies~\eqref{eqn:T_coumpound} for any bounded set $\mathcal S$ if and only if 
$\mathcal T_3=\mathcal T_1\cdot \mathcal T_2$.
\end{lemma}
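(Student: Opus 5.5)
The plan is to handle the two directions separately. The ``if'' direction is a direct computation from the compatibility identity stated just before the lemma. The ``only if'' direction is an injectivity statement: from the images $\phi_{\mathcal S}(\cdot)$ over all bounded $\mathcal S$ we must recover the underlying set of transformations. I expect that step to be the main obstacle, because the statement as written appears to fail without a strengthened reading.

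\emph{Sufficiency.} Assume $\mathcal T_3=\mathcal T_1\cdot\mathcal T_2$. Using the definition of the image of a set-valued map and~\eqref{eqn:def_phi_S}, I would write
\begin{align*}
\phi_{\phi_{\mathcal S}(\mathcal T_2)}(\mathcal T_1)
&=\{\phi(T_1,\phi(T_2,p)) \mid T_1\in\mathcal T_1,\ T_2\in\mathcal T_2,\ p\in\mathcal S\}\\
&=\{\phi(T_1T_2,p)\mid T_1\in\mathcal T_1,\ T_2\in\mathcal T_2,\ p\in\mathcal S\}
=\phi_{\mathcal S}(\mathcal T_1\cdot\mathcal T_2).
\end{align*}
The middle equality is the identity $R_1(R_2p+t_2)+t_1=(R_1R_2)p+(R_1t_2+t_1)$, i.e.\ $\phi_{\phi_p(T_2)}(T_1)=\phi_p(T_1T_2)$. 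This holds for every bounded $\mathcal S$, so~\eqref{eqn:T_coumpound} is satisfied.

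\emph{Necessity.} Given the sufficiency part, it is enough to show that if $\phi_{\mathcal S}(\mathcal A)=\phi_{\mathcal S}(\mathcal B)$ for every bounded $\mathcal S$, then $\mathcal A=\mathcal B$; we would apply this with $\mathcal A=\mathcal T_3$ and $\mathcal B=\mathcal T_1\cdot\mathcal T_2$. The tool is the rigidity fact that a proper rigid motion is uniquely determined by the images of four affinely independent points, for instance $\mathcal S_0=\{0,e_1,e_2,e_3\}$. Indeed, $\phi(T,0)=t$ and $\phi(T,e_i)-t=Re_i$ recover $R$ and $t$.

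The difficulty is that $\phi_{\mathcal S}(\mathcal A)$ is a \emph{union} over $T\in\mathcal A$. A point of $\phi_{\mathcal S_0}(\mathcal A)$ may come from $0$ under one transformation and from $e_1$ under another, so the union forgets which points were moved together. This loss is real, not just a defect of the argument. Take $\mathcal A={\rm SO}(3)$ acting about the origin (i.e.\ $t=0$) and $\mathcal B=\mathcal A\setminus\{I\}$. Every $Rp$ with $R\in\mathcal A$ is also attained by some $R'\ne I$, since the stabilizer of $p$ is nontrivial. Hence $\phi_{\mathcal S}(\mathcal A)=\phi_{\mathcal S}(\mathcal B)$ for every bounded $\mathcal S$, although $\mathcal A\ne\mathcal B$.

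I would therefore first try to show that the pointwise-union reading suffices for ``nice'' sets. One candidate is finite sets, where I would probe with a generic asymmetric $\mathcal S$ and count coincidences. If that proves too restrictive, I would adopt the reading in which~\eqref{eqn:T_coumpound} asserts equality of the families $\{\phi_{\mathcal S}(T)\}_{T}$, i.e.\ the moved copies of $\mathcal S$ are compared as whole rigid bodies. Under that reading the proof is short. Taking $\mathcal S=\mathcal S_0$, each $T\in\mathcal T_3$ yields a copy $\phi_{\mathcal S_0}(T)$ that must equal $\phi_{\mathcal S_0}(T_1T_2)$ for some $T_1\in\mathcal T_1$, $T_2\in\mathcal T_2$. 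Rigidity then gives $T=T_1T_2$, so $\mathcal T_3\subseteq\mathcal T_1\cdot\mathcal T_2$, and the reverse inclusion follows symmetrically. This also yields the existence and uniqueness claimed for the well-posedness of the compound problem.
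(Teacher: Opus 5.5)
Your sufficiency direction is the paper's computation~\eqref{eqn:equality_compound}, and you handle the order correctly. Since $\phi_p(T_1T_2)=\phi_{\phi_p(T_2)}(T_1)$, the result is $\phi_{\phi_{\mathcal S}(\mathcal T_2)}(\mathcal T_1)$; the paper's last two lines swap $\mathcal T_1$ and $\mathcal T_2$.

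For necessity, the paper only asserts that one can ``reuse the argument'' and argue by contradiction. Your counterexample shows this cannot work under the paper's union definition of images. Take $\mathcal T_1=\{T\in{\rm SE}(3)\mid t=0\}$, $\mathcal T_2=\{I_4\}$ and $\mathcal T_3=\mathcal T_1\setminus\{I_4\}$. Then $\phi_{\{p\}}(\mathcal T_3)=\phi_{\{p\}}(\mathcal T_1\cdot\mathcal T_2)$ for every $p$, so~\eqref{eqn:T_coumpound} holds for every $\mathcal S$, yet $\mathcal T_3\neq\mathcal T_1\cdot\mathcal T_2$.

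Regularity does not rescue the literal statement. The set $\mathcal T_3=\{T\in\mathcal T_1\mid {\rm dis}(R,I_3)\geq\pi/2\}$ gives a compact counterexample. For $p\neq 0$, both $R$ and $R\,{\rm Exp}(\pi p/\|p\|)$ send $p$ to $Rp$ and are at geodesic distance $\pi$ from each other, so one of them lies in $\mathcal T_3$.

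Your finite-set idea does work, and more simply than you planned. Probe with singletons $\mathcal S=\{p\}$ and note that $\{p\mid\phi_p(T)=\phi_p(T')\}$ is empty or a line when $T\neq T'$; countably many such sets cannot cover $\mathbb R^3$. But this excludes the polytopic sets the paper actually uses. Nothing downstream (Theorems~\ref{the:direct_compound_conservatism} and~\ref{the:indirect_conservatism}) needs more than sufficiency, so moving to a strengthened reading is the right call. Say explicitly that you are proving a corrected statement.

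The step that fails is your rigidity argument under the family reading. There $\phi_{\mathcal S_0}(T)=\{t,\,Re_1+t,\,Re_2+t,\,Re_3+t\}$ is an \emph{unordered} set. The configuration $\mathcal S_0=\{0,e_1,e_2,e_3\}$ is mapped onto itself by the rotation $R_c$ through $2\pi/3$ about $(1,1,1)^\top$, which fixes $0$ and cycles $e_1\to e_2\to e_3\to e_1$. Writing $T_c\in{\rm SE}(3)$ for this pure rotation, $\phi_{\mathcal S_0}(TT_c)=\phi_{\mathcal S_0}(T)$ although $TT_c\neq T$. So equality of copies $\phi_{\mathcal S_0}(T)=\phi_{\mathcal S_0}(T_1T_2)$ does not yield $T=T_1T_2$. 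Your recovery $t=\phi(T,0)$, $Re_i=\phi(T,e_i)-t$ presupposes knowing which image point came from which point of $\mathcal S_0$, and set equality does not provide that.

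Use your freedom in choosing $\mathcal S$ instead. For $\mathcal S_1=\{0,e_1,2e_2,3e_3\}$ the six pairwise distances $1,2,3,\sqrt5,\sqrt{10},\sqrt{13}$ are distinct. Hence an isometry mapping $\mathcal S_1$ onto itself maps each pair to itself. It therefore fixes each point, since each point is the intersection of two such pairs. It is then the identity, because the four points are affinely independent. Since $\phi_{\mathcal S_1}(T)=\phi_{\mathcal S_1}(T')$ holds iff $T'^{-1}T$ maps $\mathcal S_1$ onto itself, this gives $T=T'$, and both of your inclusions go through. Alternatively, phrase the strengthened reading with labeled configurations $p\mapsto\phi_p(T)$ on $\mathcal S$, in which case $\mathcal S_0$ already suffices.
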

\begin{proof}
    For $\mathcal T_1\cdot \mathcal T_2$, we have
\begin{align}\label{eqn:equality_compound}
\phi_{\mathcal S}(\mathcal T_1\cdot \mathcal T_2)&= \cup_{T\in \mathcal T_1\cdot \mathcal T_2}\phi_{\mathcal S}(T)\notag\\
&= \{\phi_p(T_1T_2)\mid p\in\mathcal S, T_1\in\mathcal T_1,T_2\in\mathcal T_2
\}\notag\\
& =\{\phi_{q}(T_2)\mid q\in \phi_{\mathcal S}(\mathcal T_1),T_2\in\mathcal T_2\}\notag\\
& =\phi_{\phi_{\mathcal S}(\mathcal T_1)}(\mathcal T_2).
\end{align}
Moreover, since~\eqref{eqn:T_coumpound} is satisfied for any bounded $\mathcal S$, one can reuse argument in~\eqref{eqn:equality_compound} and show by contradiction that $\mathcal T_3$ cannot be any set other than $\mathcal T_1 \cdot \mathcal T_2$, which completes the proof.
\end{proof}

By Lemma~\ref{T_compound_iff_set_product}, the problem of compounding uncertainties in rigid transformations can be conveniently reformulated as a set product problem.
To facilitate further analysis, let us restrict our attention to the polytopic uncertainty sets $\mathcal T_1=\{T\in {\rm SE}(3)\mid { A}_{1} x(T) \leq { b}_{1}\}$  on $T_1$ and $\mathcal T_2=\{T\in {\rm SE}(3)\mid { A}_{2} x(T) \leq { b}_{2}\}$  on $T_2$. Assume that $\mathcal P(A_1, b_1)$ and $\mathcal P(A_2, b_2)$ are given. The polytopic structure is not generally preserved under the group product. We propose two methods---one direct and one indirect---to compute enclosures for $\mathcal T_1\cdot \mathcal T_2$.

\subsubsection{Direct Computation Method}
In this part, we are interested in developing an algorithm to compute a tight polytopic set $\mathcal P(A_3, b_3)=\{x\in \mathbb{R}^{12}\mid { A}_{3} x \leq { b}_{3}\}$  on $x(T_3)$. 
Similar to the forward UQ in Sec.\ref{subsec:pose_to_point}, we fix the template $A_3=[a_1,\ldots,a_M]^\top$ \emph{a prior} to improve the computational efficiency. Here each $a_m$ is a \emph{unit normal}. Then, the problem to be solved is    
\begin{subequations} \label{eqn:QCQP_odometry}
    \begin{align}
              [b_3]_m = \max_{{ T}_1, { T}_2, { T}_3} &~~{ a}_m^\top { x}(T_3) \\
    {\rm s.t.} &~~ { T}_3= { T}_1 { T}_2, \ { A}_{1} x(T_1) \leq { b}_{1}, \\
    &~~ { A}_{2} x(T_2) \leq { b}_{2}, \ T_1,T_2 \in {\rm SE}(3).
      \end{align}
    \end{subequations}

This bilinear optimization problem is hard to be globally solved. Let $y={\rm col}(x(T_1),x(T_2),1)$ and $X=y y^\top$.  We have the following relaxed SDP giving a certifiable upper bound.
        \begin{subequations} \label{eqn:SDP_relaxation2}
      \begin{align}
           [\hat{b}_3]_m=  \max_{X \in \mathbb S^{25}} &~~ \tr (Q_{m,0}X) \\
    {\rm s.t.} &~~ X \succeq 0, \   [X]_{25,25}=1, \\
    &~~ \tr(Q_iX)=0,i=1,\ldots,12,\\
    &~~ \tr(F_iX)\leq0,i=1,\ldots,b_{1,T},\\
    &~~ \tr(G_iX)\leq0,i=1,\ldots,b_{2,T},
      \end{align}
    \end{subequations}
    where $b_{1,T}$ and $b_{2,T}$ are the dimensions of $b_1$ and $b_2$, respectively, and the data matrices $Q_{m,0}, Q_i,F_i,G_i$ have the following forms:
    \begin{eqnarray}\label{eq:Qicompound}
\begin{cases}
    Q_i & =\begin{bmatrix}
        \tilde Q_i & 0_{9 \times 15} & 0_9 \\
        0_{15 \times 9} & 0_{15 \times 15} & 0_{15} \\
        0_9^\top & 0_{15}^\top & -1
    \end{bmatrix}, i=1,2,3,  \\
    Q_i & =\begin{bmatrix}
        \tilde Q_i & 0_{9 \times 15} & 0_9 \\
        0_{15 \times 9} & 0_{15 \times 15} & 0_{15} \\
        0_9^\top & 0_{15}^\top & 0
    \end{bmatrix}, i=4,5,6, \\
    Q_i & =\begin{bmatrix}
        0_{12 \times 12} & 0_{12 \times 9} & 0_{12 \times 3} & 0_{12} \\
        0_{9 \times 12} & \tilde Q_{i-6} & 0_{9 \times 3} & 0_9 \\
        0_{3 \times 12} & 0_{3 \times 9} & 0_{3 \times 3} & 0_{3} \\
        0_{12}^\top & 0_{9}^\top & 0_{3}^\top & -1
    \end{bmatrix}, i=7,8,9, \\
    Q_i & =\begin{bmatrix}
        0_{12 \times 12} & 0_{12 \times 9} & 0_{12 \times 3} & 0_{12} \\
        0_{9 \times 12} & \tilde Q_{i-6} & 0_{9 \times 3} & 0_9 \\
        0_{3 \times 12} & 0_{3 \times 9} & 0_{3 \times 3} & 0_{3} \\
        0_{12}^\top & 0_{9}^\top & 0_{3}^\top & 0
    \end{bmatrix}, i=10,11,12,
    \end{cases}
\end{eqnarray}
with the same $\tilde Q_i,i=1,\ldots,6$ in \eqref{eq:Qtildeforward},
\begin{equation}\label{eq:Qm0compound}
    Q_{m,0}=\begin{bmatrix}
        0_{9 \times 9} & 0_{9 \times 3} & \frac{1}{2}\Phi & \frac{1}{2} \Psi & 0_9 \\
        0_{3 \times 9} & 0_{3 \times 3} & 0_{3 \times 9} & 0_{3 \times 3} & \frac{1}{2} a_{m,t} \\
        \frac{1}{2}\Phi^\top & 0_{9 \times 3} & 0_{9 \times 9} & 0_{9 \times 3} & 0_9 \\
        \frac{1}{2}\Psi^\top & 0_{3 \times 3} & 0_{3 \times 9} & 0_{3 \times 3} & 0_3 \\
        0_9^\top & \frac{1}{2} a_{m,t}^\top & 0_9^\top & 0_3^\top & 0
    \end{bmatrix},
\end{equation}
with $a_{m,t}=[0_{3 \times 9}~I_3]a_m$, $\Phi=\sum_{k=1}^3 U_k^\top A S_k$, and $\Psi=[U_1^\top a_{m,t}~U_2^\top a_{m,t}~U_3^\top a_{m,t}]$, with $A=[[a_m]_{1:3}~[a_m]_{4:6}~[a_m]_{7:9}]$ and 
\begin{equation*}
    U_1=[I_3~0_{3 \times 6}],~ U_2=[0_{3 \times 3}~I_3~0_{3 \times 3}],~U_3=[0_{3 \times 6}~I_3],
\end{equation*}
\begin{eqnarray}\label{eq:Sicompound}
\begin{cases}
    S_1 & =\begin{bmatrix}
        1 & 0 & 0_7^\top \\
        0_3^\top & 1 & 0_5^\top \\
        0_6^\top & 1 & 0_2^\top
    \end{bmatrix},  
    S_2  =\begin{bmatrix}
        0 & 1 & 0_7^\top \\
        0_4^\top & 1 & 0_4^\top \\
        0_7^\top & 1 & 0
    \end{bmatrix},  \\
    S_3 & =\begin{bmatrix}
        0_2^\top & 1 & 0_6^\top \\
        0_5^\top & 1 & 0_3^\top \\
        0_7^\top & 0 & 1
    \end{bmatrix}, 
    \end{cases}
\end{eqnarray}
and 
\begin{equation}\label{eq:FiGicompound}
  \begin{split}
        F_i &=
\begin{bmatrix}
0_{24\times 24} & \begin{matrix}\tfrac12[A_1]_i^\top\\[2pt] 0_{12}\end{matrix}\\
\begin{matrix}\tfrac12[A_1]_i & 0_{12}^\top\end{matrix} & -[b_1]_i 
\end{bmatrix}, \\
G_i &=
\begin{bmatrix}
0_{24\times 24} &
\begin{matrix}
0_{12}\\
\tfrac12 [A_2]_i^\top
\end{matrix}
\\[8pt]
\begin{matrix}
0_{12}^\top & \tfrac12 [A_2]_i
\end{matrix}
& -[b_2]_i
\end{bmatrix}.
  \end{split}
\end{equation}
The time complexity to solve the SDP using interior-point algorithms is $O(n^{3.5}b_{1,T}+
n^{2.5}b_{1,T}^{2}+n^{0.5}b_{1,T}^{3}+n^{3.5}b_{2,T}+
n^{2.5}b_{2,T}^{2}+n^{0.5}b_{2,T}^{3})$~\cite{nesterov1994interior} ($n=25$ in this problem). Similarly, this SDP can be tightened using the method in~\cite{dumbgen2024toward} and solved by off-the-shelf solvers like \texttt{MOSEK}.
\begin{theorem} \label{the:direct_compound_conservatism}
    The set $\{T_3\in {\rm SE}(3)\mid { A}_{3} x(T_3) \leq \hat{ b}_{3}\}$ is a guaranteed approximation for the original set $\mathcal T_3$.
\end{theorem}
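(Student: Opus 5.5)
The argument mirrors the proof of Theorem~\ref{the:forward_conservatism}: first show that the exact offsets $b_3$ from~\eqref{eqn:QCQP_odometry} give an enclosure, then show that $\hat b_3\geq b_3$ because~\eqref{eqn:SDP_relaxation2} is a relaxation of~\eqref{eqn:QCQP_odometry}. Here $\mathcal T_3=\mathcal T_1\cdot\mathcal T_2$ by Lemma~\ref{T_compound_iff_set_product}.

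\emph{Step 1 (exact offsets give an enclosure).} Take any $T_3\in\mathcal T_1\cdot\mathcal T_2$. Then $T_3=T_1T_2$ for some $T_1,T_2\in{\rm SE}(3)$ with $A_1x(T_1)\leq b_1$ and $A_2x(T_2)\leq b_2$. So $(T_1,T_2,T_3)$ is feasible for~\eqref{eqn:QCQP_odometry}, which gives $a_m^\top x(T_3)\leq[b_3]_m$ for every $m$. Since $T_3\in{\rm SE}(3)$, it follows that $\mathcal T_3\subseteq\{T\in{\rm SE}(3)\mid A_3x(T)\leq b_3\}$.

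\emph{Step 2 (relaxation bound).} Given a feasible pair $(T_1,T_2)$, set $y={\rm col}(x(T_1),x(T_2),1)$ and $X=yy^\top$. I will check that $X$ is feasible for~\eqref{eqn:SDP_relaxation2} and that $\tr(Q_{m,0}X)=a_m^\top x(T_1T_2)$.

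Feasibility:
\begin{itemize}
\item $X\succeq0$ and $[X]_{25,25}=1$ hold by construction.
\item Each $\tr(Q_iX)$ equals $y^\top Q_iy$. The matrices $\tilde Q_1,\tilde Q_2,\tilde Q_3$ together with the $-1$ corner encode that the columns of $R_1$ (resp.\ $R_2$) have unit norm. The matrices $\tilde Q_4,\tilde Q_5,\tilde Q_6$ encode that these columns are mutually orthogonal. All of this holds since $R_1,R_2\in{\rm SO}(3)$.
\item $\tr(F_iX)=[A_1]_ix(T_1)-[b_1]_i\leq0$, and likewise $\tr(G_iX)\leq0$.
\end{itemize}

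Objective: $T_3=T_1T_2$ gives $R_3=R_1R_2$ and $t_3=R_1t_2+t_1$. Hence $a_m^\top x(T_3)$ is a bilinear form in $({\rm vec}(R_1),{\rm vec}(R_2))$ plus a bilinear form in $({\rm vec}(R_1),t_2)$ plus the linear term $a_{m,t}^\top t_1$. I need to verify that $\tfrac12\Phi$ and $\tfrac12\Psi$ in~\eqref{eq:Qm0compound}, placed symmetrically, reproduce exactly these bilinear terms. This means writing ${\rm vec}(R_1R_2)$ through the selectors $U_k$ and $S_k$; the linear term comes from the $\tfrac12 a_{m,t}$ entries in the last row and column.

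Then $[b_3]_m$ is a maximum over a subset of the values attained in~\eqref{eqn:SDP_relaxation2}, so $[b_3]_m\leq[\hat b_3]_m$. Consequently $\mathcal T_3\subseteq\{T\mid A_3x(T)\leq b_3\}\subseteq\{T\mid A_3x(T)\leq\hat b_3\}$.

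\emph{Main obstacle.} The only nontrivial point is the index bookkeeping that certifies $y^\top Q_{m,0}y=a_m^\top x(T_1T_2)$, i.e., that $\Phi=\sum_kU_k^\top AS_k$ matches the column-major vectorization of $R_1R_2$. I would verify it column by column: the $j$-th column of $R_3$ is $R_1$ times the $j$-th column of $R_2$. Conceptually, though, only the inequality $[b_3]_m\leq[\hat b_3]_m$ matters, and the lifting argument gives it.
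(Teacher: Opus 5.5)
Your proposal is correct and takes the same route as the paper. The paper omits this proof and refers to the argument of Theorem~\ref{the:forward_conservatism}: the exact offsets from~\eqref{eqn:QCQP_odometry} enclose $\mathcal T_1\cdot\mathcal T_2$, and $[b_3]_m\le[\hat b_3]_m$ because~\eqref{eqn:SDP_relaxation2} is a relaxation.

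The bookkeeping you flag checks out. $U_k$ picks the $k$-th column of $R_1$ and $S_k$ the $k$-th row of $R_2$, so ${\rm vec}(R_1)^\top\Phi\,{\rm vec}(R_2)=\tr(A^\top R_1R_2)$ and $y^\top Q_{m,0}y=a_m^\top x(T_1T_2)$. Read $[\hat b_3]_m$ as a supremum in $(-\infty,+\infty]$: no constraint in~\eqref{eqn:SDP_relaxation2} bounds the $t_2$ second-moment block, so the value may be infinite, and the inclusion then holds trivially.
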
 
The proof of Theorem~\ref{the:direct_compound_conservatism} is similar to that of Theorem~\ref{the:forward_conservatism} and is omitted here.
 
% Since $x(T_3) \in \mathbb R^{12}$, the facet number $M$ of $\mathcal{P}(A_3,b_3)$ is usually large and solving $M$ SDPs can be computationally inefficient. Here, we also provide a more efficient method to approximate $b_3$. Specifically, we solve~\eqref{eqn:QCQP_odometry} by using the vertices of $\mathcal P(A_1,b_1)$.  One can use the reverse-search algorithm~\cite{avis1992pivoting}
% for the vertex enumeration of the set $\mathcal P(A_1,b_1)\subset \mathbb{R}^{12}$. Denote by $\mathcal{V}=\{v_i\}_{i=1}^{N}$ the set of vertices $\mathcal P(A_1,b_1)$, where $N$ is the number of its vertices. Then, we solve the following problem
% \begin{subequations} \label{eqn:sequential_LPs}
%     \begin{align}
%               [\hat{b}_3]_m = \max_{i \in \{1,\ldots,N\}}\max_{{ T}_2, { T}_3 \in \mathbb R^{3 \times 3}} &~~{ a}_m^\top { x}(T_3) \\
%     {\rm s.t.} &~~ { T}_3= { T}_1 { T}_2 \\
%     &~~ { A}_{2} x(T_2) \leq { b}_{2} \\
%     &~~ { v}_i={{\rm vec}}(T_1) \\
%     &~~ { x}_i={{\rm vec}}(T_i), \ i=2,3.
%       \end{align}
%     \end{subequations}
% The inner problem in~\eqref{eqn:sequential_LPs} is an LP and be solved efficiently, with time complexity $O(n^{2.37}\log n)$ using the algorithm~\cite{cohen2021solving}, where $n$ is the dimension of the decision vector ($n=24$ in our problem). The total computational complexity for solving problem~\eqref{eqn:sequential_LPs} is $O(N n^{2.37}\log n )$.
Our direct pose uncertainty compound algorithm is summarized in Algorithm~\ref{algrithm:direct_compound}.
\begin{algorithm}
	\caption{Direct pose compound}
	\label{algrithm:direct_compound}
	\begin{algorithmic}[1]
		\Statex \textbf{Input}:  $\mathcal P(A_1,b_1)$, $\mathcal P(A_2,b_2)$ for $x(T_1)$, $x(T_2)$ and unit normal matrix $A_3$
        \Statex \textbf{Output}: Polytopic enclosure $\mathcal P(A_3,\hat b_3)$ for $x(T_1T_2)$.
     %   \State Fix $A_3=[a_1,\ldots,a_M]^\top$, where each $a_m$ is a unit normal;
\State Calculate $\{Q_i\}_{i=1}^{12}, \{F_i\}_{i=1}^{b_{1,T}}, \{G_i\}_{i=1}^{b_{2,T}}$ according to \eqref{eq:Qicompound}--\eqref{eq:FiGicompound};
        \For{$m=1:M$} 
\State Calculate $Q_{m,0}$ according to \eqref{eq:Qm0compound};
\State Solve the SDP~\eqref{eqn:SDP_relaxation2} for $[\hat b_3]_m$;
        \EndFor
% \State Obtain vertex set $\mathcal{V}=\{v_i\}_{i=1}^{N}$ of $\mathcal P(A_1,b_1)$ via vertex enumeration;
% \For{$m=1:M$}
% \State Solve problem~\eqref{eqn:sequential_LPs} and obtain $[\hat b_3]_m$;
% \EndFor
	\end{algorithmic}
\end{algorithm}

\subsubsection{Indirect Computation Method}
Since $x(T_3) \in \mathbb R^{12}$, its polytopic template $A_3$ can be complex (i.e., $M$ is large), and solving $M$ SDPs~\eqref{eqn:SDP_relaxation2} may become prohibitive. We instead propose an efficient indirect method. This approach decomposes the polytope $\mathcal P(A,b)$ via set projection into translation $\mathcal P(A_t,b_t)$ and rotation $\mathcal P(A_r,b_r)$ (where $r=\operatorname{vec}(R)$), and then further relax the rotation uncertainty.

%When the template for the polytopic enclosure of $x(T_3)$ is complex (i.e.,~when $M$ is large), solving $M$ SDPs via the direct method~\eqref{eqn:SDP_relaxation2} may become computationally prohibitive for real-time applications. This motivates an efficient indirect compound method. The core idea is to decompose uncertainties for rotation and translation, and compress the rotation uncertainty to one dimension. Specifically, for a polytope $\mathcal P(A,b)$ for ${ x}(T)$, by set projection, we can decompose uncertainties for rotation and translation and obtain $\mathcal P(A_r,b_r)$ for $r={\rm vec}({ R})$ and $\mathcal P(A_t,b_t)$ for $t$. 

%Instead of studying the polytope $\mathcal P(A_r,b_r)$, we introduce the set  $\mathcal X :=\{{\rm Exp}(\theta { w}) \mid { w} \in \mathbb S^2, \theta \in [0, \theta']\}$, which is defined by one-dimensional uncertainty $\theta$. The following proposition gives a characterization of $\mathcal X$ that over-approximates the set $\mathcal P(A_r,b_r)$. 

We begin with over-approximating $\mathcal P(A_r,b_r)$ using the polytopic set $\mathcal X :=\{{\rm Exp}(\theta { w}) \mid { w} \in \mathbb S^2, \theta \in [0, \theta']\}$ (defined by one-dimensional uncertainty $\theta$). The following proposition characterizes this set.

\begin{proposition} \label{lemma:single_theta_conservatism}
   Consider the polytope $\mathcal P(A_r,b_r)$ where $A_r=[a_1, \ldots,a_M]^\top$ and $b_r=[b_1, \ldots,b_M]^\top$.  Then, the set $\bar R \mathcal X$ is a guaranteed approximation for the set $\{R \in {\rm SO}(3) \mid {\rm vec}(R) \in \mathcal P(A_r,b_r)\}$, where $\mathcal X :=\{{\rm Exp}(\theta { w}) \mid { w} \in \mathbb S^2, \theta \in [0, \theta']\}$. Here, $\bar R \in {\rm SO}(3)$ is obtained from  $\bar r$ via inverse vectorization and singular value decomposition-based ${\rm SO}(3)$ projection~\cite{zeng2025bias}  and  
   $\theta'=\cos^{-1}\left(\max\{\frac{c^*-1}{2},-1\}\right)$. In particular, $\bar r$ is the Chebyshev center of $\mathcal P(A_r,b_r)$,
and $c^*$ is obtained by solving the problem 
 \begin{subequations} \label{eqn:SDP_relaxation3}
      \begin{align}
           c^*=  \max_{X \in \mathbb S^{10}} &~~ \tr (Q_{0}X) \\
    {\rm s.t.} &~~ X \succeq 0, \     [X]_{10,10}=1, \\
    &~~ \tr(Q_iX)=0,i=1,\ldots,6, \\
    &~~ \tr(F_iX)\leq0,i=1,\ldots,M.
      \end{align}
    \end{subequations}
    The data matrices $Q_0, Q_i, F_i$ are provided as
    \begin{equation} \label{eqn:matrix1_sdp3}
    Q_0=\begin{bmatrix}
    0_{9 \times 9} & \frac{1}{2} \bar r \\
    \frac{1}{2} \bar r^\top & 0
\end{bmatrix},~F_i=\begin{bmatrix}
    0_{9 \times 9} & \frac{1}{2} [A_r]_i^\top \\
    \frac{1}{2} [A_r]_i & -[b_r]_i
\end{bmatrix},
\end{equation}
\begin{equation} \label{eqn:matrix2_sdp3}
    Q_i=\begin{bmatrix}
    \tilde Q_i & 0_9 \\
    0_9^\top & -1
\end{bmatrix}, i=1,2,3, ~Q_i=\begin{bmatrix}
    \tilde Q_i & 0_9 \\
    0_9^\top & 0
\end{bmatrix}, i=4,5,6,
\end{equation}
where $\tilde Q_i,i=1,\ldots,6$ are the same as in~\eqref{eq:Qtildeforward}.
\end{proposition}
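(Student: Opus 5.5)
The plan is to reduce containment in $\bar R\mathcal X$ to a bound on a trace, and then to obtain that bound from the SDP~\eqref{eqn:SDP_relaxation3} as a relaxation, in the same way as in the proof of Theorem~\ref{the:forward_conservatism}. One step, the direction of the trace bound, is not resolved, and the argument stands or falls on it.

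\textbf{Reduction to a trace bound.} First, $R\in\bar R\mathcal X$ holds if and only if $\bar R^\top R={\rm Exp}(\theta w)$ for some $w\in\mathbb S^2$ and $\theta\in[0,\theta']$. By Rodrigues' formula, $\tr({\rm Exp}(\theta w))=1+2\cos\theta$, so this is equivalent to ${\rm dis}(R,\bar R)\le\theta'$. Since $\cos^{-1}$ is decreasing on $[-1,1]$, the statement reduces to showing, for every $R\in{\rm SO}(3)$ with ${\rm vec}(R)\in\mathcal P(A_r,b_r)$,
\begin{equation*}
\frac{\tr(R\bar R^\top)-1}{2}\ \ge\ \max\Big\{\frac{c^*-1}{2},-1\Big\}.
\end{equation*}
The case in which the maximum equals $-1$ is trivial, since then $\theta'=\pi$ and $\mathcal X={\rm SO}(3)$.

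\textbf{Relaxation step.} For any feasible $R$, set $y={\rm col}({\rm vec}(R),1)$ and $X=yy^\top$. Then $X\succeq0$ and $[X]_{10,10}=1$. The constraints $\tr(Q_iX)=0$, $i=1,\ldots,6$, encode $R^\top R=I_3$ (the diagonal blocks minus $1$, and the off-diagonal inner products of columns). The constraints $\tr(F_iX)\le0$ encode $[A_r]_i{\rm vec}(R)\le[b_r]_i$. So every such lift is feasible for~\eqref{eqn:SDP_relaxation3}, and $\tr(Q_0X)=\langle\bar r,{\rm vec}(R)\rangle\le c^*$. In other words, $c^*$ controls the alignment of every admissible rotation with the Chebyshev center $\bar r$.

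\textbf{Main obstacle.} The relaxation step bounds $\langle\bar r,{\rm vec}(R)\rangle$ from \emph{above}, but the reduction needs a \emph{lower} bound on $\tr(R\bar R^\top)=\langle{\rm vec}(\bar R),{\rm vec}(R)\rangle$. Bridging this gap is the main difficulty, and it must also handle the replacement of $\bar r$ by its ${\rm SO}(3)$ projection $\bar R$.

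My first attempt would go through the Chebyshev ball. Let $\rho$ be the Chebyshev radius of $\mathcal P(A_r,b_r)$. Since $\|{\rm vec}(R)\|^2=3$, the inequality $\|{\rm vec}(R)-\bar r\|^2\le\rho^2$ gives
\begin{equation*}
\langle\bar r,{\rm vec}(R)\rangle\ \ge\ \tfrac12\big(3+\|\bar r\|^2-\rho^2\big),
\end{equation*}
which is a lower bound depending only on $\bar r$ and $\rho$. I would then use the SVD characterization of $\bar R$ as the maximizer of $\langle\bar r,{\rm vec}(\cdot)\rangle$ over ${\rm SO}(3)$ to compare $\langle{\rm vec}(\bar R),{\rm vec}(R)\rangle$ with $\langle\bar r,{\rm vec}(R)\rangle$. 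The aim would be to show that this lower bound dominates $c^*$ in the sense required by the reduction.

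I am not confident this comparison closes. As worded, the inequality points in the opposite direction to the one the SDP certifies, so the argument may in fact require the SDP with a minimization, or with $Q_0$ built from ${\rm vec}(\bar R)$. If the step fails, I would record exactly which sign or normalization is needed rather than claim the bound.
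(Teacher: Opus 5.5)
Your reduction of $R\in\bar R\mathcal X$ to ${\rm dis}(R,\bar R)\le\theta'$ is correct, and so is your lifting step. Your diagnosis is also correct. With $\max$ and $Q_0$ built from $\bar r$, the SDP only certifies the upper bound $\langle\bar r,{\rm vec}(R)\rangle\le c^*$, and the statement taken literally is false. Take $\mathcal P(A_r,b_r)=\{x:\|x-{\rm vec}(R_0)\|_\infty\le\varepsilon\}$. Then $\bar r={\rm vec}(R_0)$ and $\bar R=R_0$. The lift of $R_0$ is feasible, so $c^*\ge 3$. Hence $\theta'=0$ (or $\cos^{-1}$ is undefined when $c^*>3$), and $\bar R\mathcal X=\{R_0\}$, which misses the feasible rotations $R_0{\rm Exp}(\delta w)$ with $0<\delta\ll\varepsilon$. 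So no bridging argument exists, and the Chebyshev-ball detour cannot work. The bound $\tfrac12(3+\|\bar r\|^2-\rho^2)$ does not involve $c^*$ at all. Since $c^*$ dominates $\langle\bar r,{\rm vec}(R)\rangle$ for every feasible $R$, no lower bound on that inner product can dominate $c^*$, except in degenerate cases.

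The missing piece is to commit to the corrected formulation. This is the route the paper's proof takes, though it does not reconcile that route with the displayed SDP. The proof introduces a minimizer $R^*$ of $\tr(R\bar R^\top)$ over $\{R\in{\rm SO}(3): A_r{\rm vec}(R)\le b_r\}$. It asserts that the SDP relaxes this minimization, and concludes $c^*\le\tr(R^*\bar R^\top)\le\tr(R\bar R^\top)$ for every feasible $R$. That assertion holds only for the SDP written with $\min$ \emph{and} with ${\rm vec}(\bar R)$ in place of $\bar r$ in $Q_0$.

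Both changes are needed, not one or the other as you wrote. A $\min$ with $\bar r$ gives only $c^*\le\min\langle\bar r,{\rm vec}(R)\rangle$, which is not comparable to $\tr(R\bar R^\top)$ in general (e.g.\ $\bar r=s\,{\rm vec}(\bar R)$ with $s>1$).

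With both corrections, your lifting argument closes the proof immediately:
\begin{itemize}
\item Each feasible $R$ lifts to a feasible $X$ with $\tr(Q_0X)=\tr(R\bar R^\top)$, so $c^*\le\tr(R\bar R^\top)\in[-1,3]$.
\item Therefore $\max\{\frac{c^*-1}{2},-1\}$ lies in $[-1,1]$ and is at most $\frac{\tr(R\bar R^\top)-1}{2}$.
\item Monotonicity of $\cos^{-1}$ then gives ${\rm dis}(R,\bar R)\le\theta'$.
\end{itemize}
The equality constraints encode only $R^\top R=I_3$, i.e.\ ${\rm O}(3)$ rather than ${\rm SO}(3)$, but this is harmless for a relaxation. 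You should state and prove this corrected version rather than leave the step open.
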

\begin{proof}
    Since $\bar r$ is the Chebyshev center of $\mathcal P(A_r,b_r)$, $\bar R$ can be interpreted as an approximate center of $\{R \in {\rm SO}(3) \mid {\rm vec}(R) \in \mathcal P(A_r,b_r)\}$.
On the other hand, let $R^*$ be the optimal solution to the problem 
\begin{subequations} \label{eqn:max_theta'}
        \begin{align}
             \min_{R \in {\rm SO}(3)} &~~\tr(R \bar R^\top) \\
    {\rm s.t.} &~~ A_r {\rm vec}({ R}) \leq b_r.
      \end{align}
\end{subequations}
For any $R \in \{R \in {\rm SO}(3) \mid {\rm vec}(R) \in \mathcal P(A_r,b_r)\}$, we have $\tr(R^*\bar R^\top)\leq \tr(R\bar R^\top)$. That is ${\rm dis}(R,\bar R) \leq {\rm dis}(R^*,\bar R)$. Moreover, Problem~\eqref{eqn:SDP_relaxation3} is actually an SDP relaxation of~\eqref{eqn:max_theta'}. Hence, $c^*\leq \tr(R^* \bar R^\top)$, which implies $\theta'=\cos^{-1}\left(\max\{\frac{c^*-1}{2},-1\}\right) \geq {\rm dis}(R^*,\bar R)\geq {\rm dis}(R,\bar R)$.
This means that $R \in \bar R \mathcal X$ and completes the proof.
\end{proof}

Now let us consider the pose compound $T_3=T_1 T_2$, as defined before. Suppose that the uncertainties on the pose $T_i$ are characterized by the  set $\bar{ R}_i \mathcal X_i$ on the rotation $R_i$
and the  set $\mathcal P(A_{t_i}, b_{t_i})$ on the translation $t_i$, where $\mathcal X_i =\{{\rm Exp}(\theta_i { w}) \mid { w} \in \mathbb S^2, \theta_i \in [0, \theta_i']\}$.  According to the pose compound operation, we have that $R_3=R_1R_2$ and ${ t}_3={ R}_1{ t}_2+{ t}_1$. From a set-valued perspective, we have
\begin{align*}
    { R}_3  \in (\bar { R}_1 \mathcal X_1 \bar { R}_2) \cdot \mathcal X_2, \ 
    { t}_3  \in \bar { R}_1 \mathcal X_1 \cdot \mathcal P(A_{t_2},  b_{t_2})+\mathcal P(A_{t_1}, b_{t_1}).
\end{align*}

Next we develop an efficient method to compute tight uncertainty sets $\bar { R}_3 \mathcal X_3=\{\bar R_3{\rm Exp}(\theta_3 { w}) \mid { w} \in \mathbb S^2, \theta_3 \in [0, \theta_3']\}$ on $R_3$ and $\mathcal P(A_{t_3}, b_{t_3})=\{t \in \mathbb R^3 \mid A_{t_3} t \leq b_{t_3}\}$ on $t_3$.
The following lemma gives the exact characterization of $\bar { R}_3 \mathcal X_3$. 
\begin{lemma} \label{the:R_propagation}
   Let  $\theta_3'=\theta_1'+\theta_2'$
   and $\bar { R}_3=\bar { R}_1 \bar { R}_2$. Then, the uncertainty set $\bar { R}_3  \mathcal X_{3}$ is equivalent to $(\bar { R}_1 \mathcal X_1 \bar { R}_2) \cdot \mathcal X_2$, i.e., $\bar { R}_3  \mathcal X_{3}=(\bar { R}_1 \mathcal X_1 \bar { R}_2) \cdot \mathcal X_2$.
\end{lemma}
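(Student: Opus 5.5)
The plan is to reduce the claim to a statement about geodesic balls around the identity in ${\rm SO}(3)$, and then prove that the product of two such balls is the ball whose radius is the sum of the two radii. The starting observation is that, for $\theta\in[0,\pi]$ and $w\in\mathbb S^2$, Rodrigues' formula gives $\tr({\rm Exp}(\theta w))=1+2\cos\theta$. Hence ${\rm dis}({\rm Exp}(\theta w),I_3)=\theta$. Since every rotation can be written as ${\rm Exp}(\theta w)$ with $\theta\in[0,\pi]$, we get
\[
\mathcal X_i=\{R\in{\rm SO}(3)\mid {\rm dis}(R,I_3)\le \min\{\theta_i',\pi\}\}.
\]
This holds for $i=1,2,3$. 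If $\theta_3'\ge\pi$, then $\mathcal X_3={\rm SO}(3)$.

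\textbf{Step 1 (removing the conjugation by $\bar R_2$).} Write $\bar R_1\mathcal X_1\bar R_2=\bar R_1\bar R_2(\bar R_2^\top\mathcal X_1\bar R_2)$. The identity $R^\top{\rm Exp}(\theta w)R={\rm Exp}(\theta R^\top w)$ holds because $R^\top w^\wedge R=(R^\top w)^\wedge$ and $\exp$ commutes with conjugation. Since $R^\top w$ ranges over all of $\mathbb S^2$ as $w$ does, we get $\bar R_2^\top\mathcal X_1\bar R_2=\mathcal X_1$. Consequently
\[
(\bar R_1\mathcal X_1\bar R_2)\cdot\mathcal X_2=\bar R_3\,(\mathcal X_1\cdot\mathcal X_2),
\]
and it remains to show $\mathcal X_1\cdot\mathcal X_2=\mathcal X_3$.

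\textbf{Step 2 (inclusion $\mathcal X_1\cdot\mathcal X_2\subseteq\mathcal X_3$).} I would use two facts about ${\rm dis}$. First, it is bi-invariant, since $\tr(R_1R_2^\top)$ is unchanged under $R_i\mapsto UR_iV$. Second, it satisfies the triangle inequality, because it is the Riemannian distance of the bi-invariant metric on ${\rm SO}(3)$. With these, for $R_1\in\mathcal X_1$ and $R_2\in\mathcal X_2$,
\[
{\rm dis}(R_1R_2,I_3)\le {\rm dis}(R_1R_2,R_2)+{\rm dis}(R_2,I_3)={\rm dis}(R_1,I_3)+{\rm dis}(R_2,I_3)\le\theta_1'+\theta_2'.
\]
This places $R_1R_2$ in $\mathcal X_3$.

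\textbf{Step 3 (inclusion $\mathcal X_3\subseteq\mathcal X_1\cdot\mathcal X_2$).} Take $R={\rm Exp}(\theta w)$ with $\theta\le\min\{\theta_1'+\theta_2',\pi\}$. Choose $\alpha=\min\{\theta,\theta_1'\}$ and $\beta=\theta-\alpha$, so that $\alpha\le\theta_1'$ and $0\le\beta\le\theta_2'$. Rotations about the same axis commute, so ${\rm Exp}(\alpha w){\rm Exp}(\beta w)={\rm Exp}(\theta w)$, which exhibits $R$ as an element of $\mathcal X_1\cdot\mathcal X_2$. The case $\theta_1'+\theta_2'>\pi$ is covered automatically, since then every rotation has $\theta\le\pi$ and is split the same way.

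\textbf{Main obstacle.} The main step is justifying the triangle inequality for ${\rm dis}$ in Step 2. I would cite the standard fact that ${\rm dis}$ is the geodesic distance of the bi-invariant metric. Alternatively, I would use the quaternion representation: there ${\rm dis}(R,I_3)$ is twice the angle between the unit quaternion and $\pm 1$ on $S^3$, and the triangle inequality follows from the spherical one. A side issue is the convention $\theta_i'\le\pi$; as noted above, this only means $\mathcal X_3$ saturates to ${\rm SO}(3)$, and the equality still holds.
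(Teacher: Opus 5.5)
Your proposal is correct and follows essentially the same route as the paper. Conjugation invariance of $\mathcal X_1$ rewrites the set as $\bar R_3\,\mathcal X_1\cdot\mathcal X_2$, and then $\mathcal X_1\cdot\mathcal X_2=\mathcal X_3$ follows from same-axis splitting for one inclusion and the triangle inequality for ${\rm dis}$ for the other. You also make explicit two points the paper leaves implicit: the geodesic-ball description of $\mathcal X_i$, including saturation to ${\rm SO}(3)$ when $\theta_3'\ge\pi$, and the identity $R^\top{\rm Exp}(\theta w)R={\rm Exp}(\theta R^\top w)$.
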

\begin{proof}
    The first supporting lemma is 
\begin{lemma} \label{lemma:rotation_commute}
    Let $\mathcal X =\{{\rm Exp}(\theta { w}) \mid { w} \in \mathbb S^2, \theta \in [0, \theta']\}$, then for any ${ R} \in {\rm SO}(3)$, we have ${ R}\mathcal X=\mathcal X{ R}$.
\end{lemma}
\begin{proof}
    It is equivalent to prove ${ R}\mathcal X{ R}^\top=\mathcal X$. For any ${ X} \in \mathcal X$, let ${ X}'={ R}^\top{ X} { R}$. We have 
    \begin{equation*}
        {\rm dis}({ X}',{ I})={\rm dis}({ X},{ I})\leq \theta',
    \end{equation*}
    which implies ${ X}' \in \mathcal X$. Hence, ${ X}={ R}{ X}'{ R}^\top \in { R}\mathcal X{ R}^\top$, that is, $\mathcal X \subseteq { R}\mathcal X{ R}^\top$. 

    For any ${ X} \in { R}\mathcal X{ R}^\top$, there exists an ${ X}' \in \mathcal X$ such that ${ X}={ R}{ X}'{ R}^\top$. We have 
     \begin{equation*}
        {\rm dis}({ X},{ I})={\rm dis}({ X}',{ I})\leq \theta',
    \end{equation*}
    which implies ${ X} \in \mathcal X$. Therefore, ${ R}\mathcal X{ R}^\top \subseteq \mathcal X$, which completes the proof. 
\end{proof}

Based on Lemma~\ref{lemma:rotation_commute}, we can rewrite $(\bar { R}_1 \mathcal X_1 \bar { R}_2) \cdot \mathcal X_2$ as $\bar { R}_3 \mathcal X_1  \cdot \mathcal X_2$. The second supporting lemma is

\begin{lemma} \label{lemma:minkowski_product}
    It holds that $\mathcal X_{3}=\mathcal X_1 \cdot \mathcal X_2$.
\end{lemma}
\begin{proof}
    For any ${ X}_{3} \in \mathcal X_{3}$, we can express it as ${ X}_{3}={\rm Exp}(\theta_{3} { w})$ for some ${ w} \in \mathbb S^2, \theta_{3} \in [0, \theta_1'+\theta_2']$. Then, there exist $\theta_{1} \in [0, \theta_1']$ and $\theta_{2} \in [0, \theta_2']$ such that $\theta_{1}+\theta_{2}=\theta_{3}$. Let ${ X}_{1}={\rm Exp}(\theta_{1} { w})$ and ${ X}_{2}={\rm Exp}(\theta_{2} { w})$, we have 
    \begin{equation*}
        { X}_{1}{ X}_{2}={\rm Exp}((\theta_{1}+\theta_{2}) { w})={\rm Exp}(\theta_{3} { w})={ X}_{3},
    \end{equation*}
    which implies ${ X}_{3} \in \mathcal X_{1} \cdot \mathcal X_{2}$, and thus $\mathcal X_{3} \subseteq \mathcal X_{1} \cdot \mathcal X_{2}$.

    For any ${ X}_{3} \in \mathcal X_{1} \cdot \mathcal X_{2}$, there exist ${ X}_{1} \in \mathcal X_{1} $ and ${ X}_{2} \in \mathcal X_{2} $ such that ${ X}_{3}={ X}_{1}{ X}_{2}$. We have 
    \begin{align*}
        {\rm dis}({ X}_{1}{ X}_{2},{ I}) & = {\rm dis}({ X}_{1},{ X}_{2}^\top) \\
        & \leq {\rm dis}({ X}_{1},{ I}) + {\rm dis}({ I},{ X}_{2}^\top) \\
        & \leq \theta_1'+\theta_2',
    \end{align*}
    where the first inequality is based on the triangle inequality~\cite{lee2018introduction}. Therefore, ${ X}_{3}={ X}_{1}{ X}_{2} \in \mathcal X_{3}$, indicating $\mathcal X_{1} \cdot \mathcal X_{2} \subseteq \mathcal X_{3}$ and completing the proof.
\end{proof}
Based on Lemma~\ref{lemma:minkowski_product}, we have $\bar { R}_3  \mathcal X_{1} \cdot \mathcal X_{2}=\bar { R}_3  \mathcal X_{3}$, which completes the proof of Lemma~\ref{the:R_propagation}.
\end{proof}

The following lemma gives a polytopic overapproximation of the uncertainty set $\mathcal P(A_{t_3}, b_{t_3})$ over $t_3$. 

\begin{lemma} \label{lemma:indirect_t3}
   Given a matrix $A=[a_1,\ldots,a_M]^\top$ where each  $a_m$ is a normal vector, the  set $\mathcal P(A_{t_3}, b_{t_3})=\mathcal P(A\bar R_1^\top,b)+\mathcal P(A_{t_1},b_{t_1})$ is a guaranteed approximation for the uncertainty set on $t_3$, i.e., $\bar { R}_1 \mathcal X_1 \cdot \mathcal P(A_{t_2}, b_{t_2})+\mathcal P(A_{t_1}, b_{t_1})$. Here, 
\begin{equation} \label{eqn:max_t_opt2}
    [b]_m=\max_{{ v} \in \mathcal V_{2}} \left\{\max_{\theta_1 \in[0, \theta_1'], { w}\in \mathbb S^2} \langle {\rm Exp}(\theta_1 { w}) { a}_m, { v}\rangle\right\},
\end{equation}
where $\mathcal V_{2}$ is  the vertex set of $\mathcal P(A_{t_2}, b_{t_2})$. 
\end{lemma}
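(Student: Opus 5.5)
Since the target set is a Minkowski sum $\bar R_1\mathcal X_1\cdot\mathcal P(A_{t_2},b_{t_2})+\mathcal P(A_{t_1},b_{t_1})$, it suffices to show that the first summand $\mathcal Y:=\bar R_1\mathcal X_1\cdot\mathcal P(A_{t_2},b_{t_2})=\{\bar R_1 X t_2\mid X\in\mathcal X_1,\ t_2\in\mathcal P(A_{t_2},b_{t_2})\}$ is contained in $\mathcal P(A\bar R_1^\top,b)$. Minkowski sums preserve inclusion: $\mathcal Y\subseteq\mathcal P(A\bar R_1^\top,b)$ implies $\mathcal Y+\mathcal P(A_{t_1},b_{t_1})\subseteq\mathcal P(A\bar R_1^\top,b)+\mathcal P(A_{t_1},b_{t_1})$. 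So the plan is to show that every row inequality $a_m^\top\bar R_1^\top y\leq[b]_m$ holds for every $y\in\mathcal Y$.

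Take an arbitrary $y=\bar R_1Xt_2\in\mathcal Y$ with $X={\rm Exp}(\theta_1 w)$, $\theta_1\in[0,\theta_1']$, $w\in\mathbb S^2$, and $t_2\in\mathcal P(A_{t_2},b_{t_2})$. Then
\[
a_m^\top\bar R_1^\top y=a_m^\top X t_2=\langle X^\top a_m,t_2\rangle .
\]
Here $X^\top={\rm Exp}(-\theta_1 w)={\rm Exp}(\theta_1(-w))$ and $-w\in\mathbb S^2$, so $X^\top$ belongs to the same parametrized family. Hence
\[
a_m^\top\bar R_1^\top y\leq\max_{\theta_1\in[0,\theta_1'],\,w\in\mathbb S^2}\langle{\rm Exp}(\theta_1 w)a_m,t_2\rangle=:g_m(t_2).
\]
This symmetry of $\mathcal X_1$ under transposition is a point to state explicitly, because \eqref{eqn:max_t_opt2} is written with ${\rm Exp}(\theta_1 w)a_m$ rather than its transpose. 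Alternatively, one could cite Lemma~\ref{lemma:rotation_commute}, or simply note that ${\rm dis}(X^\top,I)={\rm dis}(X,I)$.

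Next I pass from $t_2$ to the vertices $\mathcal V_2$, in the spirit of Lemma~\ref{lemma:enumerate_vertex}. For each fixed $(\theta_1,w)$, the map $t_2\mapsto\langle{\rm Exp}(\theta_1 w)a_m,t_2\rangle$ is linear. So $g_m$, as a pointwise supremum of linear functions, is convex in $t_2$. The maximum of a convex function over a polytope is attained at a vertex. Writing $t_2=\sum_i\lambda_i v_i$ as a convex combination of vertices gives $g_m(t_2)\leq\sum_i\lambda_i g_m(v_i)\leq\max_{v\in\mathcal V_2}g_m(v)=[b]_m$. Combining the two displays yields $a_m^\top\bar R_1^\top y\leq[b]_m$ for all $m$, i.e.\ $y\in\mathcal P(A\bar R_1^\top,b)$.

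The main obstacle is simply getting the bookkeeping right. First, the transpose/inverse issue in the inner product must be handled as in the first step. Second, the inner maximum in \eqref{eqn:max_t_opt2} is attained: $[0,\theta_1']\times\mathbb S^2$ is compact and the objective is continuous, so $[b]_m$ is finite and well defined. Finally, the unit-normal assumption on $a_m$ is not actually needed for containment; it matters only for tightness, so I will remark on that rather than use it.
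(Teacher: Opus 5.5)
Your proposal is correct and follows essentially the same route as the paper. You use the $w\mapsto -w$ symmetry of $\mathcal X_1$ to rewrite $a_m^\top {\rm Exp}(\theta_1 w)\, t_2$ as $\langle {\rm Exp}(\theta_1 w)a_m, t_2\rangle$, reduce $t_2$ to the vertices of $\mathcal P(A_{t_2},b_{t_2})$, and finish by monotonicity of the Minkowski sum. The differences are cosmetic: you absorb $\bar R_1$ directly instead of first bounding $\mathcal X_1\cdot\mathcal P(A_{t_2},b_{t_2})$ by $\mathcal P(A,b)$ and then rotating, and you justify the vertex reduction via convexity of the pointwise supremum rather than by swapping the order of maximization.
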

\begin{proof}
    It is straightforward that the polytope $\mathcal P(A,b)$ encloses the set $\mathcal X_1 \cdot \mathcal P(A_{t_2},b_{t_2})$ if $[b]_m$ is solving by 
    \begin{subequations} \label{eqn:max_t_opt}
      \begin{align}
             [b]_m=\max_{\theta_1, { w},  { t}_2} &~~{ a}_m^\top {\rm Exp}(\theta_1 { w}) { t}_2 \\
    {\rm s.t.} &~~ \theta_1 \in [0,\theta_1'], { w} \in \mathbb S^2, { t}_2 \in \mathcal P(A_{t_2}, b_{t_2}).
      \end{align}
    \end{subequations}
    Since $ w$ spans over the whole $\mathbb S^2$, we can rewrite the objective in~\eqref{eqn:max_t_opt} as ${ a}_m^\top {\rm Exp}(\theta_1 { w})^\top { t}_2=\langle {\rm Exp}(\theta_1 { w}) { a}_m, { t}_2\rangle$. In addition, $\langle {\rm Exp}(\theta_1 { w}) { a}_m, { t}_2\rangle$ achieves its maximum value at one of the vertices of $\mathcal P(A_{t_2}, b_{t_2})$. Therefore, Problem~\eqref{eqn:max_t_opt} is equivalent to~\eqref{eqn:max_t_opt2}. As a result, the uncertainty set $\mathcal P(A_{t_3}, b_{t_3})=\mathcal P(A\bar R_1^\top,b)+\mathcal P(A_{t_1},b_{t_1})$ is a guaranteed approximation for the uncertainty set $\bar { R}_1 \mathcal X_1 \cdot \mathcal P(A_{t_2}, b_{t_2})+\mathcal P(A_{t_1}, b_{t_1})$, which completes the proof.
\end{proof}

As illustrated in Figure~\ref{fig:maximum_inner_product}, the inner maximization in~\eqref{eqn:max_t_opt2} has a closed-form solution:
\begin{equation} \label{eqn:inner_value}
\begin{split}
    & \max_{\theta_1 \in[0, \theta_1'], { w}\in \mathbb S^2} \langle {\rm Exp}(\theta_1 { w}) { a}_m, { v}\rangle \\
    & =\begin{cases}
        \|{ v}\|, ~ \text{if} ~~\angle({ a}_m, { v}) \leq \theta_1',  \\
        \|{ v}\|\cos(\angle({ a}_m, { v})-\theta_1'), ~ \text{if} ~~\angle({ a}_m, { v}) > \theta_1',
    \end{cases}
\end{split}
\end{equation}
where $\angle({ a}_m, { v})$ denotes the angle between $a_m$ and $v$. This enables efficient set computation. 

The above result ensures a \emph{guaranteed} result for the uncertainty set on $T_3$ via pose compound. 
\begin{theorem} \label{the:indirect_conservatism}
    The set $\{T_3 \in {\rm SE}(3) \mid R_3 \in \bar R_3 \mathcal X_3,t_3 \in \mathcal P(A_{t_3}, b_{t_3})\}$ is a guaranteed approximation for the set $\mathcal T_3$.
\end{theorem}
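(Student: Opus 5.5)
The plan is to chain the three containment results already established, following the structure of the proofs of Theorems~\ref{the:forward_conservatism} and~\ref{the:direct_compound_conservatism}. Here $\mathcal T_3=\mathcal T_1\cdot\mathcal T_2$ by Lemma~\ref{T_compound_iff_set_product}. So I take an arbitrary $T_3\in\mathcal T_3$, write $T_3=T_1T_2$ with $T_i\in\mathcal T_i$, and show that $R_3=R_1R_2\in\bar R_3\mathcal X_3$ and $t_3=R_1t_2+t_1\in\mathcal P(A_{t_3},b_{t_3})$.

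\textbf{Step 1: rotation--translation decomposition.} From $A_i x(T_i)\le b_i$ we get $x(T_i)\in\mathcal P(A_i,b_i)$. By definition of set projection, ${\rm vec}(R_i)\in\pi_{\mathbb R^9}(\mathcal P(A_i,b_i))=\mathcal P(A_{r_i},b_{r_i})$ and $t_i\in\mathcal P(A_{t_i},b_{t_i})$. Since $R_i\in{\rm SO}(3)$, Proposition~\ref{lemma:single_theta_conservatism} gives $R_i\in\bar R_i\mathcal X_i$. It is important that this step uses exactly the $\bar R_i$ and $\theta_i'$ produced there.

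\textbf{Step 2: rotation.} We have $R_3=R_1R_2\in\bar R_1\mathcal X_1\bar R_2\mathcal X_2=(\bar R_1\mathcal X_1\bar R_2)\cdot\mathcal X_2$. Lemma~\ref{the:R_propagation}, with $\bar R_3=\bar R_1\bar R_2$ and $\theta_3'=\theta_1'+\theta_2'$, identifies this set with $\bar R_3\mathcal X_3$.

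\textbf{Step 3: translation.} We have $t_3=R_1t_2+t_1\in\bar R_1\mathcal X_1\cdot\mathcal P(A_{t_2},b_{t_2})+\mathcal P(A_{t_1},b_{t_1})$, and Lemma~\ref{lemma:indirect_t3} places this set inside $\mathcal P(A_{t_3},b_{t_3})$.

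The one point needing care is the rotation by $\bar R_1$ in that lemma. Its proof bounds $\mathcal X_1\cdot\mathcal P(A_{t_2},b_{t_2})$ by $\mathcal P(A,b)$. Left-multiplying by $\bar R_1$ maps $\{u: Au\le b\}$ to $\{u: A\bar R_1^\top u\le b\}$, which is the set $\mathcal P(A\bar R_1^\top,b)$ appearing in the statement. I will also check the identity ${a}_m^\top{\rm Exp}(\theta_1 w)t_2=\langle {\rm Exp}(\theta_1 w)^\top a_m,t_2\rangle$. The transpose is absorbed because $\mathcal X_1$ is closed under inversion ($w\mapsto -w$), so the maximum over $\mathcal X_1$ is unchanged. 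With this checked, $R_1t_2=\bar R_1(X_1t_2)$ with $X_1\in\mathcal X_1$ lies in $\bar R_1\mathcal P(A,b)$, and adding $t_1$ gives membership in the Minkowski sum.

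\textbf{Conclusion and main obstacle.} Combining Steps 2 and 3, $T_3$ lies in the claimed set, so $\mathcal T_3$ is contained in it. The main obstacle is Step 1, justifying that projection followed by the single-angle relaxation loses no feasible rotations. This rests on Proposition~\ref{lemma:single_theta_conservatism} applied to the projected polytope, together with the fact that the SDP value $c^*$ lower-bounds the true minimum of $\tr(R\bar R^\top)$. The remaining steps are direct set inclusions.
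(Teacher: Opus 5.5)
Your proposal is correct and matches the paper's proof essentially step for step. Set projection together with Proposition~\ref{lemma:single_theta_conservatism} places $R_i\in\bar R_i\mathcal X_i$ and $t_i\in\mathcal P(A_{t_i},b_{t_i})$; Lemma~\ref{the:R_propagation} then handles the rotation and Lemma~\ref{lemma:indirect_t3} the translation, with your pointwise argument and your check of the $\bar R_1$ rotation and the transpose simply spelling out what the paper does at the set level.
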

\begin{proof}
    Based on the property of set projection and Proposition 1, we have $\mathcal T_1 \subseteq \{T_1 \in {\rm SE}(3) \mid R_1 \in \bar R_1 \mathcal X_1, t_1 \in \mathcal P(A_{t_1}, b_{t_1})\}$ and $\mathcal T_2 \subseteq \{T_2 \in {\rm SE}(3) \mid R_2 \in \bar R_2 \mathcal X_2, t_2 \in \mathcal P(A_{t_2}, b_{t_2})\}$. Therefore, it holds from $T_3=T_1T_2$ that 
    \begin{align*}
        \mathcal T_3 \subseteq \{T_3 \in {\rm SE}(3) \mid & R_3 \in (\bar { R}_1 \mathcal X_1 \bar { R}_2) \cdot \mathcal X_2, \\
        & ~t_3 \in \bar { R}_1 \mathcal X_1 \cdot \mathcal P(A_{t_2}, b_{t_2})+\mathcal P(A_{t_1}, b_{t_1})\}.
    \end{align*}
    Finally, according to Lemmas~\ref{the:R_propagation} and~\ref{lemma:indirect_t3}, we have $\mathcal T_3 \subseteq \{T_3 \in {\rm SE}(3) \mid  R_3 \in \bar R_3 \mathcal X_3, 
         t_3 \in \mathcal P(A_{t_3}, b_{t_3})\}$, which completes the proof.
\end{proof}

Our indirect pose uncertainty compound algorithm is summarized in Algorithm~\ref{algrithm:indirect_compound}.
\begin{algorithm}
	\caption{Indirect pose compound}
	\label{algrithm:indirect_compound}
	\begin{algorithmic}[1]
		\Statex \textbf{Input}: Uncertainty $\bar R_1 \mathcal X_1$ for $R_1$, $\mathcal P(A_{t_1}, b_{t_1})$ for $t_1$, $\bar R_2 \mathcal X_2$ for $R_2$, and $\mathcal P(A_{t_2}, b_{t_2})$ for $t_2$.
        \Statex \textbf{Output}: Uncertainty $\bar R_3 \mathcal X_3$ for $R_3$ and $\mathcal P(A_{t_3}, b_{t_3})$ for $t_3$.
       %  \State Obtain $\mathcal P(A_{r_1}, b_{r_1})$, $\mathcal P(A_{t_1}, b_{t_1})$ from $\mathcal P(A_1,b_1)$ and $\mathcal P(A_{r_2}, b_{r_2})$, $\mathcal P(A_{t_2}, b_{t_2})$ from $\mathcal P(A_2,b_2)$ by set projection;
       % \State Obtain $\bar R_1 \mathcal X_1$ from $\mathcal P(A_{r_1}, b_{r_1})$ and $\bar R_2 \mathcal X_2$ from $\mathcal P(A_{r_2}, b_{r_2})$ by solving~\eqref{eqn:max_ellipsoid_in_polytope} and~\eqref{eqn:max_theta'};
       \State Set $\bar R_3=\bar R_1 \bar R_2$ and $\mathcal X_{3} =\mathcal X_{1} \cdot \mathcal X_{2}=\{{\rm Exp}(\theta_{3} { w}) \mid { w} \in \mathbb S^2, \theta_{3} \in [0, \theta_1'+\theta_2']\}$;
       \State Fix $A=[a_1,\ldots,a_M]^\top$, where each $a_m$ is a unit normal;
       \For{$m=1:M$}
\State Solve Problem~\eqref{eqn:max_t_opt2} and obtain $[b]_m$;
       \EndFor
       \State Set $\mathcal P(A_{t_3}, b_{t_3})=\mathcal P(A\bar R_1^\top,b)+\mathcal P(A_{t_1}, b_{t_1})$.
	\end{algorithmic}
\end{algorithm}

% Since the Minkowski sum of two convex polytopes is also a convex polytope~\cite{schneider2013convex}, the uncertainty set for ${ t}_3$ is a convex polytope. Specifically, it can be efficiently calculated as follows. Denote the vertex sets of $\mathcal P_1$ and $\mathcal P_3$ as $\mathcal V_{\mathcal P_1}$ and $\mathcal V_{\mathcal P_3}$, respectively. The Minkowski sum of the two polytopes is given by 
% \begin{equation*}
%     \mathcal P_1+\mathcal P_3={\rm conv}\left(\{{ v}_1+{ v}_3 \mid { v}_1 \in \mathcal V_{\mathcal P_1}, { v}_3 \in \mathcal V_{\mathcal P_3}\}\right).
% \end{equation*}

\begin{figure}[!htbp]
    \centering
    \includegraphics[width=0.7\linewidth]{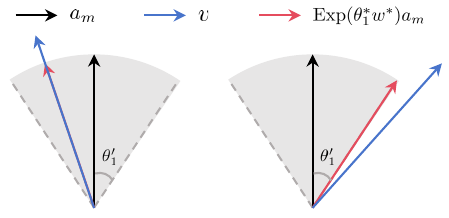}
    \caption{Illustration of Eq.~\eqref{eqn:inner_value}. When $\angle({ a}_m,{ v})\leq \theta_1'$, the optimal rotation ${\rm Exp}(\theta_1^* { w}^*)$ aligns ${ a}_m$ to the direction of $ v$; when $\angle({ a}_m,{ v})> \theta_1'$, the optimal rotation ${\rm Exp}(\theta_1^* { w}^*)$ rotates ${ a}_m$ along the optimal axis ${ w}^*$ by $\theta_1^*=\theta_1'$ such that the rotated vector has an angle of $\angle({ a}_m,{ v})- \theta_1'$ relative to $ v$.}
    \label{fig:maximum_inner_product}
\end{figure}

\section{Uncertainty Quantification for SLAM} \label{sec:SLAM_oometry}

In the section, we construct a guaranteed SLAM UQ pipeline based on the basic UQ primitives in Sec.\ref{sec:basic_uncertainty_propaga}. Here we focus on 3D-3D landmark-based SLAM (joint pose-landmark estimation)~\cite{zhang2014loam,campos2021orb} as a representative formulation.

%our theoretical framework remains applicable to broader SLAM systems.

%We approach a provably guaranteed SLAM UQ pipeline based on the basic UQ in Sec.\ref{sec:basic_uncertainty_propaga}. In this work, we focus on landmark-based SLAM, where robot poses and landmark positions are jointly estimated, as widely studied in the literature~\cite{zhang2014loam,campos2021orb}. It serves as a representative formulation, while the theoretical modules developed herein may, in principle, be applicable to a broader class of SLAM systems.
\subsection{Algorithm Overview} \label{subsec:algorithm_overview}
%The SLAM problem admits numerous map representations and solutions.

An overview of our algorithm is shown in Figure~\ref{fig:guaranteed_SLAM}.
The robot is assumed to possess metric-scale 3D perception capabilities, enabled by sensors such as stereo cameras, RGB-D cameras, or LiDARs, for local point cloud generation.
The proposed algorithm consists of the following key components:
\begin{enumerate}
    \item \emph{Local point cloud generation}: extracting point clouds in the local frame from the raw sensor data.
    \item \emph{Localization}: estimating the global poses of the robot.
    \item \emph{Mapping}: estimating the global landmark positions.
    \item \emph{Smoothing}: leveraging loop closures to correct the estimated robot's trajectory and map landmarks.
\end{enumerate}
As illustrated in Figure~\ref{fig:guaranteed_SLAM}, the localization module can be implemented as a \textbf{relative framework}, which estimates relative poses between consecutive frames and compound it with the previous global localization, or a \textbf{global framework}, which executes direct pose tracking by establishing correspondences between the local point cloud and the global map.

%The localization module can be implemented in two ways.   First, one can estimate the relative pose between consecutive frames and then compound it with the previous global localization result. Second, one can directly execute global pose tracking by establishing point correspondences between the local point cloud and the global map. SLAM frameworks following these two approaches are referred to as the \textbf{relative framework} and \textbf{global framework}, respectively.

Next, we will utilize CP and the basic UQ in Sec.\ref{sec:basic_uncertainty_propaga} to illustrate how to quantify uncertainty in each component.

\subsection{Uncertainty Quantification for Local Point Cloud} \label{subsec:CP_for_local_points}
We employ CP to provide polytopic uncertainty sets for local point clouds.
When a new frame arrives, the robot generates a local point cloud using onboard sensors. 
Let ${ p}_i$ be the coordinates of the $i$-th local 3D point.
% We need to give a polytopic uncertainty set $\mathcal P(A_i,b_i)=\{p_i \mid { A}_i { p}_i \leq { b}_i\}$ for ${ p}_i$.
The objective is to characterize its uncertainty by constructing a polytopic uncertainty set $\mathcal P(A_i,b_i)=\{p_i \mid { A}_i { p}_i \leq { b}_i\}$. 
The derivation of ellipsoidal uncertainty sets for points directly measured by LiDARs or RGB-D cameras using CP has been studied in~\cite{gao2024closure}, which can be easily adapted for 
polytopic uncertainties.
% Here, we will derive an ellipsoidal uncertainty set for \emph{stereo camera}-generated points with CP, followed by polytopic uncertainty approximation.
Here, we further investigate the uncertainty modeling of 3D points reconstructed from stereo image measurements and derive their ellipsoidal uncertainty sets via CP, followed by polytopic uncertainty approximation.

Let $\left\{{ u}_l,{ v}_l,{ p}_l^o \right\}_{l=1}^L$ be the calibration set of CP, where ${ u}_l$ and ${ v}_l$ are the normalized image coordinates of a pair of matched features in the left and right images, respectively, and ${ p}_l^o$ is the true 3D coordinates of the point in the left camera frame. Let $({ R}_{\mathcal L}^{\mathcal R}, { t}_{\mathcal L}^{\mathcal R})$ be the stereo baseline. Following~\cite{zeng2025bias}, the triangulated 3D point is given by  
\begin{equation*}
 \hat { p}_l=({ H}_l^\top{ H}_l)^{-1} { H}_l^\top { y}_l, \   { H}_l=\begin{bmatrix}
        { u}_l^{h \wedge} \\
        { v}_l^{h \wedge} { R}_{\mathcal L}^{\mathcal R}
    \end{bmatrix},~~ y_l=\begin{bmatrix}
            { 0}_3 \\
            -{ v}_l^{h \wedge} { t}_{\mathcal L}^{\mathcal R}
        \end{bmatrix},
\end{equation*}
and ${ u}_l^{h}$ and ${ v}_l^{h}$ are the homogeneous coordinates. In addition, a scaled covariance ${ \Sigma}_l$ of $\hat { p}_l$ can be given as ${ \Sigma}_l={ J}_l  { J}_l^\top$, where ${ J}_l$ is the Jocabian matrix of $\hat p_l$ with respect to ${\rm col}(u_l,v_l)$.
Then, we can construct the \emph{nonconformity score} as 
\begin{equation} \label{eqn:nonconfomity_score}
    s_l=\left\|\hat { p}_l-{ p}_l^o\right\|_{{ \Sigma}_l}.
\end{equation}
\begin{remark} \label{remark:scaled_covariance}
    It is well known that the uncertainty of a triangulated point can vary substantially across points, largely due to differences in depth~\cite{zeng2025bias,cvivsic2022soft2}. We therefore introduce a scaled covariance $\Sigma_l$ to normalize the uncertainty of each point, so that the resulting quantities \(s_l\), \(l=1,\ldots,L\), better satisfy the exchangeability assumption under which CP is valid. 
\end{remark}
For the tested data $({ u}_i,{ v}_i,{ p}_i^o )$, it holds from Lemma~\ref{lemma:CP} that 
    \begin{equation*} 
        {\rm Prob}(\left\|\hat { p}_i-{ p}_i^o\right\|_{{ \Sigma}_i} \leq C(s_1,\ldots,s_L)) \geq 1-\delta.
    \end{equation*}
Hence, the ellipsoidal uncertainty set for $p_i$ is $\left\|\hat { p}_i-{ p}_i\right\|_{{ \Sigma}_i} \leq C(s_1,\ldots,s_L)$. 

Given the ellipsoid $\left\|\hat { p}_i-{ p}_i\right\|_{{ \Sigma}_i} \leq C(s_1,\ldots,s_L)$, by fixing the polytope unit normals, we can analytically give the minimum polytope that encloses the ellipsoid~\cite{boyd2004convex}. 
Specifically, we fix the unit normals $A_i=[a_1,\ldots,a_M]^\top$ \emph{a prior}. For each $a_m$, the tightest offset $[b_i]_m$ is given by 
\begin{equation*}
    [b_i]_m=a_m^\top \hat { p}_i+C(s_1,\ldots,s_L)\sqrt{a_m^\top \Sigma_i a_m}.
\end{equation*}
Finally, the polytopic uncertainty set for $p_i$ is $\mathcal P(A_i,b_i)$.
Since the polytope encloses the ellipsoid, it is straightforward that 
 \begin{equation*} 
        {\rm Prob}({ p}_i^o \in \mathcal P(A_i,b_i)) \geq 1-\delta.
    \end{equation*}

\subsection{Localization} \label{subsec:localization}
\subsubsection{Relative pose tracking followed by pose compound} 
When a new frame arrives, we perform relative pose tracking, i.e., estimating the relative pose between the $k$-th frame and the previous frame, denoted by $T_{k}^{k-1}$. 
After feature matching, we have point correspondences $\{p_{k,i},q_{k,i}\}_{i=1}^{n_k}$, where $p_{k,i}$ and $q_{k,i}$ are coordinates in the $k$-th frame and the previous frame, respectively. They have the relationship $q_{k,i}=\phi(T_k^{k-1},p_{k,i})$. 
The uncertainty sets $\mathcal P({ A}_{p_{k,i}},{ b}_{p_{k,i}})$ for $p_{k,i}$ and $\mathcal P({ A}_{q_{k,i}},{ b}_{q_{k,i}})$ for $q_{k,i}$ are provided by CP. 
Applying the backward UQ in Sec.\ref{subsec:point_to_pose} gives  
%T_k^{k-1}$ Quantifying the uncertainty of $T_k^{k-1}$ involves backward UQ. Hence, we can apply Algorithm~\ref{algrithm:backward_propagation} and obtain the uncertainty set 
$\mathcal T_k^{k-1}=\{T_k^{k-1}\in {\rm SE}(3) \mid H_k^{k-1} {\rm vec}(T_k^{k-1})\leq d_k^{k-1}\}$.
Note that $T_{k}=T_{k-1} T_{k}^{k-1}$. 
The uncertainty set $\mathcal T_{k-1}=\{T_{k-1} \in {\rm SE}(3) \mid H_{k-1} {\rm vec}(T_{k-1})\leq d_{k-1}\}$ for $T_{k-1}$ is stored by the previous frame localization. 
% Quantifying the uncertainty of $T_{k}$ involves pose compound. 
Quantifying the uncertainty of $T_k$ requires UQ through pose compound. We can apply either direct or indirect rigid transformation uncertainty compound in Sec.\ref{subsec:pose_compound} to compute the  set $\mathcal T_{k}=\{T_{k} \in {\rm SE}(3) \mid H_{k} {\rm vec}(T_{k})\leq d_{k}\}$ for $T_{k}$.

\subsubsection{Global pose tracking}
As an alternative, we can directly estimate the global pose by tracking points in the global map. After feature matching between consecutive frames, we can obtain point correspondences $\{p_{k,i},q_{k,i}\}_{i=1}^{n_k}$, where $p_{k,i}$ and $q_{k,i}$ are coordinates in the $k$-th frame and the global frame, respectively. They have the relationship $q_{k,i}=\phi(T_k,p_{k,i})$. The uncertainty sets $\mathcal P({ A}_{p_{k,i}},{ b}_{p_{k,i}})$ for $p_{k,i}$ and $\mathcal P({ A}_{q_{k,i}},{ b}_{q_{k,i}})$ for $q_{k,i}$ are provided by CP and the mapping function, respectively. 
We can utilize our backward UQ in Sec.\ref{subsec:point_to_pose} for global pose estimation and compute $\mathcal T_k=\{T_k\in {\rm SE}(3) \mid H_k {\rm vec}(T_k)\leq d_k\}$.

\subsection{Mapping} \label{subsec:mapping}
Once an arriving frame has been localized, the mapping module is invoked to incorporate the newly observed local points into the global map.
% When the frame is localized, we also activate the mapping component, i.e., inserting the newly observed local points to the global map. 
Let ${ p}_{k,i}^{\rm Ne}$ and ${ q}_{k,i}^{\rm Ne}$ be the coordinates of the $i$-th newly observed point in the $k$-th robot frame and the global frame, respectively, which is related by ${ q}_{k,i}^{\rm Ne}=\phi(T_{k},{ p}_{k,i}^{\rm Ne})$. 
The uncertainty set $\mathcal P({ A}_{p_{k,i}}^{\rm Ne},{ b}_{p_{k,i}}^{\rm Ne})$ for ${ p}_{k,i}^{\rm Ne}$ is provided by CP and the uncertainty set $\mathcal T_{k}=\{T_{k} \in {\rm SE}(3) \mid H_{k} {\rm vec}(T_{k})\leq d_{k}\}$ for $T_{k}$ is provided by the localization function. 
%Quantifying the uncertainty of ${ q}_{k,i}^{\rm Ne}$ involves forward UQ. 
We can apply the forward propagation in Sec.\ref{subsec:pose_to_point} to compute the uncertainty set $\mathcal P({ A}_{q_{k,i}}^{\rm Ne},{ b}_{q_{k,i}}^{\rm Ne})$ for ${ q}_{k,i}^{\rm Ne}$.

\subsection{Smoothing} \label{subsec:loop_closure}
Upon loop closure detection, the trajectory and map are jointly optimized to mitigate accumulated drift and improve global consistency.
% If a loop closure is detected, the trajectory and map drift can be corrected to enhance global consistency. 
As shown in Figure~\ref{fig:guaranteed_SLAM}, the frames that involve in the loop closures may additionally link to more map points, denoted by dashed factors in the figure. Then, we jointly optimize all frame poses and map points within the loop closures.
Specifically, we adopt an alternating optimization scheme: at each iteration, we first update the map points and subsequently refine the frame poses. For each map point, we incorporate all associated frames and execute the forward UQ in Sec.\ref{subsec:pose_to_point} to refine the map uncertainty set. To update each frame pose, we utilize all visible points and apply the backward UQ in Sec.\ref{subsec:point_to_pose}. Because both the pose and map point sets decrease monotonically and are bounded below by the empty set, the iteration is guaranteed to converge.
As detailed in the supplementary pseudocode, we employ a maximum of three iterations or stop when the uncertainty set refinement becomes negligible.
It is noteworthy that if an incorrect loop closure is introduced, the uncertainty sets may shrink incorrectly, possibly losing containment or even becoming empty. Note that empty uncertainty sets can indicate an incorrect loop closure and thus support rejection.

The pseudo code for the whole pipeline is provided in Algorithm~\ref{algrithm:SLAM}.
\begin{algorithm}
	\caption{Guaranteed SLAM UQ}
	\label{algrithm:SLAM}
	\begin{algorithmic}[1]
		\State  \emph{\# Initialization}
        \State $k=1, H_{k}=[1,-1]^\top \otimes I_{12},$  $d_{k}=[1,-1]^\top \otimes[{\rm vec}(I_3)^\top, 0_3^\top]^\top$, $\mathcal T_{k}=\{T \in {\rm SE}(3) \mid H_{k} x(T)\leq d_{k}\}$;
       \State Use CP to obtain $\mathcal P({ A}_{p_{k,i}}^{\rm Ne},{ b}_{p_{k,i}}^{\rm Ne})$ for $p_{k,i}$;
        \State Apply Algorithm~\ref{algrithm:forward_propagation} on $\mathcal T_{k}$ and $\mathcal P({ A}_{p_{k,i}}^{\rm Ne},{ b}_{p_{k,i}}^{\rm Ne})$ to obtain $\mathcal P({ A}_{q_{k,i}}^{\rm Ne},{ b}_{q_{k,i}}^{\rm Ne})$;
        \While{not stopped}
        \State $k=k+1$;
\State \emph{\# Localization}
\State \emph{\# Option 1: relative framework}
\State Execute feature matching to obtain point correspondences $\{p_{k,i},q_{k,i}\}_{i=1}^{n_k}$;
\State Use CP to obtain $\mathcal P({ A}_{p_{k,i}},{ b}_{p_{k,i}})$ for $p_{k,i}$ and $\mathcal P({ A}_{q_{k,i}},{ b}_{q_{k,i}})$ for $q_{k,i}$;
\State Apply Algorithm~\ref{algrithm:backward_propagation} on $\mathcal P({ A}_{p_{k,i}},{ b}_{p_{k,i}})$ and $\mathcal P({ A}_{q_{k,i}},{ b}_{q_{k,i}})$ to obtain $\mathcal T_k^{k-1}$;
\State Apply Algorithm~\ref{algrithm:direct_compound} or Algorithm~\ref{algrithm:indirect_compound} on $\mathcal T_{k}^{k-1}$ and $\mathcal T_{k-1}$ to obtain $\mathcal T_{k}$;
\State \emph{\# Option 2: global framework}
\State Inquire the global map to obtain $\mathcal P({ A}_{q_{k,i}},{ b}_{q_{k,i}})$;
\State Apply Algorithm~\ref{algrithm:backward_propagation} on $\mathcal P({ A}_{p_{k,i}},{ b}_{p_{k,i}})$ and $\mathcal P({ A}_{q_{k,i}},{ b}_{q_{k,i}})$ to obtain $\mathcal T_k$;
\State \emph{\# Mapping}
\State Apply Algorithm~\ref{algrithm:forward_propagation} on $\mathcal T_{k}$ and $\mathcal P({ A}_{p_{k,i}}^{\rm Ne},{ b}_{p_{k,i}}^{\rm Ne})$ to obtain $\mathcal P({ A}_{q_{k,i}}^{\rm Ne},{ b}_{q_{k,i}}^{\rm Ne})$;
\State \emph{\# Loop-closure smoothing}
\If{a loop closure is detected}
\For{$l=1:{\rm max\_iter}$}
\State Apply Algorithm~\ref{algrithm:forward_propagation} to update the uncertainty of the map within the loop closure;
\State Apply Algorithm~\ref{algrithm:backward_propagation} to update the uncertainty of poses within the loop closure;
\EndFor
\EndIf
        \EndWhile
	\end{algorithmic}
\end{algorithm}

\begin{remark}
    Our algorithm relies on the exchangeability assumption of calibration and test data to construct statistically valid uncertainty sets via CP. In SLAM, this assumption holds when both datasets originate from the same underlying process, encompassing the sensor model, front-end pipeline, and environmental distribution. Outliers do not inherently violate exchangeability, provided the contamination mechanism remains consistent across both datasets; however, higher outlier rates typically yield looser uncertainty sets. As the core building blocks in our SLAM UQ pipeline, three primitives (forward UQ, backward UQ, and pose compound) require no specialized assumptions.  Our theoretical derivation rigorously shows that if the input uncertainty sets are valid, then the outputs are guaranteed to contain the true values. While we integrate these primitives with CP here, they remain agnostic to the specific local point cloud calibration method used.
\end{remark}

\section{Simulation and Experiment} \label{sec:sim_and_exp}

\subsection{Basic Uncertainty Quantification Simulation} \label{subsec:basic_simulation}

\subsubsection{Forward uncertainty quantification} To test our forward UQ  in Sec.\ref{subsec:pose_to_point}, six random trials are performed. In each trial, we randomly generate a pose $T \in {\rm SE}(3)$ and a point $p \in \mathbb R^3$. In addition, we impose a box uncertainty on $p$, whose size is drawn from $[0.08,0.2]$m. We also endow $x(T)$ with a box uncertainty, where the first 9 dimensions and the last 3 dimensions take values from $[0.01,0.02]$ and $[0.05,0.1]$m, respectively.
We generate the polytope matrix $A_2$ for $q$ by enforcing a \(45^\circ\) dihedral angle between each pair of adjacent facets.
To verify the conservatism of our algorithm, we randomly sample $1000$ points from the original set $\mathcal Q$ for $q=\phi(T,p)$ and test whether they fall inside our estimated set $\mathcal P(A_2,\hat b_2)$. 
The result is that all samples belong to the estimated set, and Figure~\ref{fig:sim_forward_propagation} visualizes one of the trials, which validates our guaranteed UQ.

\begin{figure}[!htbp]
    \centering
     \begin{subfigure}[t]{0.49\linewidth}
        \centering
        \includegraphics[width=\linewidth]{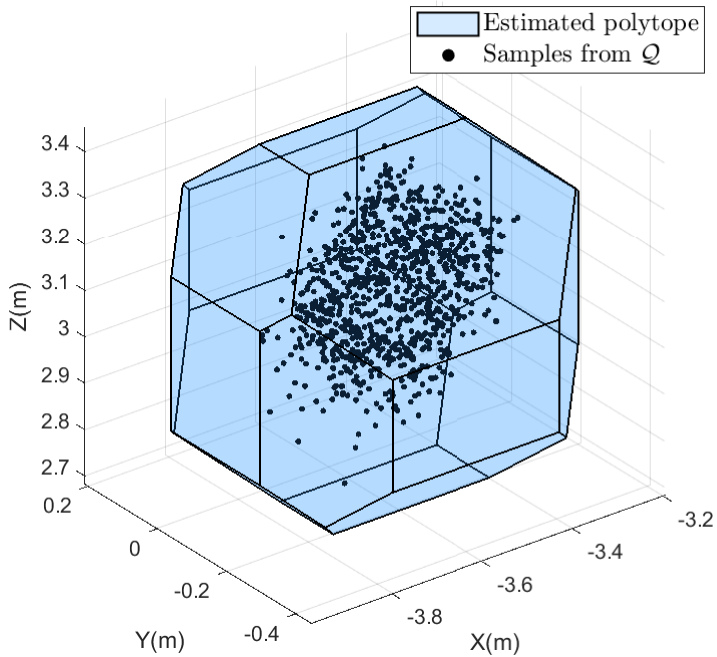}
        \caption{Trial 1}
        \label{fig:sim_forward_propagation1}
    \end{subfigure}
         \begin{subfigure}[t]{0.49\linewidth}
        \centering
        \includegraphics[width=\linewidth]{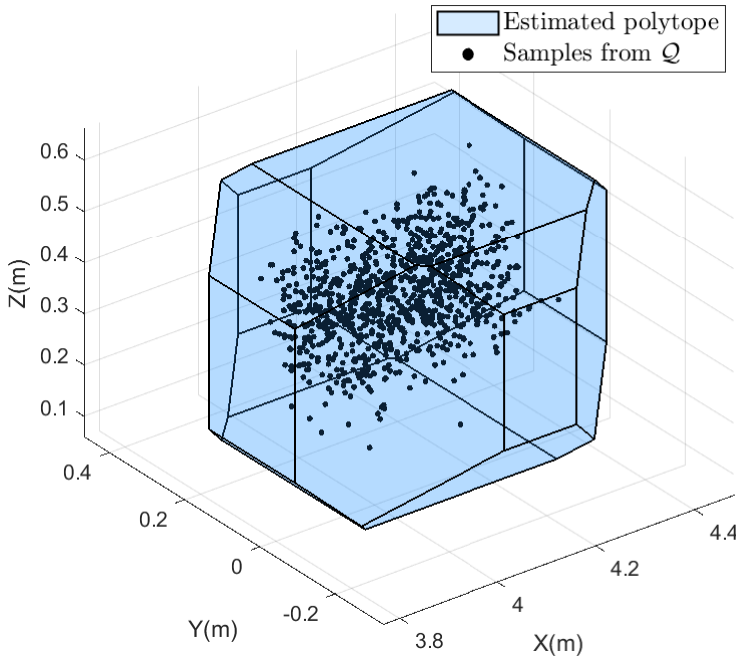}
        \caption{Trial 2}
        \label{fig:sim_forward_propagation2}
    \end{subfigure}
    \caption{Forward propagation results. Samples from the original set $\mathcal Q$ all locate inside our estimated polytope by Algorithm~\ref{algrithm:forward_propagation}.}
    \label{fig:sim_forward_propagation}
\end{figure}

\subsubsection{Backward uncertainty quantification} In this part, we evaluate our backward UQ in Sec.\ref{subsec:point_to_pose} and compare it with the pose UQ methods \texttt{CLOSURE}~\cite{gao2024closure} and \texttt{GRCC}~\cite{tang2024uncertainty}. \texttt{CLOSURE} produces an inner approximation of the original set, whereas \texttt{GRCC} gives an outer approximation. Consider point correspondences $\{p_i,q_i\}_{i=1}^n$ satisfying $q_i=Rp_i+t$. The compared methods model uncertainty directly on the residuals $q_i - Rp_i - t$ using ellipsoidal sets, rather than imposing separate uncertainties on both $p_i$ and $q_i$. To match their setting, we treat $p_i$ as exact and assign an ellipsoidal uncertainty set to each $q_i$. 
For our method, we first generate a polytope matrix  by enforcing a \(45^\circ\) dihedral angle between each pair of adjacent facets and compute a minimum-volume enclosing polytope for each ellipsoid. Then, we perform pose uncertainty quantification using these more conservative polytopes. 
We have conducted six trials with randomly sampled poses and ellipsoid semi-axis lengths ranging from $0.01$m to $0.05$m. For \texttt{CLOSURE}, we adopt the default reasonable parameter configuration provided in~\cite{gao2024closure}, without tuning its hyperparameters in the random walk sampling.
The conservatism test results are presented in Table~\ref{tab:backward_propagation}. All pose samples drawn from the original set $\mathcal T$ lie within the sets given by our algorithm and \texttt{GRCC}, validating the guaranteed property of the two methods. However, since \texttt{CLOSURE} only gives an inner approximation of $\mathcal T$, some samples fall outside its estimated set. In addition, the conservatism of \texttt{CLOSURE} varies a lot in different random trails, resulting in inconsistent containment rates.

We note that \texttt{CLOSURE} and \texttt{GRCC} quantify translation and rotation uncertainties separately. The translation uncertainty is represented by a Euclidean ball $\{t \in \mathbb R^3 \mid \|t-\bar t\|\leq r_t\}$, and the rotation uncertainty by a geodesic ball $\{R \in {\rm SO}(3) \mid {\rm dis}(R,\bar R) \leq r_R\}$. 
To enable a direct comparison, we follow the convention used by the baselines by projecting our estimated polytopes $\mathcal{P}(H, d)$ onto the rotation and translation polytopes. We then enclose each  rotation polytope within a geodesic ball; note that this representation tends to favor the baseline methods by changing our polytopic bounds.
For visualization, we map rotation matrices to $\mathbb R^3$ via the ${\rm Log}$ operation and plot the rotation uncertainty as the Euclidean ball $\{s \in \mathbb R^3 \mid \|s-{\rm Log(\bar R)}\|\leq r_R\}$.
The translation and rotation uncertainties in the first trial are visualized in Figure~\ref{fig:sim_backward_propagation_pro}.
Our projected results are more conservative than \texttt{CLOSURE} for both translation and rotation and more conservative than \texttt{GRCC} for rotation. 
For complete pose uncertainty visualization, we adopt a sampling-based method. Specifically, we draw $1000$ poses from the estimated set, transform a ball of radius $0.01$m using these poses, and visualize the union of the transformed balls. The ball center is set as $[1~0~0]^\top, [0~1~0]^\top, [0~0~1]^\top, [1~1~1]^\top$, respectively, and the results in the first trial are shown in Figure~\ref{fig:sim_backward_propagation}. Interestingly, unlike Figure~\ref{fig:sim_backward_propagation_pro}, the pose uncertainty of our method is comparable to \texttt{CLOSURE} and tighter than \texttt{GRCC}. 
The reason  may be attributed to the conservatism (in Figure~\ref{fig:sim_backward_propagation_pro}) introduced by the comparison adaptation to ensure a direct comparison.

\begin{figure}[!htbp]
    \centering
     \begin{subfigure}[t]{0.49\linewidth}
        \centering
        \includegraphics[width=\linewidth]{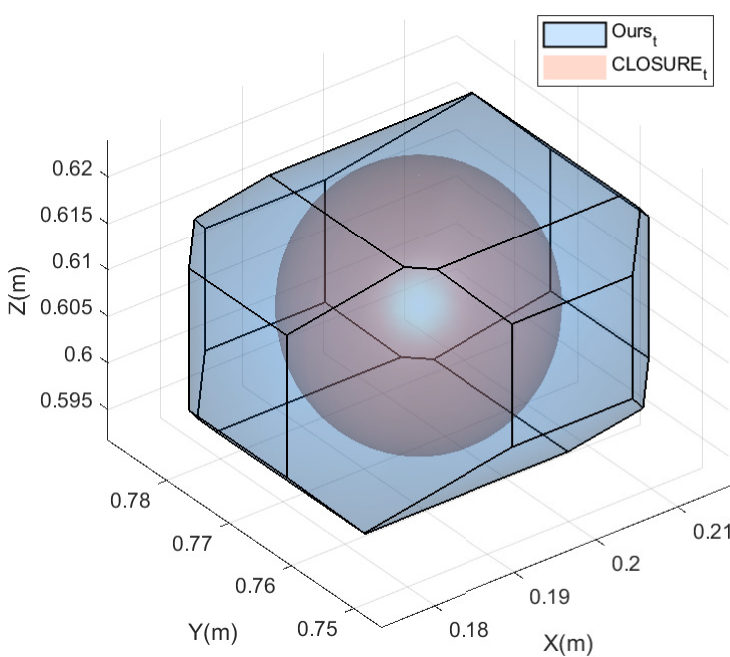}
        \label{fig:sim_backward_propagation_t}
    \end{subfigure}
         \begin{subfigure}[t]{0.49\linewidth}
        \centering
        \includegraphics[width=\linewidth]{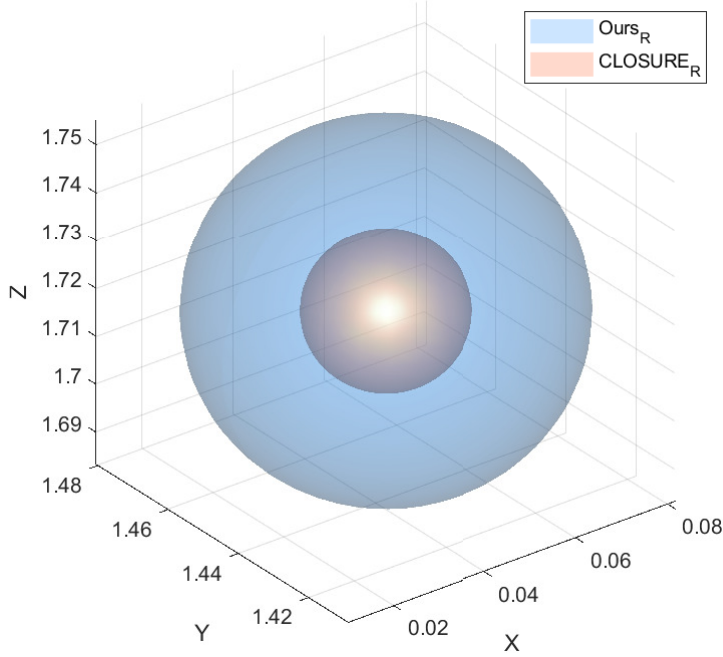}
        \label{fig:sim_backward_propagation_R}
    \end{subfigure}
     \begin{subfigure}[t]{0.49\linewidth}
        \centering
        \includegraphics[width=\linewidth]{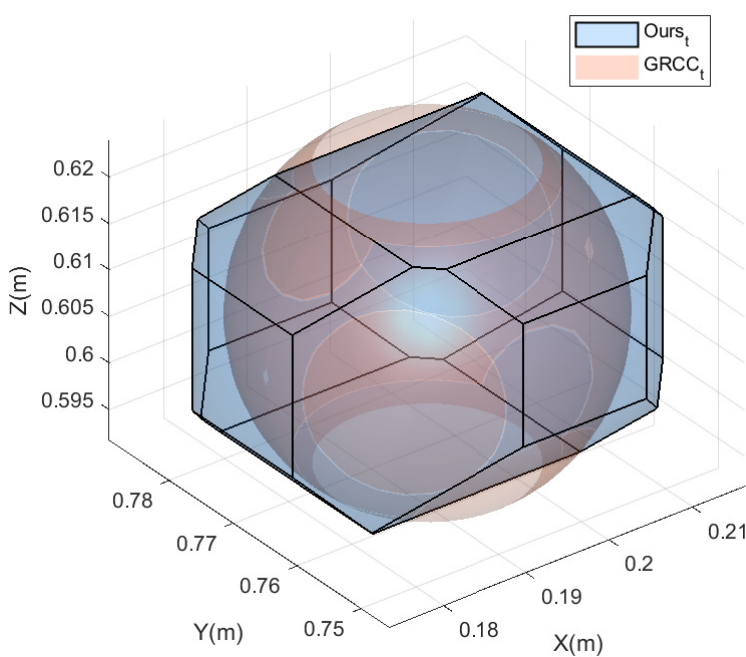}
        \label{fig:sim_backward_propagation_t2}
    \end{subfigure}
         \begin{subfigure}[t]{0.49\linewidth}
        \centering
        \includegraphics[width=\linewidth]{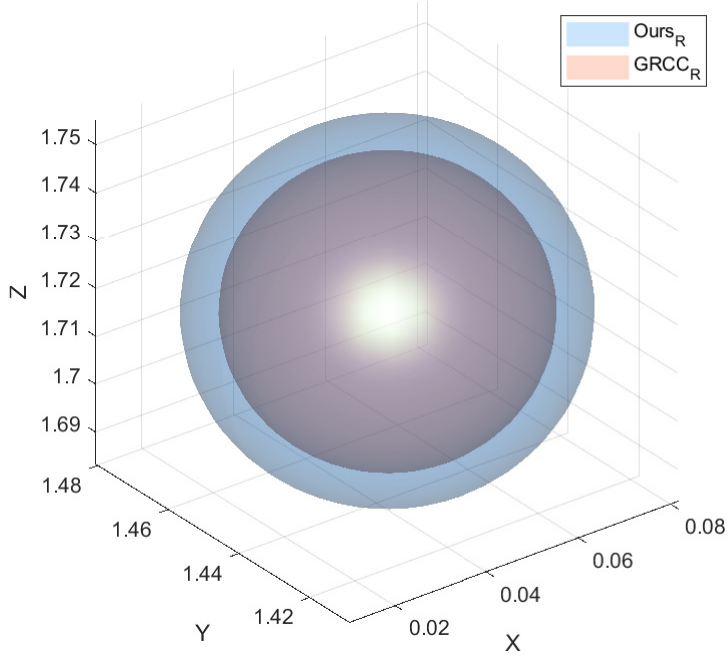}
        \label{fig:sim_backward_propagation_R2}
    \end{subfigure}
    \caption{Visualization of uncertainties for translation  (left) and rotation (right) in backward UQ.}
    \label{fig:sim_backward_propagation_pro}
\end{figure}

\begin{figure*}[!htbp]
    \centering
    \begin{subfigure}[t]{0.24\linewidth}
        \centering
        \includegraphics[width=\linewidth]{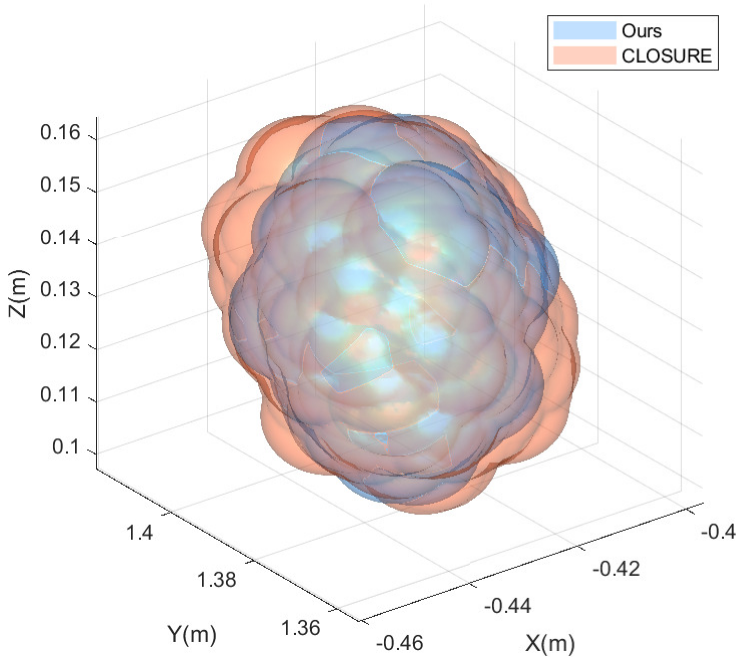}
        \caption{Ball center: $[1 ~0 ~0]^\top$}
        \label{fig:sim_backward_propagation_a}
    \end{subfigure}\hfill
    \begin{subfigure}[t]{0.24\linewidth}
        \centering
        \includegraphics[width=\linewidth]{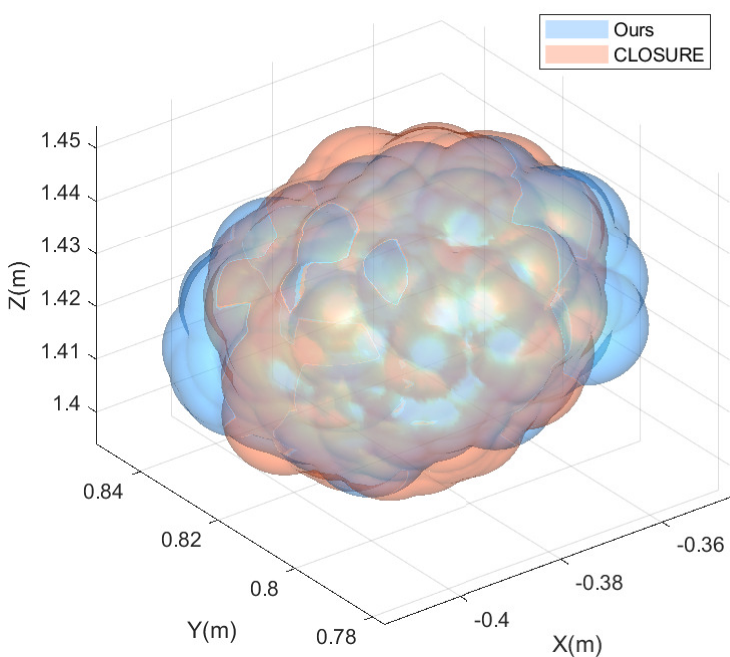}
        \caption{Ball center: $[0 ~1 ~0]^\top$}
        \label{fig:sim_backward_propagation_b}
    \end{subfigure}\hfill
    \begin{subfigure}[t]{0.24\linewidth}
        \centering
        \includegraphics[width=\linewidth]{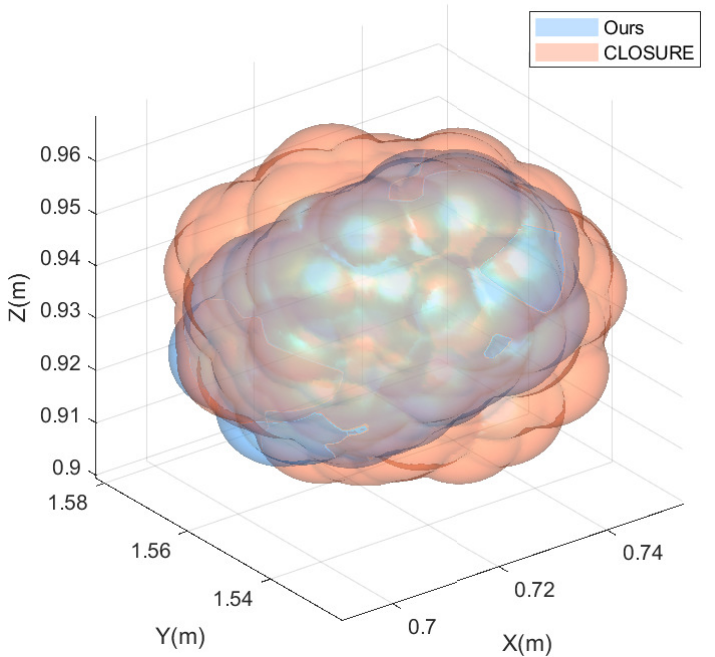}
        \caption{Ball center: $[0 ~0 ~1]^\top$}
        \label{fig:sim_backward_propagation_c}
    \end{subfigure}
     \begin{subfigure}[t]{0.24\linewidth}
        \centering
        \includegraphics[width=\linewidth]{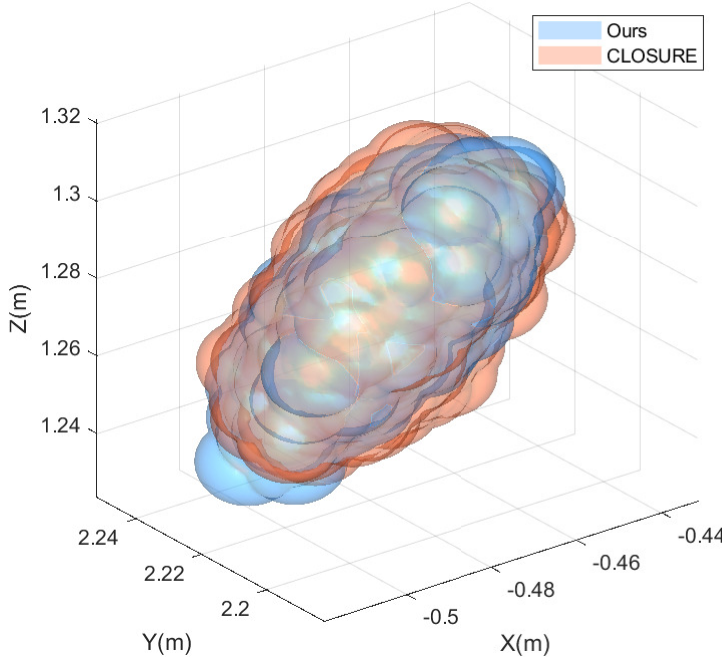}
        \caption{Ball center: $[1 ~1 ~1]^\top$}
        \label{fig:sim_backward_propagation_d}
    \end{subfigure}
     \begin{subfigure}[t]{0.24\linewidth}
        \centering
        \includegraphics[width=\linewidth]{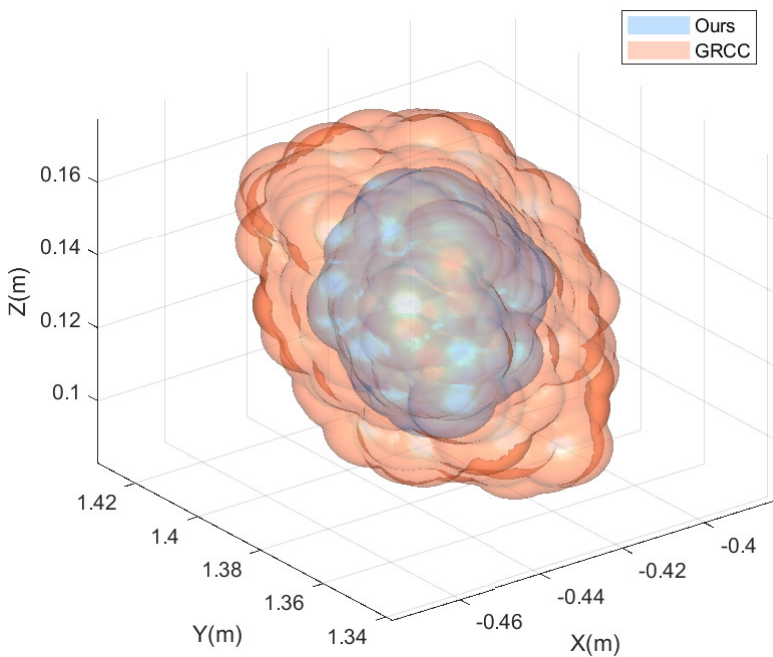}
        \caption{Ball center: $[1 ~0 ~0]^\top$}
        \label{fig:sim_backward_propagation_a2}
    \end{subfigure}\hfill
    \begin{subfigure}[t]{0.24\linewidth}
        \centering
        \includegraphics[width=\linewidth]{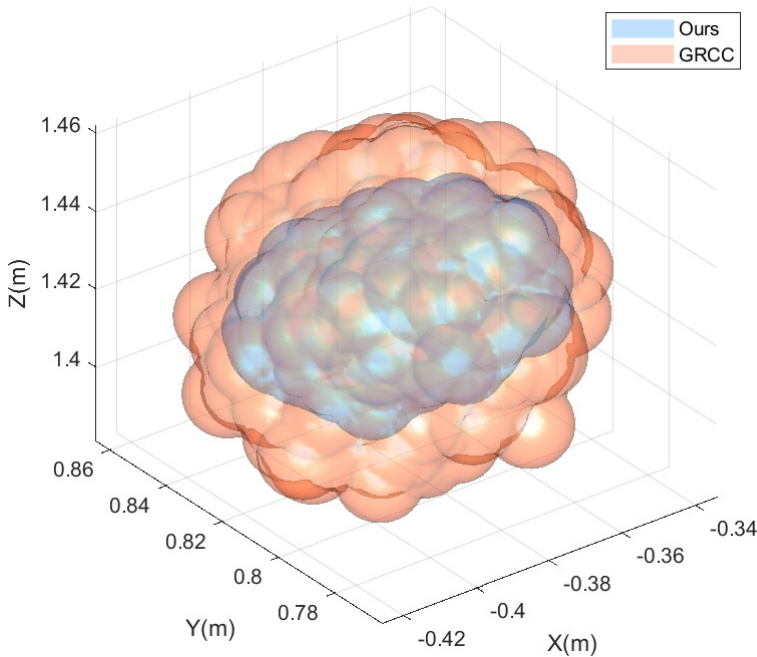}
        \caption{Ball center: $[0 ~1 ~0]^\top$}
        \label{fig:sim_backward_propagation_b2}
    \end{subfigure}\hfill
    \begin{subfigure}[t]{0.24\linewidth}
        \centering
        \includegraphics[width=\linewidth]{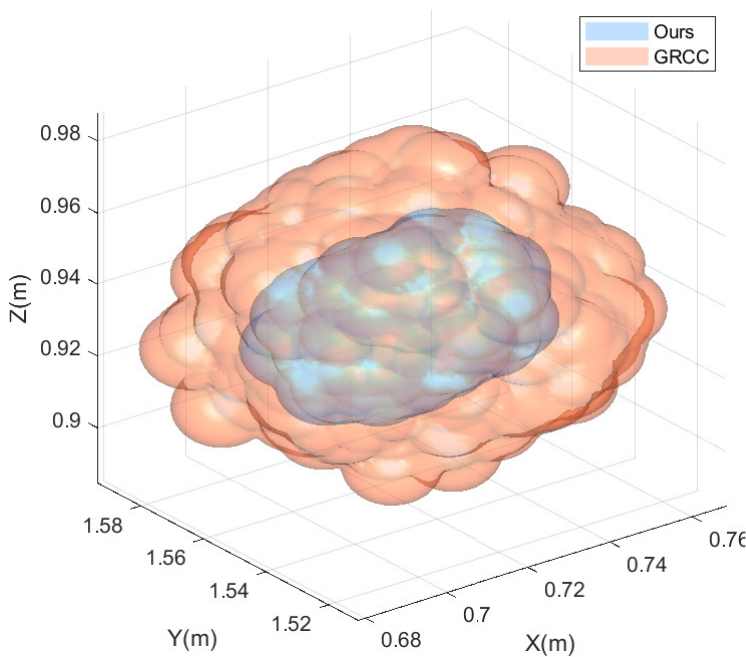}
        \caption{Ball center: $[0 ~0 ~1]^\top$}
        \label{fig:sim_backward_propagation_c2}
    \end{subfigure}
     \begin{subfigure}[t]{0.24\linewidth}
        \centering
        \includegraphics[width=\linewidth]{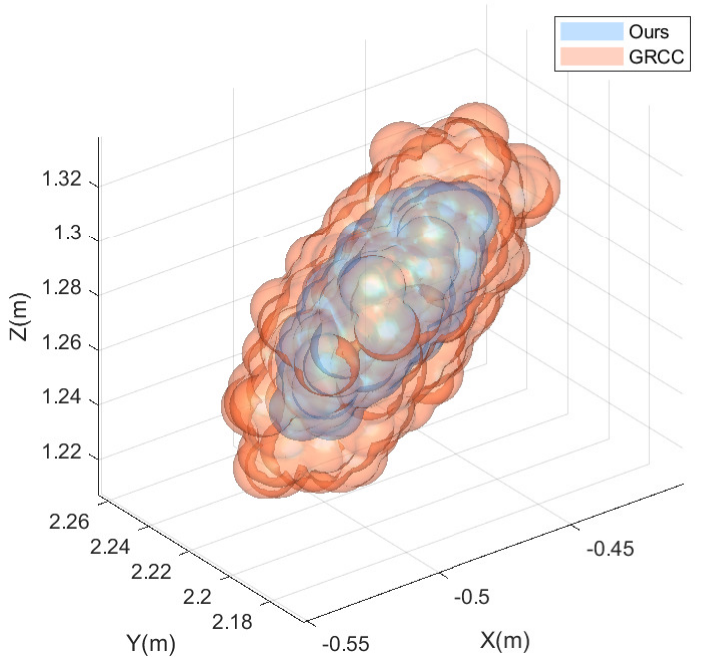}
        \caption{Ball center: $[1 ~1 ~1]^\top$}
        \label{fig:sim_backward_propagation_d2}
    \end{subfigure}

    \caption{Sampling-based pose uncertainty visualization in backward UQ. A ball with radius $0.01$m is transformed by $1000$ poses sampled from the estimated pose set, and the union of the transformed balls are plotted.}
    \label{fig:sim_backward_propagation}
\end{figure*}

\begin{figure*}[!htbp]
    \centering
    \begin{subfigure}[t]{0.24\linewidth}
        \centering
        \includegraphics[width=\linewidth]{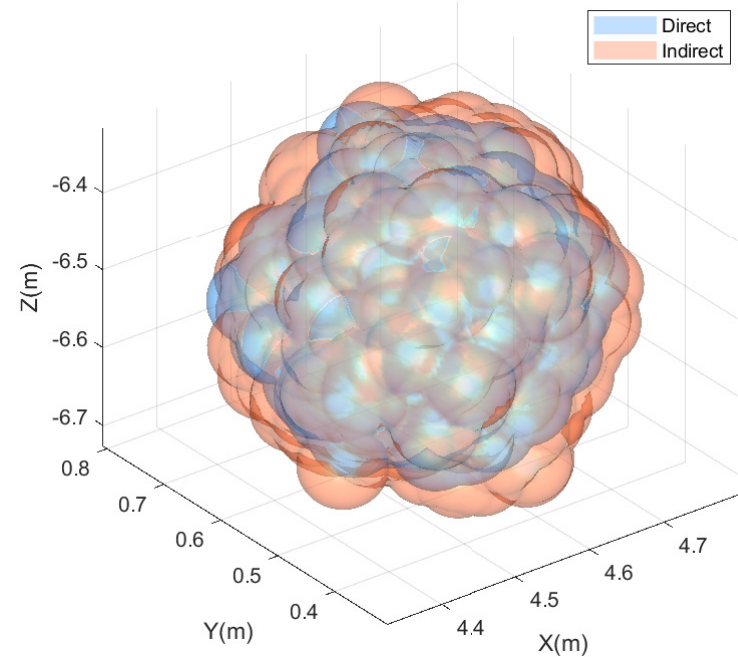}
        \caption{Ball center: $[10~0~0]^\top$}
        \label{fig:sim_pose_compound_a}
    \end{subfigure}
    \begin{subfigure}[t]{0.24\linewidth}
        \centering
        \includegraphics[width=\linewidth]{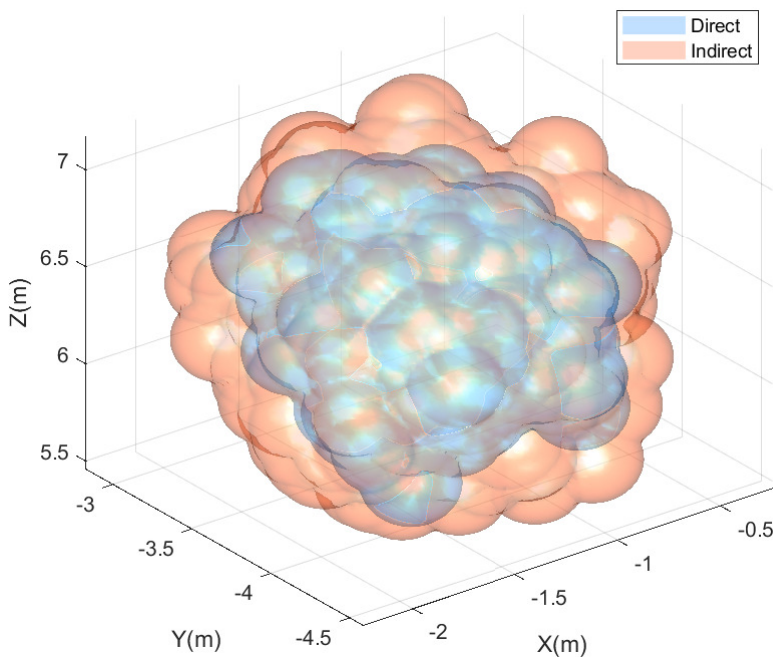}
        \caption{Ball center: $[0~10~0]^\top$}
        \label{fig:sim_pose_compound_b}
    \end{subfigure}
    \begin{subfigure}[t]{0.24\linewidth}
        \centering
        \includegraphics[width=\linewidth]{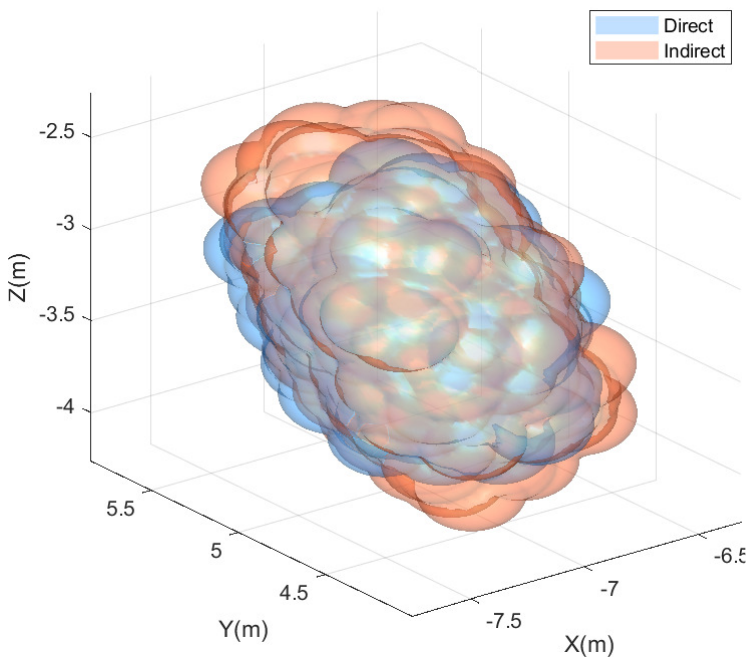}
        \caption{Ball center: $[0~0~10]^\top$}
        \label{fig:sim_pose_compound_c}
    \end{subfigure}
     \begin{subfigure}[t]{0.24\linewidth}
        \centering
        \includegraphics[width=\linewidth]{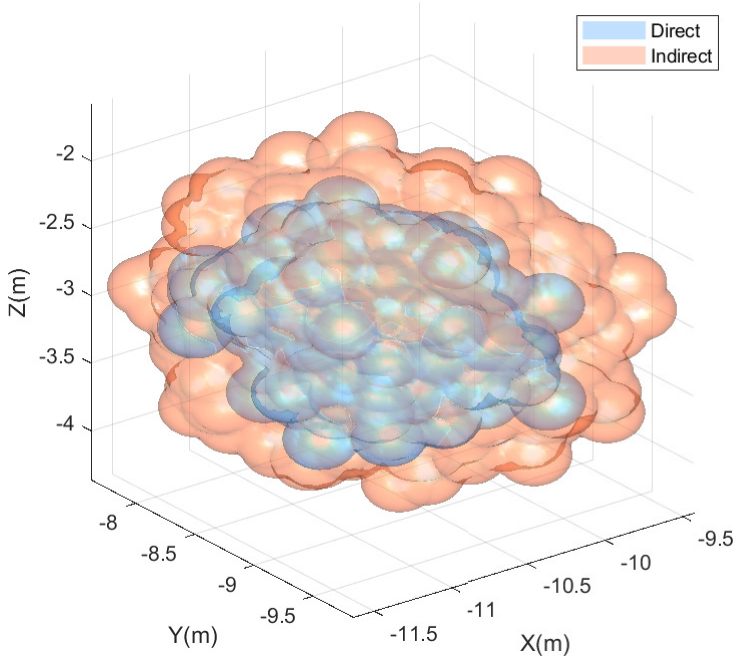}
        \caption{Ball center: $[10~10~10]^\top$}
        \label{fig:sim_pose_compound_d}
    \end{subfigure}

    \caption{Sampling-based pose uncertainty visualization in pose compound. A ball of radius $0.2$m is transformed by $1000$ poses sampled from the estimated set, and the union of the transformed balls are plotted. }
    \label{fig:sim_pose_compound}
\end{figure*}

\begin{table}[t]
\centering
\caption{Conservatism test for backward UQ: the percentage of 1000 samples from the original set $\mathcal T$ that lie within the estimated set.}
\label{tab:backward_propagation}
\begin{tabular}{lcccccc}
\hline
Trial & 1 & 2 & 3 & 4 & 5 & 6 \\
\hline
CLOSURE & 87.1\% & 59.6\% & 32.8\% & 71.3\% & 73.2\% & 10.4\%\\
GRCC & 100\% & 100\% & 100\% & 100\% & 100\% & 100\%\\
Ours & 100\% & 100\% & 100\% & 100\% & 100\% & 100\%\\
\hline
\end{tabular}
\end{table}

\begin{figure*}[!htbp]
    \centering
    \begin{subfigure}[t]{0.235\linewidth}
        \centering
        \includegraphics[width=\linewidth]{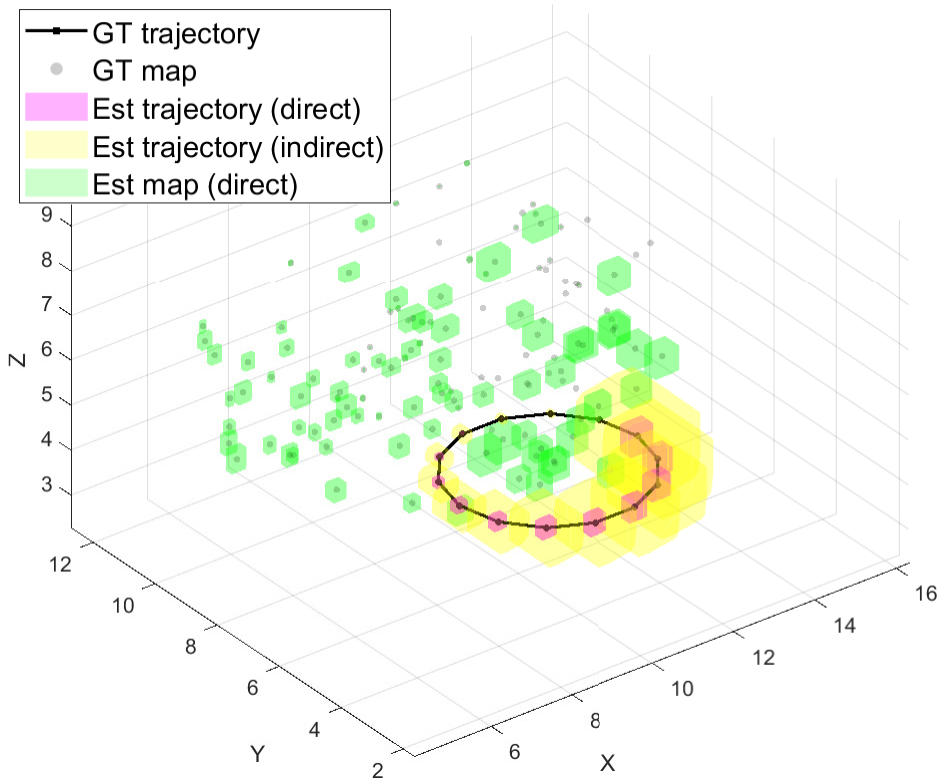}
        \caption{Relative framework}
        \label{fig:toy_example_a}
    \end{subfigure}
    \begin{subfigure}[t]{0.235\linewidth}
        \centering
        \includegraphics[width=\linewidth]{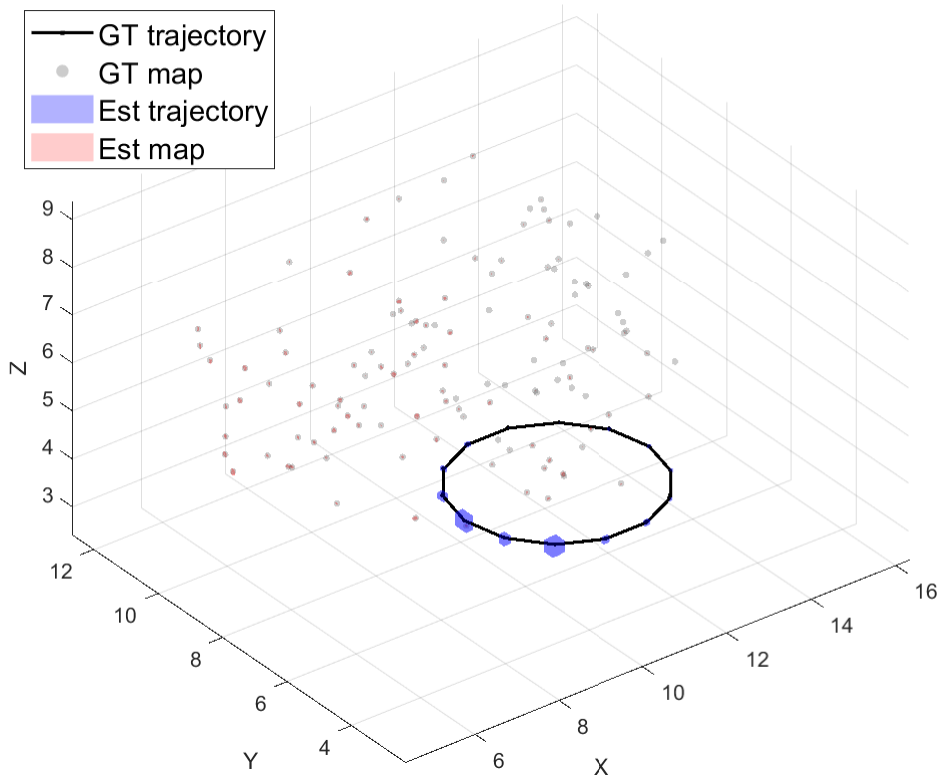}
        \caption{Global framework}
        \label{fig:toy_example_b}
    \end{subfigure}
        \begin{subfigure}[t]{0.235\linewidth}
        \centering
        \includegraphics[width=\linewidth]{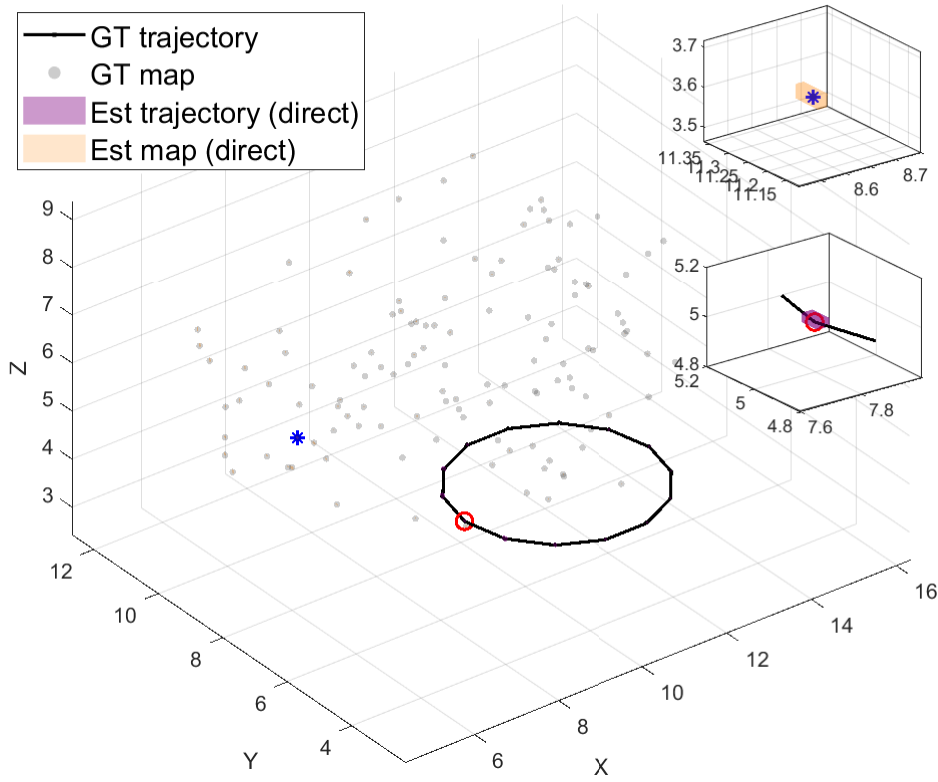}
        \caption{Loop closure}
        \label{fig:toy_example_d} 
    \end{subfigure}  \hspace{0.5mm} \rule{1pt}{0.22\textwidth} \hspace{0.5mm}
            \begin{subfigure}[t]{0.235\linewidth}
        \centering
        \includegraphics[width=\linewidth]{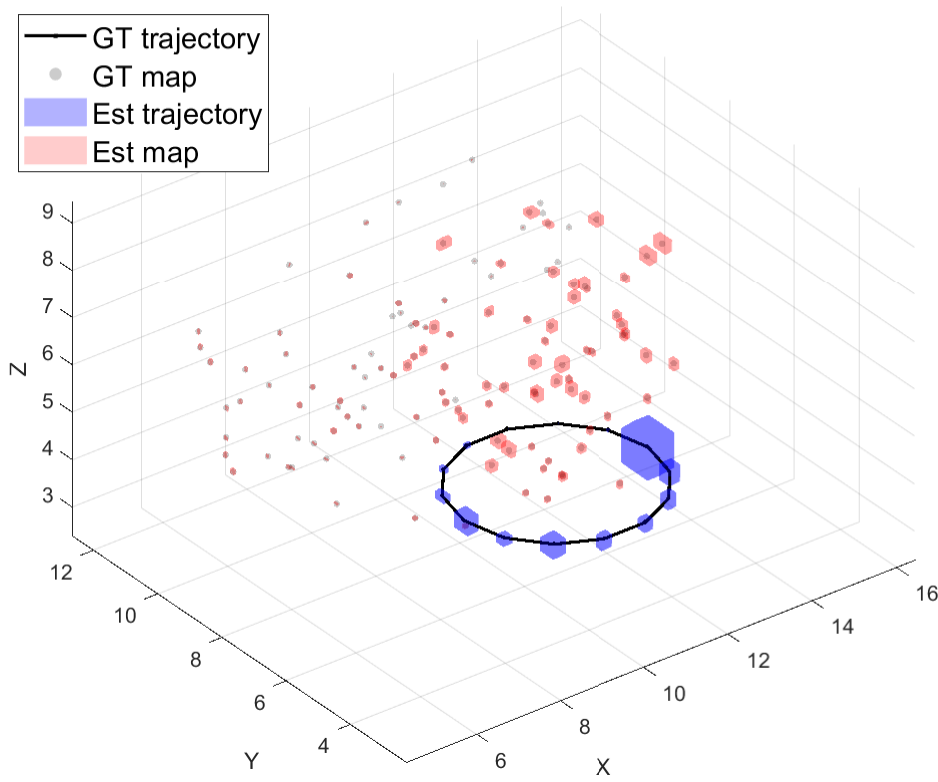}
        \caption{Global framework ablation}
        \label{fig:toy_example_c}
    \end{subfigure}
    \caption{Guaranteed SLAM simulation results. In the relative framework, ``direct'' and ``indirect'' denote the direct pose compound and the indirect pose compound results, respectively, and the map are constructed based on direct pose compound results. The loop closure results are obtained by feeding the ``direct'' results into the smoothing module. The global framework ablation uses the setting that each frame registers all visible (instead of newly observed) landmarks into the global map.}
    \label{fig:sim_toy_example}
\end{figure*}

\subsubsection{Pose compound} We perform six trials. In each trial, we randomly generate poses $T_1$ and $T_2$ and use the same uncertainty-generation procedure as in the forward UQ experiments. To assess conservatism, we draw $1000$ samples from the original set $\mathcal T_3$ and verify that all samples lie within the pose uncertainty sets provided by both the direct method and the indirect method in Sec.\ref{subsec:pose_compound}.
We plot the translation and rotation uncertainties in the first trial in Figure~\ref{fig:sim_pose_compound_pro}. The direct algorithm yields a more conservative translation uncertainty but a tighter rotation uncertainty. Part of the higher rotation conservatism of the indirect method is attributed to its geodesic-ball approximation. 
We then use a sampling-based approach for complete pose uncertainty visualization. We draw $1000$ poses from the estimated set and transform a ball of radius $0.2$m. The ball center is set as $[10~0~0]^\top, [0~10~0]^\top, [0~0~10]^\top, [10~10~10]^\top$, respectively, and the results in the first trial are shown in Figure~\ref{fig:sim_pose_compound}.  We see that the direct algorithm achieves a tighter pose uncertainty. 
% Our MATLAB implementation is evaluated on a machine with an Intel Core i7-10700 CPU
% for runtime comparison. The direct and indirect methods take \zgy{$14.17$s and $6.60$s} on average, respectively. 

\begin{figure}[!htbp]
    \centering
     \begin{subfigure}[t]{0.49\linewidth}
        \centering
        \includegraphics[width=\linewidth]{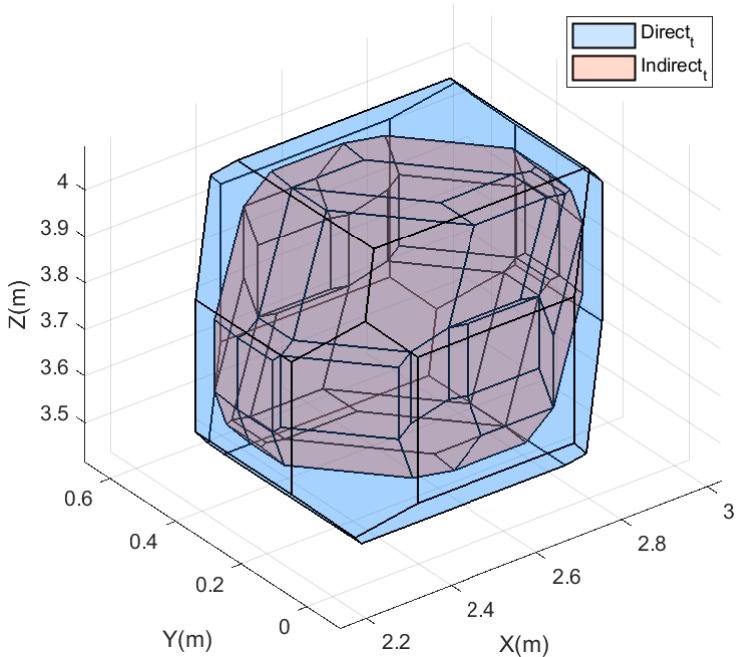}
        \label{fig:sim_pose_compound_t}
    \end{subfigure}
         \begin{subfigure}[t]{0.49\linewidth}
        \centering
        \includegraphics[width=\linewidth]{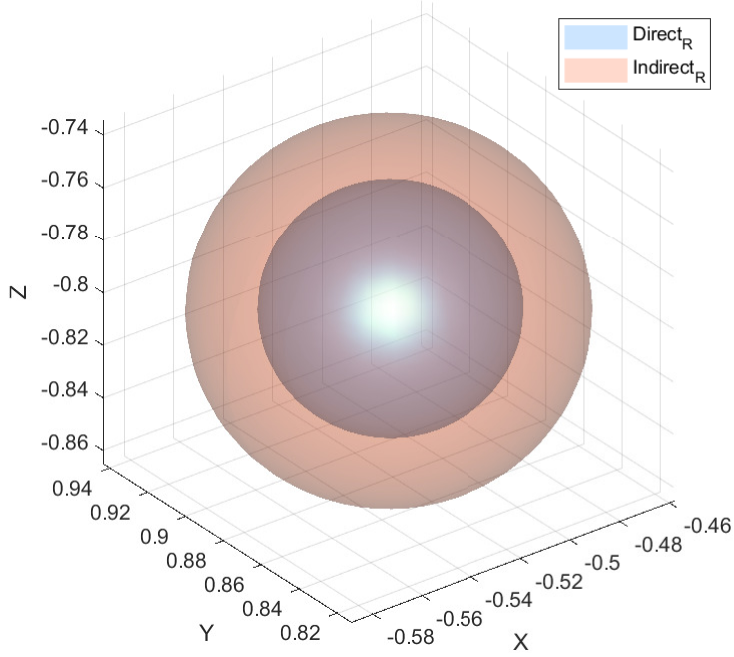}
        \label{fig:sim_pose_compound_R}
    \end{subfigure}
    \caption{Visualization of uncertainties for translation (left) and rotation (right) in pose compound.}
    \label{fig:sim_pose_compound_pro}
\end{figure}

\begin{figure*}[!htbp]
    \centering
     \begin{subfigure}[t]{0.5\linewidth}
        \centering
        \includegraphics[width=\linewidth]{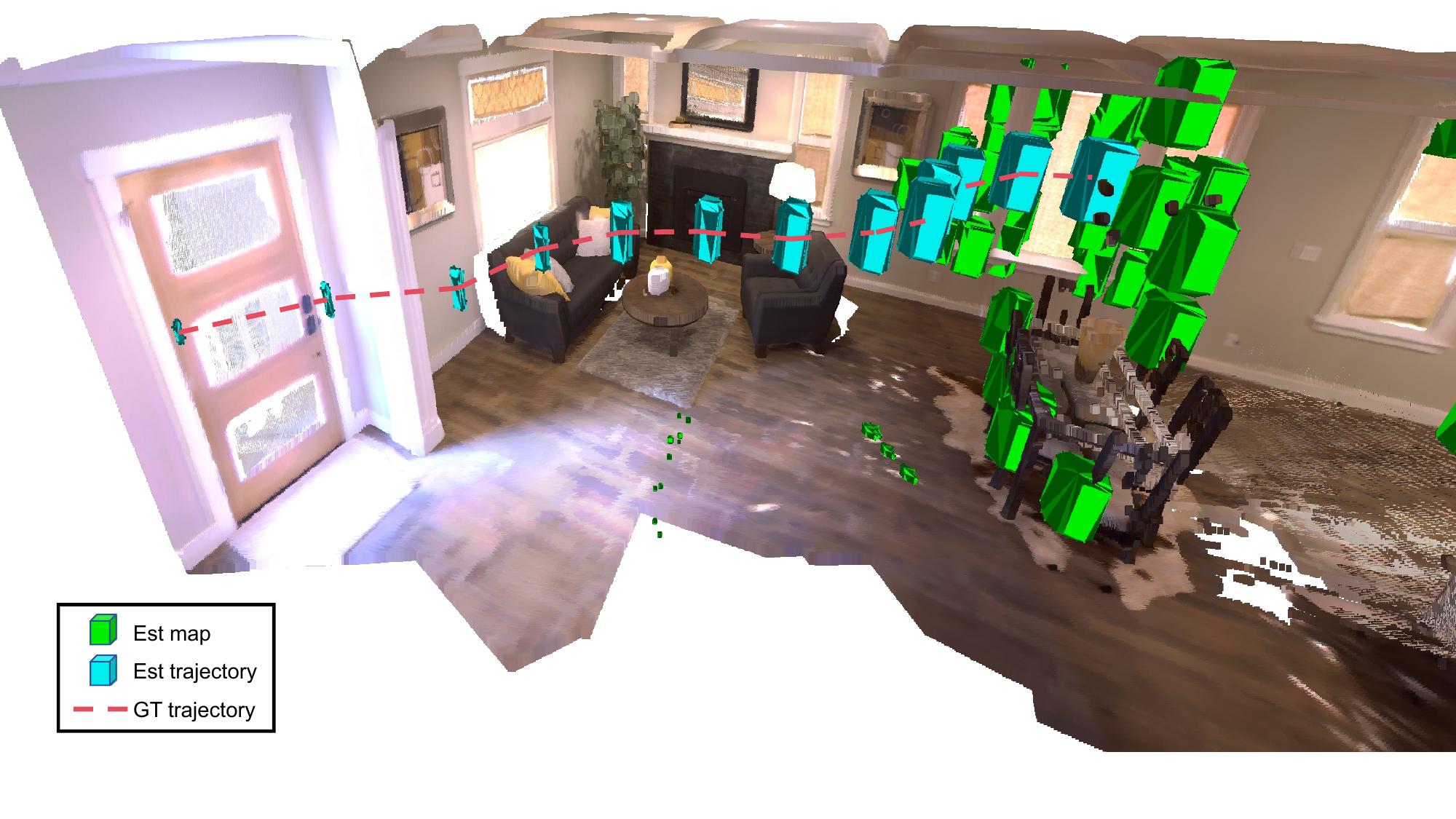}
        \caption{Relative framework (direct)}
        \label{fig:experiment_SLAM1}
    \end{subfigure}
    \begin{subfigure}[t]{0.43\linewidth}
        \centering
        \includegraphics[width=\linewidth]{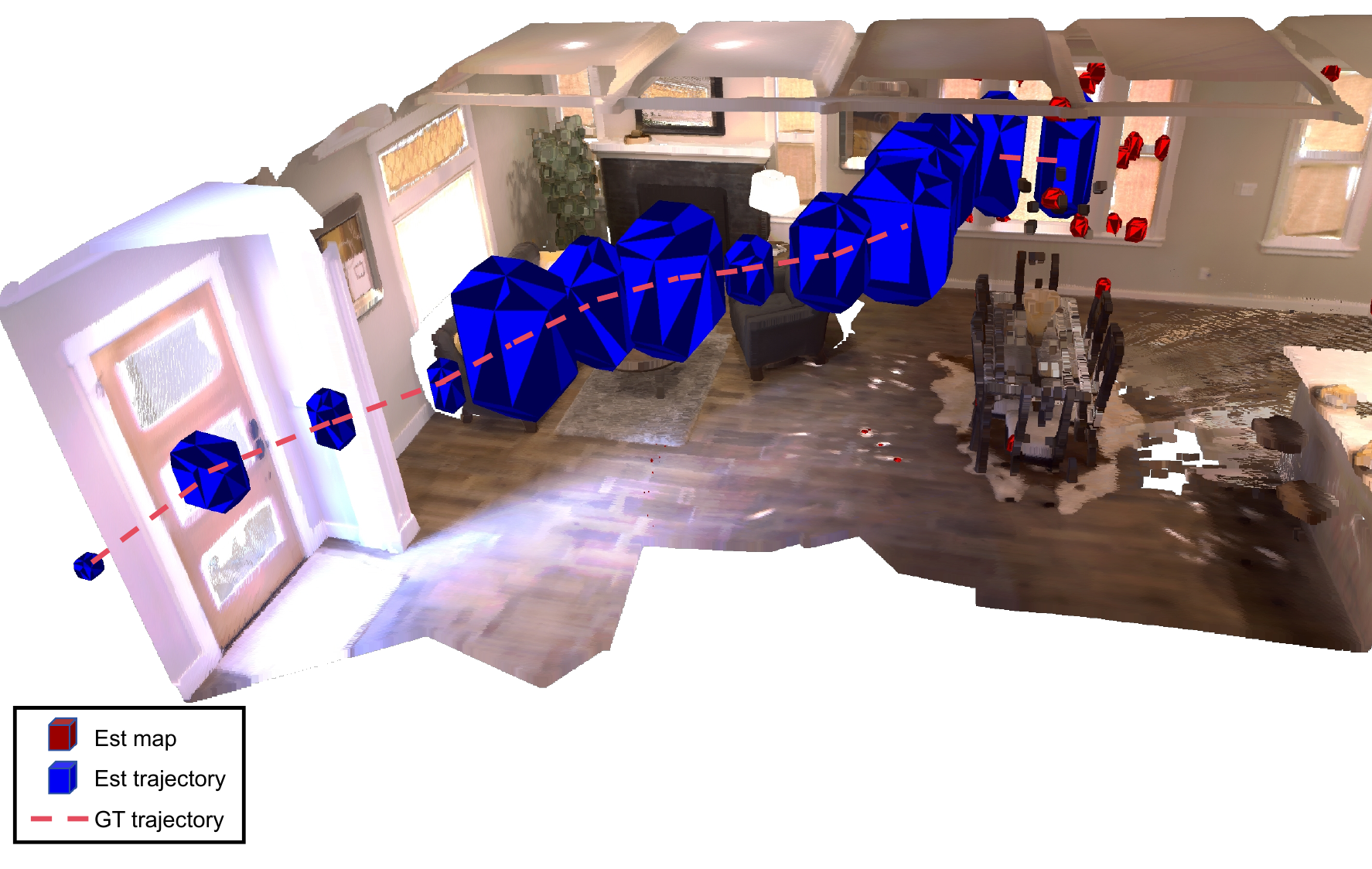}
        \caption{Global framework}
        \label{fig:experiment_SLAM2}
    \end{subfigure}
    \caption{Replica experiment results. The proposed relative framework using the direct pose compound method (left); the proposed global framework (right).}
    \label{fig:experiment_SLAM}
\end{figure*}

\subsection{Numerical SLAM Simulation} \label{subsec:SLAM_simulation}
We evaluate the proposed guaranteed SLAM UQ algorithms using synthetic data. We uniformly scatter 3D landmarks within a $50{\rm m}\times50{\rm m}\times50{\rm m}$ workspace and randomly generate a circular robot trajectory. The robot's field of view is $60^\circ$ horizontally, $60^\circ$ vertically, and $[0.5,5]{\rm m}$ in depth. For each visible landmark, we impose box uncertainties on the visual measurements, where the box size is proportional to the landmark's distance from the robot.
The results are shown in Figure~\ref{fig:sim_toy_example}.
In the relative framework, uncertainties for both robot poses and landmark positions accumulate over time. Notably, the pose uncertainty in the indirect compound algorithm grows more rapidly than in the direct one, aligning with the results from the previous subsection. 
We note that the map and trajectory uncertainties in the global framework are much smaller than those in the relative framework. 
This phenomenon occurs because, during motion, the robot may observe previously registered landmarks with low uncertainty; these observations mitigate current pose uncertainties. 
To substantiate this explanation, we conduct an ablation study in which each frame registers all visible landmarks $p_{k,i}$, rather than only newly observed landmarks $p_{k,i}^{\rm Ne}$, into the global map. Under this modification, previously registered landmarks are re-registered whenever they become visible in a new frame and thus may have a larger uncertainty set.
The ablation results show that the estimated uncertainty increases significantly, especially when the trajectory returns to the starting point, which verifies our claim.
Moreover, when a loop closure is detected and smoothing is performed, the uncertainties of both the trajectory and the map can be significantly reduced. Finally, we observe that all ground-truth poses and landmarks lie within their corresponding uncertainty sets. 

We also evaluate the computational complexity of our algorithm. When the average number of points per frame is 19, the average runtime of each module (MATLAB implementation on a computer with 32 GB RAM and an Intel Core Ultra 7 265 CPU) is as follows: forward UQ ($9.3$s), backward UQ ($0.0036$s), direct pose compound ($12$s), indirect pose compound ($0.5$s), and loop-closure smoothing ($29.55$s per frame, $591$s over $20$ frames). 
The primary computational bottleneck in our framework arises from the repeated SDP optimizations required for forward UQ and direct pose compound. While our current MATLAB implementation is a research prototype not yet intended for real-time use, it remains usable and effective for offline applications like uncertainty-aware mapping.

For comparison with probabilistic counterparts, we implement a probabilistic baseline that computes the exact posterior in a linear-Gaussian setting without ${\rm SE}(3)$ constraints. 
% We conduct two open-loop tests. 
% Unlike our algorithm, this baseline provides no formal containment guarantee.
% We represent its uncertainty sets by covariance-induced ellipsoids at the $99$\% confidence level. With a moderate number of visible points per frame ($\sim 15$), by frame $20$, the baseline fails to cover the ground truth in 
% $9/20$ pose estimates and $26/120$ map points, whereas our method achieves full containment. 
% The probabilistic baseline generally exhibits lower uncertainty than our method. However, since the baseline ignores ${\rm SE}(3)$ constraints, its uncertainty grows faster when only a few points are visible. 
% For $8$ points per frame, by frame $20$, our polytope volume is $10.4$, whereas the translation ellipsoid volume is 
% $18.2$, showing about $1.75\times$ faster growth.
We conduct open-loop comparisons under both global and relative frameworks. 
Unlike our algorithm, this baseline provides no formal containment guarantee.
We represent its uncertainty sets by covariance-induced ellipsoids at the $99$\% confidence level. In the global framework, with a moderate number of points per frame ($\sim 15$), by frame $20$, the baseline fails to cover the ground truth in 
$9/20$ pose estimates and $26/120$ map points, whereas our method achieves full containment. 
The probabilistic baseline generally exhibits lower uncertainty than our method. However, since it ignores ${\rm SE}(3)$ constraints, in the relative framework, its uncertainty grows faster when only a few points are visible. 
For $\sim 5$ points per frame, by frame $30$, the translation ellipsoid volume is 
$3.12$, whereas our polytope volume is $1.2$, showing about $3\times$ slower growth.

% \begin{figure*}[!htbp]
%     \centering
%      \begin{subfigure}[t]{0.49\linewidth}
%         \centering
%         \includegraphics[width=\linewidth]{figures/experiment_SLAM.pdf}
%         \label{fig:experiment_SLAM1}
%     \end{subfigure}
%          \begin{subfigure}[t]{0.49\linewidth}
%         \centering
%         \includegraphics[width=\linewidth]{figures/experiment_SLAM.pdf}
%         \label{fig:experiment_SLAM2}
%     \end{subfigure}
%     \caption{Replica experiment result obtained with the proposed relative framework using the direct pose compounding method.}
%     \label{fig:experiment_SLAM}
% \end{figure*}

% \begin{figure}[!htbp]
%     \centering
%     \includegraphics[width=0.92\linewidth]{figures/experiment_SLAM.pdf}
%     \caption{Replica experiment result obtained with the proposed relative framework using the direct pose compounding method.}
%     \label{fig:experiment_SLAM}
% \end{figure}

\begin{figure*}[!htbp]
    \centering
    \begin{subfigure}[t]{0.195\linewidth}
        \centering
        \includegraphics[width=\linewidth]{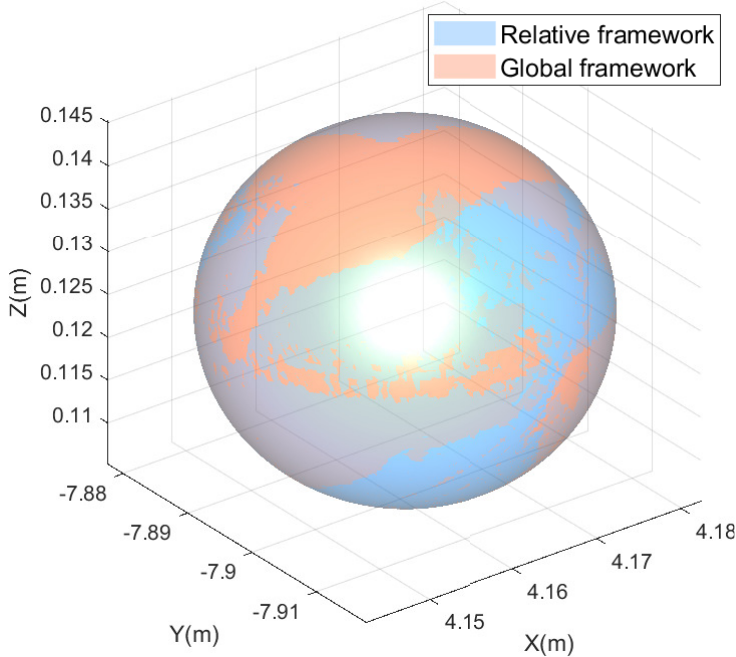}
        \caption{Pose $1$}
        \label{fig:experiment_suppl1}
    \end{subfigure}
        \begin{subfigure}[t]{0.195\linewidth}
        \centering
        \includegraphics[width=\linewidth]{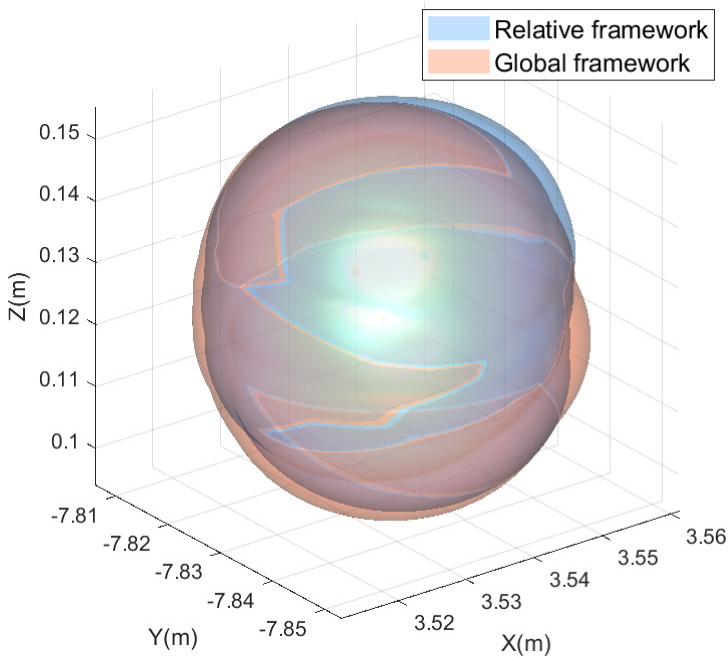}
        \caption{Pose $2$}
        \label{fig:experiment_suppl2}
    \end{subfigure}
       \begin{subfigure}[t]{0.195\linewidth}
        \centering
        \includegraphics[width=\linewidth]{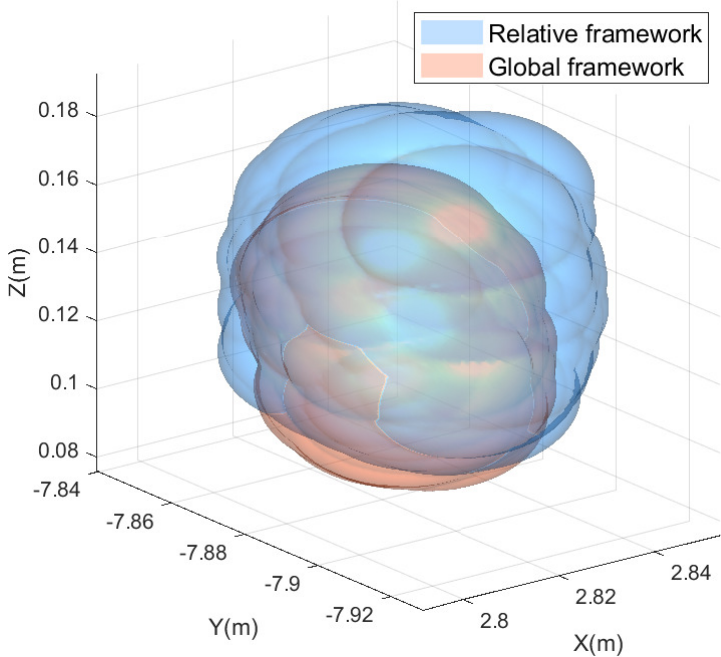}
        \caption{Pose $3$}
        \label{fig:experiment_suppl3}
    \end{subfigure}
        \begin{subfigure}[t]{0.195\linewidth}
        \centering
        \includegraphics[width=\linewidth]{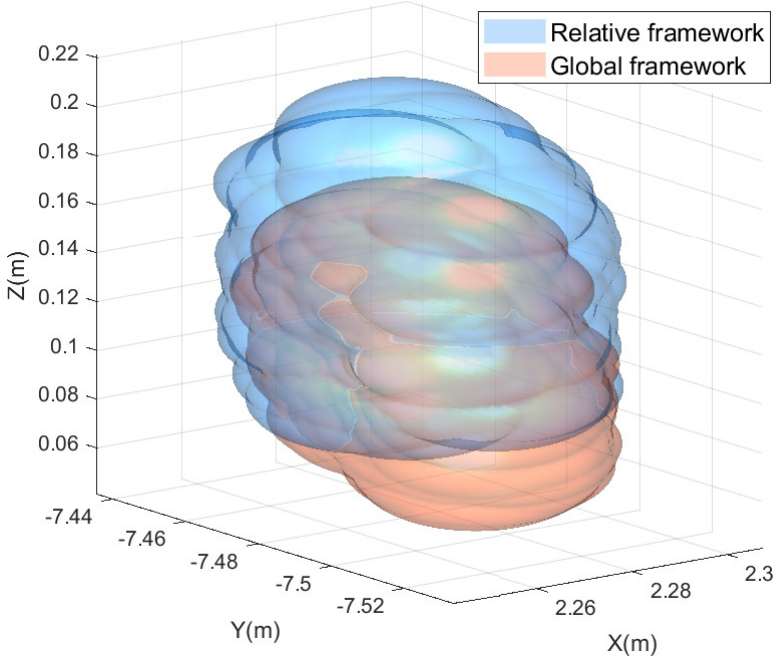}
        \caption{Pose $4$}
        \label{fig:experiment_suppl4}
    \end{subfigure}
        \begin{subfigure}[t]{0.195\linewidth}
        \centering
        \includegraphics[width=\linewidth]{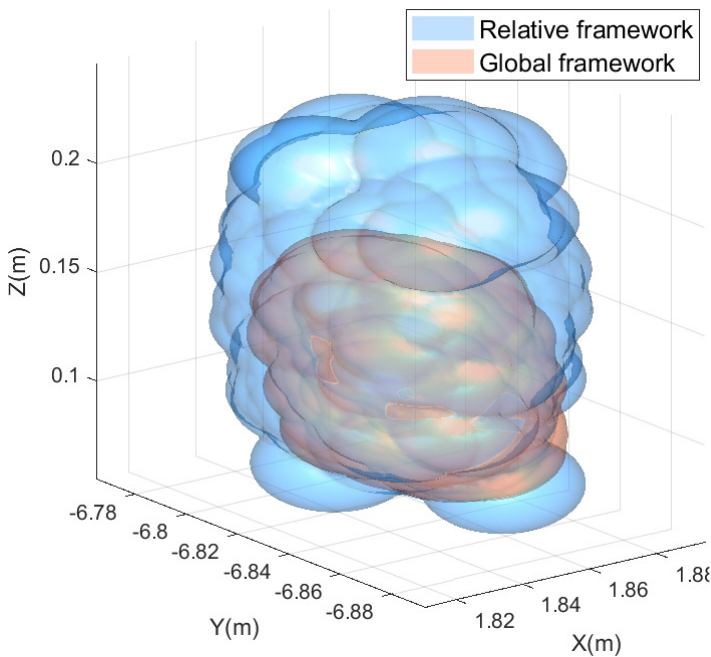}
        \caption{Pose $5$}
        \label{fig:experiment_suppl5}
    \end{subfigure}
        \begin{subfigure}[t]{0.195\linewidth}
        \centering
        \includegraphics[width=\linewidth]{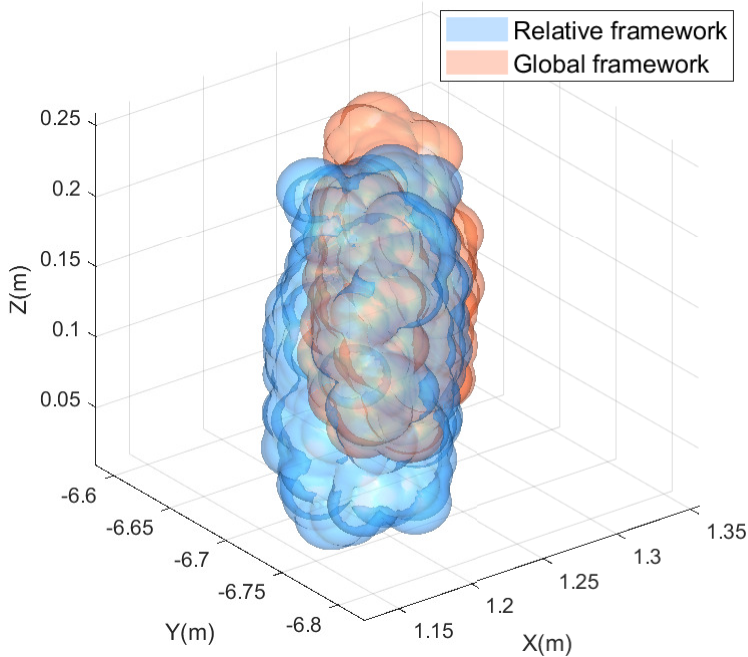}
        \caption{Pose $6$}
        \label{fig:experiment_suppl6}
    \end{subfigure}
       \begin{subfigure}[t]{0.195\linewidth}
        \centering
        \includegraphics[width=\linewidth]{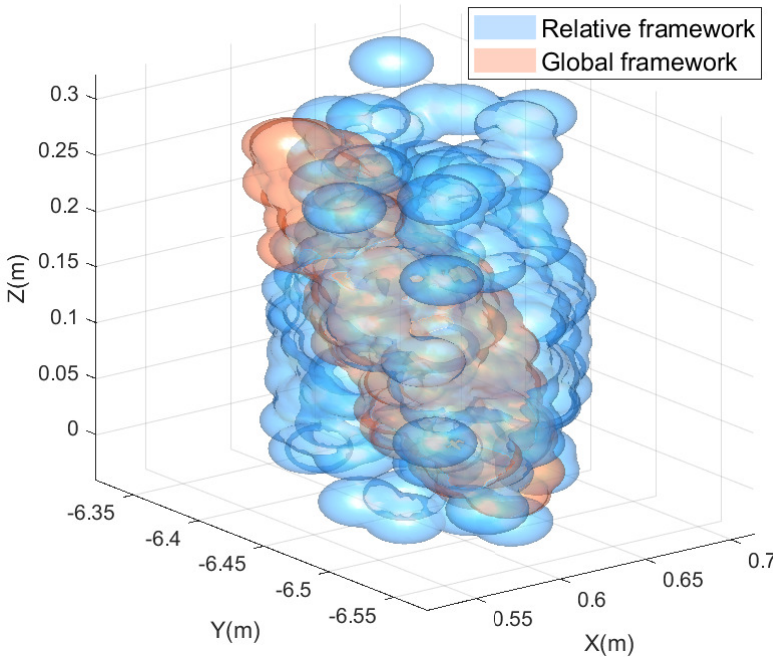}
        \caption{Pose $7$}
        \label{fig:experiment_suppl7}
    \end{subfigure}
        \begin{subfigure}[t]{0.195\linewidth}
        \centering
        \includegraphics[width=\linewidth]{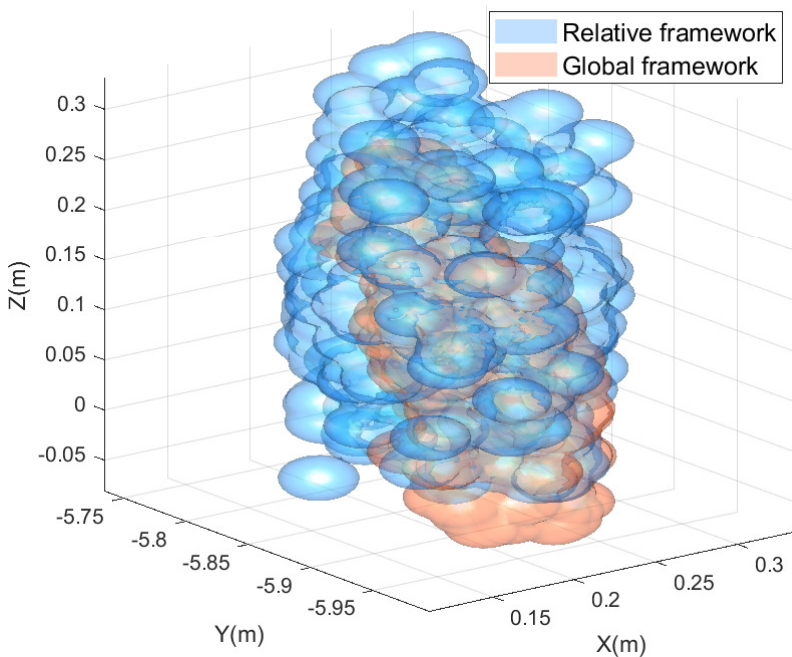}
        \caption{Pose $8$}
        \label{fig:experiment_suppl8}
    \end{subfigure}
      \begin{subfigure}[t]{0.195\linewidth}
        \centering
        \includegraphics[width=\linewidth]{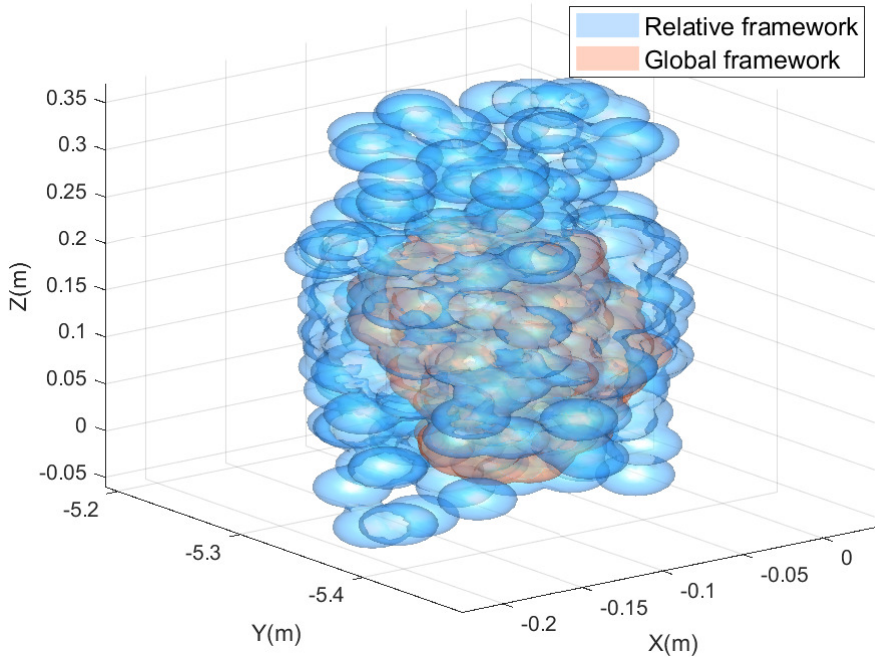}
        \caption{Pose $9$}
        \label{fig:experiment_suppl9}
    \end{subfigure}
       \begin{subfigure}[t]{0.195\linewidth}
        \centering
        \includegraphics[width=\linewidth]{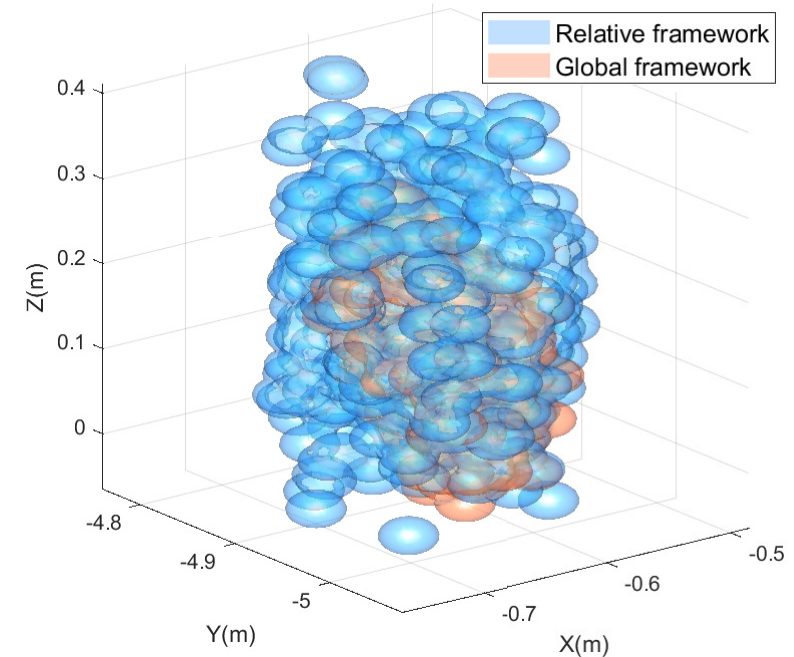}
        \caption{Pose $10$}
        \label{fig:experiment_suppl10}
    \end{subfigure}
        \begin{subfigure}[t]{0.195\linewidth}
        \centering
        \includegraphics[width=\linewidth]{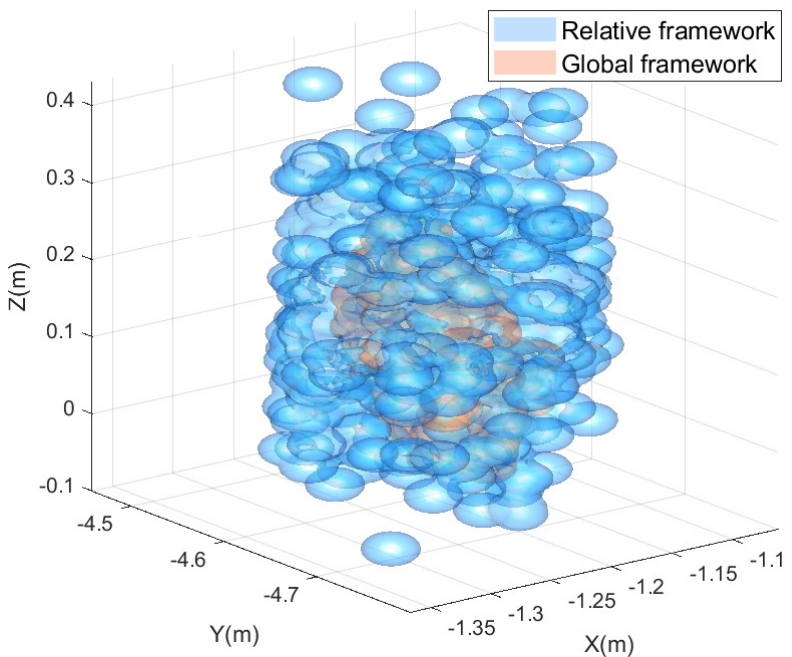}
        \caption{Pose $11$}
        \label{fig:experiment_suppl11}
    \end{subfigure}
        \begin{subfigure}[t]{0.195\linewidth}
        \centering
        \includegraphics[width=\linewidth]{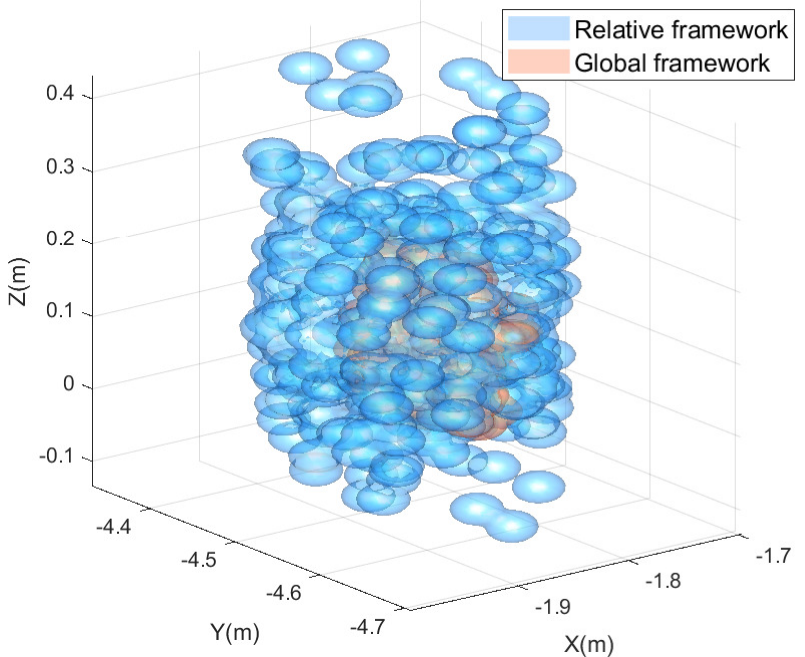}
        \caption{Pose $12$}
        \label{fig:experiment_suppl12}
    \end{subfigure}
        \begin{subfigure}[t]{0.195\linewidth}
        \centering
        \includegraphics[width=\linewidth]{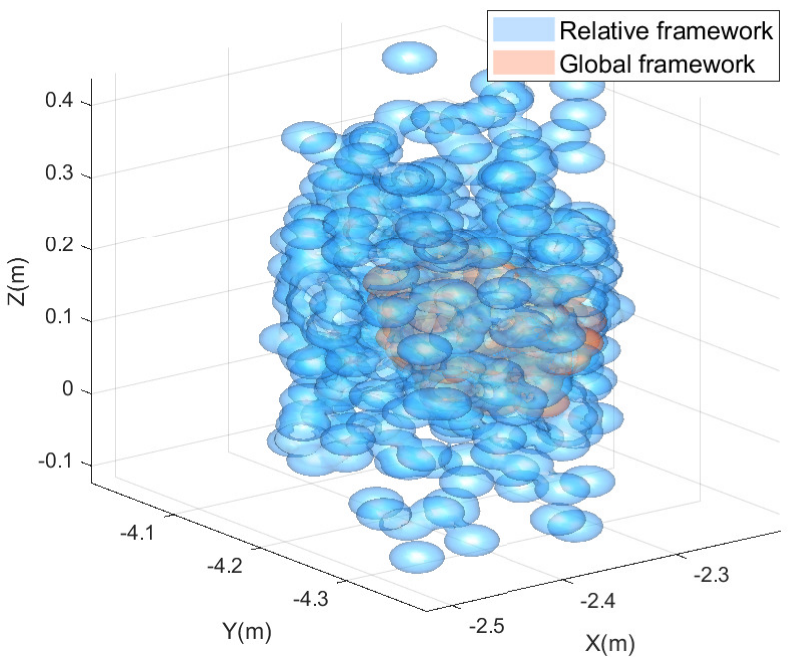}
        \caption{Pose $13$}
        \label{fig:experiment_suppl13}
    \end{subfigure}
       \begin{subfigure}[t]{0.195\linewidth}
        \centering
        \includegraphics[width=\linewidth]{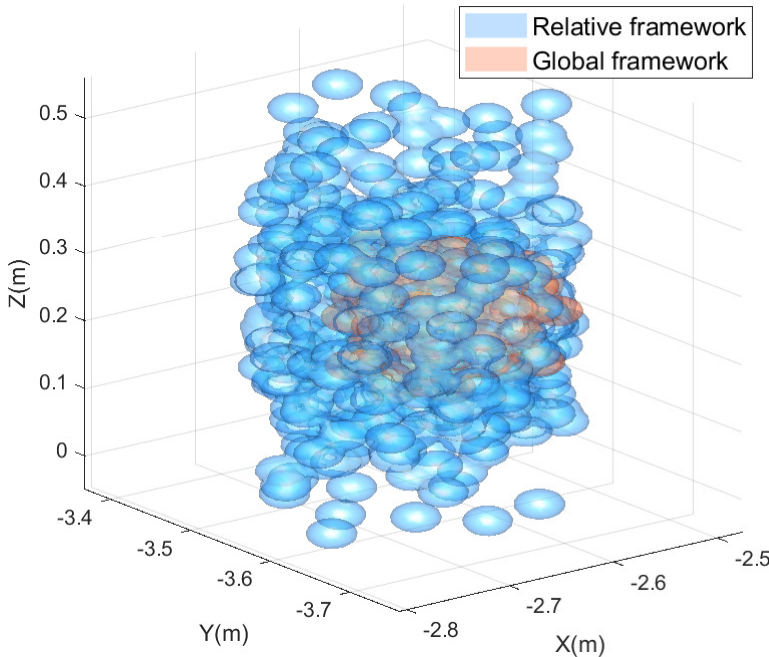}
        \caption{Pose $14$}
        \label{fig:experiment_suppl14}
    \end{subfigure}
        \begin{subfigure}[t]{0.195\linewidth}
        \centering
        \includegraphics[width=\linewidth]{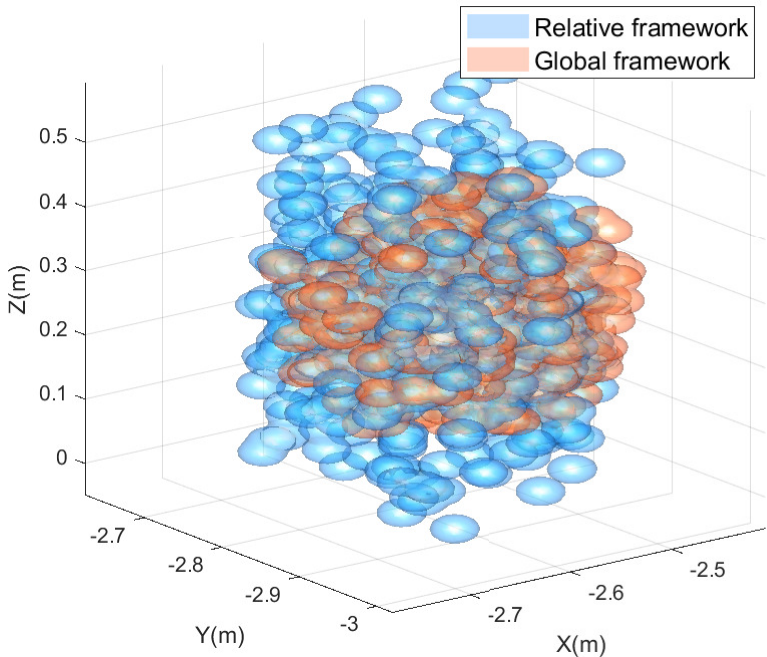}
        \caption{Pose $15$}
        \label{fig:experiment_suppl15}
    \end{subfigure}

    \caption{Sampling-based pose uncertainty visualization in SLAM experiment. A ball centered at $[0~0~1]^\top$ of radius $0.02$m is transformed by $1000$ poses sampled from the estimated set, and the union of the transformed balls are plotted. In the relative framework, direct pose compound is adopted.}
    \label{fig:experiment_suppl}
\end{figure*}

\subsection{SLAM Experiment} \label{subsec:SLAM_experiment}
We evaluate our method on the Replica dataset~\cite{straub2019replica}, which provides photo-realistic indoor environments. For any camera pose, it renders an RGB image and a dense depth map, which we back-project to generate a point cloud.
To calibrate CP, we randomly generate three trajectories, each containing hundreds of frames. When a new frame arrives, we extract ORB features~\cite{campos2021orb} from the current and previous frames and perform feature matching. To ensure tighter uncertainty bounds, we apply RANSAC-based outlier rejection. We reject outliers using the feature depths and the ground-truth relative pose provided by the dataset.
For each 3D correspondence $(p_l,q_l)$, we transform $p_l$ using the relative pose $(R,t)$ and define the nonconformity score as $s_l=\|q_l-Rp_l-t\|$. We set the target miscoverage level in CP to $\delta=0.01$ to calibrate local point uncertainties.
During testing, we use the same feature extraction and matching pipeline. Loop closure detection is not involved. To balance runtime and accuracy, we randomly retain $15$ correspondences for each pair of consecutive frames.
In our feature-matching setting, the global framework works in the same way as in the ablation study in Figure~\ref{fig:sim_toy_example}. As shown in Figure~\ref{fig:experiment_SLAM}, the estimated trajectories and maps enclose their ground truths well. An interesting phenomenon is that the global framework achieves tighter map uncertainty but more conservative trajectory (translation) uncertainty than the relative framework. 
The reason is that although the marginally projected translation uncertainty of the global framework is larger, its joint pose uncertainty in the form $\{T \in {\rm SE}(3) \mid H x(T) \leq d\}$ is smaller due to the coupling between translation and rotation. To verify this, we conduct a sampling-based pose uncertainty test. As shown in Figure~\ref{fig:experiment_suppl}, the poses sampled from the relative framework-generated uncertainty sets span a ball to a larger volume, indicating its larger pose uncertainty.
Overall, the uncertainty sets of both frameworks expand relatively rapidly in the absence of loop-closure smoothing.

\section{Conclusion and future work}
In this paper, we proposed SME-based UQ for 3D-3D SLAM pipelines. We identified three UQ primitives---forward UQ, backward UQ, and pose compound---and derived uncertainty sets with guarantees for each. By integrating them with CP, we obtain a complete SLAM UQ algorithm. Representing uncertainties as polytopes further enables tractable computation and a unified treatment of pose uncertainty.

%Future work focuses on three directions: (i) Tightness: mitigating rapid uncertainty growth by incorporating sliding-window bundle adjustment and tighter SDP formulations; (ii) Performance: accelerating execution, potentially via GPU-based parallelization; and (iii) Usability: extending the framework to additional modalities (e.g., IMUs, range sensors) and validating it in complex realistic environments.

Future work will focus on three directions. (i) Tightness: In our tests, the uncertainty sets expand relatively quickly. We plan to incorporate sliding-window bundle adjustment and develop tighter SDP formulations to slow this growth. We are also interested in providing a rigorous theoretical analysis of the uncertainty set growth rate. (ii) Real-time performance: While tractable, the current algorithms are still in an early stage; future iterations will target real-time performance by transitioning to C++ and incorporating parallel SDP computation, keyframe mechanisms, and customized solver designs. (iii) Practical utility: Our current implementation is limited to 3D-3D landmark-based SLAM. Extending the approach to more sensor modalities (e.g., IMUs and range sensors) and validating it in complex, large-scale (loop-closure) scenarios remain promising directions.  

\section*{Acknowledgement}
This work was supported in part by NSFC under Grants 62273288 and 62336005, in part by the Shenzhen Science and Technology Program under Grants JCYJ20241202124010014 and JCYJ20240813113609013, and in part by Australian Research Council (ARC) under Grants DP190103615, LP210200473, and DP230101014.

\bibliographystyle{unsrtnat}
\bibliography{sj_reference}

@inproceedings{gao2024closure,
  title={Closure: Fast quantification of pose uncertainty sets},
  author={Gao, Yihuai and Tang, Yukai and Qi, Han and Yang, Heng},
  booktitle={Proceedings of Robotics: Science and Systems},
pages={1--13},
  year={2024}
}

@book{szeliski2022computer,
  title={Computer {V}ision: {A}lgorithms and {A}pplications},
  author={Szeliski, Richard},
  year={2022},
  publisher={Springer Nature}
}

@article{lindemann2024formal,
  title={Formal verification and control with conformal prediction: Practical safety guarantees for autonomous systems},
  author={Lindemann, Lars and Zhao, Yiqi and Yu, Xinyi and Pappas, George J and Deshmukh, Jyotirmoy V},
  journal={IEEE Control Systems},
  volume={45},
  number={6},
  pages={72--122},
  year={2025},
  publisher={IEEE}
}

@book{lee2018introduction,
  title={Introduction to {R}iemannian {M}anifolds},
  author={Lee, John M},
  volume={2},
  year={2018},
  publisher={Springer}
}

@article{shafer2008tutorial,
  title={A tutorial on conformal prediction.},
  author={Shafer, Glenn and Vovk, Vladimir},
  journal={Journal of Machine Learning Research},
  volume={9},
  number={3},
pages={371--421},
  year={2008}
}

@article{angelopoulos2023conformal,
  title={Conformal prediction: A gentle introduction},
  author={Angelopoulos, Anastasios N and Bates, Stephen and others},
  journal={Foundations and Trends{\textregistered} in Machine Learning},
  volume={16},
  number={4},
  pages={494--591},
  year={2023},
  publisher={Now Publishers, Inc.}
}

@article{fontana2023conformal,
  title={Conformal prediction: a unified review of theory and new challenges},
  author={Fontana, Matteo and Zeni, Gianluca and Vantini, Simone},
  journal={Bernoulli},
  volume={29},
  number={1},
  pages={1--23},
  year={2023},
  publisher={Bernoulli Society for Mathematical Statistics and Probability}
}

@book{boyd2004convex,
  title={Convex {O}ptimization},
  author={Boyd, Stephen and Vandenberghe, Lieven},
  year={2004},
  publisher={Cambridge University Press}
}

@article{zeng2025bias,
  title={Bias-Eliminated {PnP} for Stereo Visual Odometry: Provably Consistent and Large-Scale Localization},
  author={Zeng, Guangyang and Shen, Yuan and Hong, Ziyang and Hong, Yuze and Ila, Viorela and Shi, Guodong and Wu, Junfeng},
 volume={10},
  number={11},
  pages={11840--11847},
  journal={IEEE Robotics and Automation Letters},
  year={2025}
}

@inproceedings{yang2023object,
  title={Object pose estimation with statistical guarantees: Conformal keypoint detection and geometric uncertainty propagation},
  author={Yang, Heng and Pavone, Marco},
  booktitle={Proceedings of IEEE/CVF Conference on Computer Vision and Pattern Recognition},
  pages={8947--8958},
  year={2023}
}

@inproceedings{tang2024uncertainty,
  title={Uncertainty quantification of set-membership estimation in control and perception: Revisiting the minimum enclosing ellipsoid},
  author={Tang, Yukai and Lasserre, Jean-Bernard and Yang, Heng},
  booktitle={Proceedings of Annual Learning for Dynamics \& Control Conference},
  pages={286--298},
  year={2024}
}

@article{campos2021orb,
  title={Orb-slam3: An accurate open-source library for visual, visual--inertial, and multimap {SLAM}}}

@article{zhen2018adjustable,
  title={Adjustable robust optimization via {Fourier--Motzkin} elimination},
  author={Zhen, Jianzhe and Den Hertog, Dick and Sim, Melvyn},
  journal={Operations Research},
  volume={66},
  number={4},
  pages={1086--1100},
  year={2018},
  publisher={INFORMS}
}

@incollection{avis2002canonical,
  title={On canonical representations of convex polyhedra},
  author={Avis, David and Fukuda, Komei and Picozzi, Stefano},
  booktitle={Mathematical Software},
  pages={350--360},
  year={2002},
  publisher={World Scientific}
}

@book{nesterov1994interior,
  title={Interior-point polynomial algorithms in convex programming},
  author={Nesterov, Yurii and Nemirovskii, Arkadii},
  year={1994},
  publisher={SIAM}
}

@inproceedings{zhang2014loam,
  title={{LOAM}: Lidar odometry and mapping in real-time.},
  author={Zhang, Ji and Singh, Sanjiv and others},
  booktitle={Proceedings of Robotics: Science and Systems},
  pages={1--9},
  year={2014}
}

@article{dumbgen2024toward,
  title={Toward globally optimal state estimation using automatically tightened semidefinite relaxations},
  author={D{\"u}mbgen, Frederike and Holmes, Connor and Agro, Ben and Barfoot, Timothy},
  journal={IEEE Transactions on Robotics},
  volume={40},
  pages={4338--4358},
  year={2024},
  publisher={IEEE}
}

@article{cvivsic2022soft2,
  title={Soft2: Stereo visual odometry for road vehicles based on a point-to-epipolar-line metric},
  author={Cvi{\v{s}}i{\'c}, Igor and Markovi{\'c}, Ivan and Petrovi{\'c}, Ivan},
  journal={IEEE Transactions on Robotics},
  volume={39},
  number={1},
  pages={273--288},
  year={2022},
  publisher={IEEE}
}

@article{yang2020teaser,
  title={Teaser: Fast and certifiable point cloud registration},
  author={Yang, Heng and Shi, Jingnan and Carlone, Luca},
  journal={IEEE Transactions on Robotics},
  volume={37},
  number={2},
  pages={314--333},
  year={2020},
  publisher={IEEE}
}

@inproceedings{han2025building,
  title={Building rome with convex optimization},
  author={Han, Haoyu and Yang, Heng},
  booktitle={Proceedings of Robotics: Science and Systems},
  pages={1--19},
  year={2025}
}

@inproceedings{engel2014lsd,
  title={{LSD-SLAM}: Large-scale direct monocular {SLAM}},
  author={Engel, Jakob and Sch{\"o}ps, Thomas and Cremers, Daniel},
  booktitle={Proceedings of European Conference on Computer Vision},
  pages={834--849},
  year={2014}
}

@article{rosen2019se,
  title={{SE-Sync}: A certifiably correct algorithm for synchronization over the special Euclidean group},
  author={Rosen, David M and Carlone, Luca and Bandeira, Afonso S and Leonard, John J},
  journal={The International Journal of Robotics Research},
  volume={38},
  number={2-3},
  pages={95--125},
  year={2019},
  publisher={Sage Publications Sage UK: London, England}
}

@article{mangelson2020characterizing,
  title={Characterizing the uncertainty of jointly distributed poses in the {L}ie algebra},
  author={Mangelson, Joshua G and Ghaffari, Maani and Vasudevan, Ram and Eustice, Ryan M},
  journal={IEEE Transactions on Robotics},
  volume={36},
  number={5},
  pages={1371--1388},
  year={2020},
  publisher={IEEE}
}

@article{agrawal2024gatekeeper,
  title={gatekeeper: Online safety verification and control for nonlinear systems in dynamic environments},
  author={Agrawal, Devansh Ramgopal and Chen, Ruichang and Panagou, Dimitra},
  journal={IEEE Transactions on Robotics},
volume={40},
  pages={4358--4375},
  year={2024},
  publisher={IEEE}
}

@article{de2024set,
  title={Set-based state estimation for discrete-time constrained nonlinear systems: An approach based on constrained zonotopes and {DC} programming},
  author={de Paula, Alesi A and Raimondo, Davide M and Raffo, Guilherme V and Teixeira, Bruno OS},
  journal={Automatica},
  volume={159},
  pages={111401},
  year={2024},
  publisher={Elsevier}
}

@article{wang2018zonotopic,
  title={Zonotopic set-membership state estimation for discrete-time descriptor {LPV} systems},
  author={Wang, Ye and Wang, Zhenhua and Puig, Vicen{\c{c}} and Cembrano, Gabriela},
  journal={IEEE Transactions on Automatic Control},
  volume={64},
  number={5},
  pages={2092--2099},
  year={2018},
  publisher={IEEE}
}

@article{mustafa2018guaranteed,
  title={Guaranteed {SLAM}---An interval approach},
  author={Mustafa, Mohamed and Stancu, Alexandru and Delanoue, Nicolas and Codres, Eduard},
  journal={Robotics and Autonomous Systems},
  volume={100},
  pages={160--170},
  year={2018},
  publisher={Elsevier}
}

@article{voges2021interval,
  title={Interval-based visual-lidar sensor fusion},
  author={Voges, Raphael and Wagner, Bernardo},
  journal={IEEE Robotics and Automation Letters},
  volume={6},
  number={2},
  pages={1304--1311},
  year={2021},
  publisher={IEEE}
}

@article{shaikewitz2025uncertainty,
  title={Uncertainty Quantification for Visual Object Pose Estimation},
  author={Shaikewitz, Lorenzo and Georgiou, Charis and Carlone, Luca},
  journal={arXiv preprint arXiv:2511.21666},
  year={2025}
}

@article{wang2018set,
  title={Set-membership approach and Kalman observer based on zonotopes for discrete-time descriptor systems},
  author={Wang, Ye and Puig, Vicen{\c{c}} and Cembrano, Gabriela},
  journal={Automatica},
  volume={93},
  pages={435--443},
  year={2018},
  publisher={Elsevier}
}

@article{belforte1990parameter,
  title={Parameter estimation algorithms for a set-membership description of uncertainty},
  author={Belforte, Gustavo and Bona, Basilio and Cerone, Vito},
  journal={Automatica},
  volume={26},
  number={5},
  pages={887--898},
  year={1990},
  publisher={Elsevier}
}

@article{jaulin1993set,
  title={Set inversion via interval analysis for nonlinear bounded-error estimation},
  author={Jaulin, Luc and Walter, Eric},
  journal={Automatica},
  volume={29},
  number={4},
  pages={1053--1064},
  year={1993},
  publisher={Elsevier}
}

@inproceedings{ehambram2022interval,
  title={Interval-based visual-inertial {LiDAR} {SLAM} with anchoring poses},
  author={Ehambram, Aaronkumar and Voges, Raphael and Brenner, Claus and Wagner, Bernardo},
  booktitle={Proceedings of International Conference on Robotics and Automation},
  pages={7589--7596},
  year={2022}
}

@article{straub2019replica,
  title={The replica dataset: A digital replica of indoor spaces},
  author={Straub, Julian and Whelan, Thomas and Ma, Lingni and Chen, Yufan and Wijmans, Erik and Green, Simon and Engel, Jakob J and Mur-Artal, Raul and Ren, Carl and Verma, Shobhit and others},
  journal={arXiv preprint arXiv:1906.05797},
  year={2019}
}

\end{document}